\DeclarePairedDelimiter{\abs}{\lvert}{\rvert}
\theoremstyle{thmstyleone}%
\newtheorem{theorem}{Theorem}
\newtheorem{proposition}[theorem]{Proposition}%
\newtheorem{lemma}[theorem]{Lemma}%
\newtheorem{corollary}[theorem]{Corollary}%
\theoremstyle{thmstyletwo}%
\newtheorem{example}{Example}%
\newtheorem{remark}{Remark}%
\theoremstyle{thmstylethree}%
\newtheorem{definition}{Definition}%
\def\Rbb{\mathbb{R}}
\def\1{\bm{1}}
\def\0{\bm{0}}
\def\u{{\bm u}}
\def\f{{\bm f}}
\def\g{{\bm g}}
\def\x{{\bm x}}
\def\z{{\bm z}}
\def\<{\langle}
\def\>{\rangle}
\def\rmd{\mathrm{d}}
\def\dmu{\mathrm{d}\mu}
\def\bmu{{\bm\mu}}
\def\sign{{\rm sign}}
\begin{document}

\title[Robust Estimation for Kernel Exponential Families]{
Robust Estimation for Kernel Exponential Families with Smoothed Total Variation Distances
}


\author*[1,2]{\fnm{Takafumi} \sur{Kanamori}}

\author[1]{\fnm{Kodai} \sur{Yokoyama}}

\author[1]{\fnm{Takayuki} \sur{Kawashima}}

\affil*[1]{\orgname{Science Tokyo}}

\affil[2]{\orgname{RIKEN}}



\abstract{
In statistical inference, we commonly assume that samples are independent and identically distributed from a probability distribution included in a pre-specified statistical model.
However, such an assumption is often violated in practice. Even an unexpected extreme sample called an {\it outlier} can significantly impact classical estimators.
Robust statistics studies how to construct reliable statistical methods that efficiently work even when the ideal assumption is violated.
Recently, some works revealed that robust estimators such as Tukey's median are well approximated by the generative adversarial net (GAN), a popular learning method for complex generative models using neural networks.
GAN is regarded as a learning method using integral probability metrics (IPM), which is a discrepancy measure for probability distributions. 
In most theoretical analyses of Tukey's median and its GAN-based approximation, however, 
the Gaussian or elliptical distribution is assumed as the statistical model.
In this paper, we explore the application of GAN-like estimators to a general class of statistical models.
As the statistical model, we consider the kernel exponential family that includes both finite and infinite-dimensional models.
To construct a robust estimator, we propose the smoothed total variation (STV) distance as a class of IPMs.
Then, we theoretically investigate the robustness properties of the STV-based estimators. 
Our analysis reveals that the STV-based estimator is robust against the distribution contamination 
for the kernel exponential family. 
Furthermore, we analyze the prediction accuracy of a Monte Carlo approximation method, which circumvents the computational difficulty of the normalization constant. 
}

\keywords{
Robust estimation, Integral probability metrics, Kernel exponential family
}



\maketitle

\section{Introduction}
\label{sec:Introduction}

In statistical inference, we often assume ideal assumptions for data distribution. 
For instance, samples are independent and identically distributed from a probability distribution that is included in a pre-specified statistical model. 
However, such an assumption is often violated in practice. 
The data set may contain unexpected samples by, say, the failure of observation equipment. 
Even a single extreme sample called an {\it outlier} can have a large impact to classical estimators. 
Robust statistics studies how to construct reliable statistical methods that efficiently work 
even when ideal assumptions on the observation are violated. 

\subsection{Background}
Robust statistics has a long history. The theoretical foundation of robust statistics was established by many works
including~\cite{huber64:_robus,tukey75:_mathem,hampel11:_robus_statis,donoho83}. 
A typical problem setting of robust statistics is that the data is observed from the contaminated model 
$(1-\varepsilon)P+\varepsilon Q$, where $Q$ is a contamination distribution and $\varepsilon$ is a contamination ratio. 
The purpose is to estimate the target distribution $P$ or its statistic, such as 
the location parameter or variance from the data. 
Samples from $Q$ are regarded as outliers. 
Sensitivity measures such as the influence function or breakdown point are 
used to evaluate the robustness of the estimator against outliers. 
The influence function is defined as the G\^{a}teaux derivative of the estimation functional 
to the direction of the point mass at an outlier. 
The most B-robust estimator is the estimator that minimizes the 
supremum of the modulus of the influence function~\citep{hampel11:_robus_statis}. 
The median is the most B-robust estimator for the mean value of the one-dimensional Gaussian distribution. 
The breakdown point is defined by the maximum contamination ratio such that the estimator still 
gives meaningful information of the target distribution $P$. 
It is well-known that the median asymptotically attains the maximum breakdown point under some assumptions. 
For the above reason, the median estimator is regarded as the most robust estimator of the mean value 
especially when the data is distributed from a contaminated Gaussian distribution. 

Numerous works have appeared that study multi-dimensional extension of the median. 
The componentwise median and geometric median are straightforward extensions 
of the median to multivariate data. 
These estimators are, however, suboptimal as the estimator of the location parameter under the contaminated 
model~\citep{chen18:_robus,diakonikolas16:_robus_estim_high_dimen_comput_intrac,lai16:_agnos_estim_mean_covar}. 
The authors of \cite{tukey75:_mathem} proposed the concept of data depth, which measures how deeply embedded a point is in the scattered data. 
The deepest point for the observed data set is called Tukey's median. Numerous works have proved that Tukey's median has desirable properties as an estimator of the multivariate location parameter for the target distribution; 
high breakdown point~\citep{donoho92:_break_proper_locat_estim_based},
redescending property of the influence function~\citep{chen02:_tukey},
the min-max optimal convergence rate~\citep{chen18:_robus}, 
and possession of general notions of statistical depth~\citep{zuo00:_gener}. 

As another approach of robust statistics, minimum distance methods have been explored by many authors 
including~\cite{yatracos85:_rates_kolmog,basu10:_statis_infer,a.98:_robus_effic_estim_minim_densit_power_diver,jones01:_compar,fujisawa08:_robus,kanamori15:_robus}. 
The robustness of statistical inference is closely related to the topological structure 
induced by the loss function over the set of probability distributions~\citep{csiszar67}. 
For instance, the negative log-likelihood loss used in the maximum likelihood estimator (MLE) corresponds 
to the minimization of the Kullback-Leibler (KL) divergence from the empirical distribution 
to the statistical model. 
A deep insight by \cite{csiszar67} revealed that the KL-divergence does not yield a relevant topological
structure on the set of probability distributions. 
That is a reason that the MLE does not possess robustness property to contamination. 
To construct a robust estimator, the KL divergence is replaced with the other divergences 
that induce a relevant topology, such as the density-power divergence or pseudo-spherical 
divergence~\citep{a.98:_robus_effic_estim_minim_densit_power_diver,jones01:_compar,fujisawa08:_robus}, 
These divergences are included in the class of Bregman divergence~\citep{bregman67:_relax_method_of_findin_commonc}.
The robustness property of Bregman divergence has been 
investigated by many works, including~\cite{murata04:_infor_geomet_u_boost_bregm_diver,kanamoriar:_affin_invar_diver_compos_scores_applicy}. 

Besides Bregman divergence, f-divergence~\citep{JRSS-B:Ali+Silvey:1966,csiszar67} and integral probability metrics (IPMs) \citep{muller97:_integ} have been widely applied to statistics and machine learning. 
In statistical learning with deep neural networks, 
generative adversarial networks (GANs) have been proposed to estimate complex data distribution, 
such as the image-generating process~\cite{goodfellow14:_gener_adver_nets}. 
GAN is regarded as a learning algorithm that minimizes the Jensen-Shannon (JS) divergence. 
The original GAN is extended to f-GAN, which is the learning method using f-divergence as a loss function~\cite{nowozin16:_gan}. 
This extension has enabled us to use other f-divergence for the estimation of generative models. 
Note that the variational formulation of the f-divergence accepts the direct substitution of the empirical distribution. 
IPMs such as the Wasserstein distance or maximum mean discrepancy (MMD), too, are used to learn generative models~\citep{gulrajani17:_improv_train_wasser_gans,li17:_mmd_gan,cherief-abdellatif21:_finit_mmd}. 
Learning methods using f-divergence and IPMs are formulated as the min-max optimization problem. 
The inner maximization yields the variational expression of the divergence. 

Recently, \cite{gao19:_robus_estim_via_gener_adver_networ,DBLP:journals/jmlr/GaoYZ20} found a relationship 
between f-GANs and robust estimators based on the data depth. 
Roughly, their works showed that depth-based estimators for the location parameter and the covariance matrix are approximately represented by the f-GAN using the total variation (TV) distance, i.e., TV-GAN. 
Furthermore, \cite{gao19:_robus_estim_via_gener_adver_networ} proved that 
the approximate Tukey's median by the TV-GAN attains the min-max optimal convergence rate as well as the original Tukey's median~\citep{chen18:_robus}. 
The approximation of the data depth by GAN-based estimators is advantageous for computation. 
\cite{diakonikolas16:_robus_estim_high_dimen_comput_intrac,lai16:_agnos_estim_mean_covar}
proposed polynomial-time computable robust estimators of the Gaussian mean and covariance. 
However, the computation algorithm often requires knowledge such as the contamination ratio, which is usually not usable in practice. 
On the other hand, GAN-based robust estimators are computationally efficient, 
though rigorous computation cost has not been theoretically guaranteed. 
Inspired by \cite{gao19:_robus_estim_via_gener_adver_networ,DBLP:journals/jmlr/GaoYZ20}, 
some authors have investigated depth-based estimators from the standpoint of 
GANs~\citep{liu21:_robus_w_gan_based_estim,%
wu20:_minim_optim_gans_robus_mean_estim,%
zhu22:_robus_estim_nonpar_famil_gener_adver_networ,%
banghua22:_gener,zhu2019deconstructing}.

In this paper, we explore the application of GAN-based robust estimators to a general class of statistical models. 
In most theoretical analyses of Tukey's median and its GAN-based approximation, 
usually, the Gaussian distribution or elliptical distribution is assumed. 
We propose a class of IPMs called the smoothed total variation distance to construct a robust estimator. 
As the statistical model, 
we consider the {\it kernel exponential family}~\citep{fukumizu09:_algeb_geomet_statis,JMLR:v18:16-011}. 
The kernel method is commonly used 
in statistics~\citep{book:Schoelkopf+Smola:2002,mohri18:_found_machin_learn,Shalev-Shwartz:2014:UML:2621980}. 
A kernel function corresponds to a reproducing kernel Hilbert space (RKHS). 
The kernel exponential family with a finite-dimensional RKHS produces the standard exponential family, and 
an infinite-dimensional RKHS provides an infinite-dimensional exponential family. 
We propose a statistical learning method using smoothed TV distance for the kernel exponential family 
and analyze its statistical properties, such as the convergence rate under contaminated models. 
Often, the computation of the normalization constant is infeasible~\citep{gutmann12:_noise_contr_estim_unnor_statis,gutmann11:_bregm,%
GeyerC1994Otco,uehara20:_unified_statis_effic_estim_framew_unnor_model}. 
In this paper, we use a Monte Carlo approximation
to overcome the computational difficulty of the normalization constant. Then, we analyze the robustness property of approximate estimators.

\subsection{Related Works}
\label{sec:Related_Works}
This section discusses related works to our paper, highlighting GAN-based robust estimators, kernel
exponential family and unnormalized models.

\subsubsection*{Relation between Tukey's Median and GAN}
Recent studies revealed some statistical properties of depth-based estimators, including Tukey's 
median~\citep{chen18:_robus,gao19:_robus_estim_via_gener_adver_networ,%
DBLP:journals/jmlr/GaoYZ20,liu21:_robus_w_gan_based_estim,%
wu20:_minim_optim_gans_robus_mean_estim,%
zhu2019deconstructing,zhu22:_robus_estim_nonpar_famil_gener_adver_networ,banghua22:_gener}. 
A connection between the depth-based robust estimator and GAN 
was first studied by~\cite{gao19:_robus_estim_via_gener_adver_networ}. 
Originally, GAN was proposed as a learning algorithm for generative models. 
There are mainly two approaches to constructing learning algorithms for generative models. 

One is to use f-divergences called f-GAN, including vanilla  GAN~\cite{nowozin16:_gan,goodfellow14:_gener_adver_nets}.
In the learning using f-GAN, the estimator is obtained by minimizing an approximate f-divergence between the
empirical distribution and the generative model. 
In~\cite{gao19:_robus_estim_via_gener_adver_networ,DBLP:journals/jmlr/GaoYZ20,wu20:_minim_optim_gans_robus_mean_estim}, 
the robustness of f-GAN implemented by DNNs has been revealed mainly for the location parameter and variance-covariance matrix of the normal distribution.

The other approach is to use IPM-based methods such as \cite{gulrajani17:_improv_train_wasser_gans,arjovsky17:_wasser_gener_adver_networ}. 
In IPM-based learning, the estimator is computed by minimizing the gap between generalized moments of the empirical distribution and those of the estimated distribution. 
Hence, the IPM-based learning is similar to the generalized method of moments~\cite{hall2005generalized}. 
Gao,~et~al.~\cite{gao19:_robus_estim_via_gener_adver_networ} proved that Tukey's median and depth-based robust estimators are expressed as the minimization of a modified IPM between the empirical distribution and the model. 
Some works, including 
\cite{gao19:_robus_estim_via_gener_adver_networ,zhu22:_robus_estim_nonpar_famil_gener_adver_networ,liu21:_robus_w_gan_based_estim} show that also the IPM-based estimators provide robust estimators for the parameters of the normal distribution. 
\cite{liu21:_robus_w_gan_based_estim} studied the estimation accuracy of the robust estimator defined by the Wasserstein distance or the neural net distance~\citep{arora17:_gener_equil_gener_adver_nets_gans}, which is an example of the IPM.

The connection between the robust estimator and GAN allows us to apply recent developments in deep neural networks (DNNs) to robust statistics. 
However, most theoretical works on GAN-based robust statistics have focused on estimating the mean vector or variance-covariance matrix in the multivariate normal distribution, as shown above. 

In our paper, we propose a smoothed variant of total variation (TV) distance called the smoothed TV (STV) distance and investigate the convergence property for general statistical models. The STV distance class is included in IPMs. 
The most related work to our paper is \cite{zhu22:_robus_estim_nonpar_famil_gener_adver_networ}, in which the smoothed Kolmogorov-Smirnov (KS) distance over the set of probability distributions is proposed. 
In their paper, the smoothed KS distance is employed to estimate the mean value or the second-order moment of population distribution under contamination. 
On the other hand, we use the STV distance for the kernel exponential family, including infinite-dimensional models, to estimate the probability density. 
Furthermore, the estimation accuracy of the estimator using STV distance is evaluated by the TV distance. 
In many theoretical analyses, the loss function used to compute the estimator is again used to evaluate estimation accuracy. 
For example, the estimator based on the f-GAN is assessed by the same f-divergence. 
A comparison of estimation accuracy is not fairly performed in such an assessment. 
We use the TV distance to assess estimation accuracy for any STV-based learning in a unified manner. 
For that purpose, the difference between the TV distance and the STV distance is analyzed to evaluate the estimation bias induced by the STV distance.

\subsubsection*{Statistical Inference with Kernel Exponential Family}
The exponential family plays a central role in mathematical statistics~\citep{LehmCase98,AmariNagaoka00}. 
Indeed, the exponential family has rich mathematical properties from various viewpoints; 
minimum sufficiency, statistical efficiency, 
attaining maximum entropy under the moment constraint of sufficient statics, 
geometric flatness in the sense of information geometry, and so on. 

The kernel exponential family is an infinite dimensional extension of 
the exponential family~\citep{fukumizu09:_algeb_geomet_statis}. 
Due to the computational tractability, the kernel exponential family is used as a statistical model 
for non-parametric density estimation~\citep{JMLR:v18:16-011,dai2019kernel,dai2020exponential}. 

Some authors have studied robustness for infinite-dimensional 
non-parametric estimators. In \cite{chen16:_huber}, the authors study an estimator based on a robust test, which is computationally infeasible. The TV distance is used as the discrepancy measure. 
In \cite{uppal20:_robus_densit_estim_besov_ipm_losses}, the estimation accuracy of the wavelet thresholding estimator is evaluated using IPM loss defined by the Besov space. 

On the other hand, we study the robustness property of the STV-based estimator with kernel exponential family. The estimation accuracy is 
 evaluated by the TV distance, STV distance, and the norm in the parameter space. 
For the finite-dimensional multivariate normal distribution, 
we derive the minimax convergence rate of the robust estimator for the covariance matrix in terms of the Frobenius norm, 
while existing works mainly focus on the convergence rate 
under the operator norm.

\subsubsection*{Unnormalized Models}
For complex probability density models, including kernel exponential family, 
the computation of the normalization constant is often infeasible. 
In such a case, the direct application of the MLE is not possible. 
There are numerous works to mitigate the computational difficulty in statistical
inference~\citep{BesagJulian1975SAoN,HyvarinenAapo2007Seos,gutmann12:_noise_contr_estim_unnor_statis,gutmann11:_bregm,parry12:_proper_local_scorin_rules,dawid12:_proper_local_scorin_rules_discr_sampl_spaces,VarinCristiano2011AOOC,LindsayBruceG.2011IASI,GeyerC1994Otco,uehara20:_unified_statis_effic_estim_framew_unnor_model}. 
The kernel exponential family has the same obstacle. 
To avoid the computation of the normalization constant, 
\cite{JMLR:v18:16-011} investigated the estimation method using Fisher divergence. 
Another approach is to use the dual expression of the MLE for 
the kernel exponential family~\citep{JMLR:v18:16-011,dai2019kernel,dai2020exponential}. 
The IPM-based robust estimator considered in this paper, too, has a computational issue. 

Here, we employ the Monte Carlo method. 
For the infinite-dimensional kernel exponential family, the representer theorem does not work to reduce
the infinite-dimensional optimization problem to a finite one. 
However, the finite sampling by the Monte Carlo method enables us to use the standard representer theorem 
to compute the estimator. 
We analyze the relation between the number of Monte Carlo samples and the convergence rate under contaminated distribution.

\subsubsection*{Organization}
The paper is organized as follows. 
In Section~\ref{sec:Smoothed-TV-dist}, we introduce some notations used in this paper and discrepancy measures that are used in statistical inference. 
Let us introduce the relationship between the data depth and GAN-based robust estimator. 
In Section~\ref{sec:STV}, we show the definition of the smoothed TV distance. 
Some theoretical properties, such as the difference from the TV distance, are investigated. 
In Section~\ref{sec:PredictionErrorBounds_Finite-dim_RKHS}, 
the smoothed TV distance with regularization is applied for the estimation of the kernel exponential family. We derive the convergence rate of the estimator in terms of the TV distance. 
In Section~\ref{sec:Accuracy_Par_est}, 
we investigate the convergence rate of the estimator in the parameter space. 
For the estimation of the covariance matrix for the multivariate normal distribution, 
we prove that our method attains the minimax optimal rate in terms of the Frobenius norm under contamination, while most past works derived the convergence rate in terms of the operator norm. 
In Section~\ref{Approximation_and_Algorithm}, 
we investigate the convergence rate of the learning algorithm using Monte Carlo approximation for the normalization constant. 
Section~\ref{Conclusion} is devoted to the conclusion and future works. Detailed proofs of theorems are presented in Appendix.

\section{Discrepancy Measures for Statistical Inference}
\label{sec:Smoothed-TV-dist}

In this section, let us define the notations used throughout this paper. 
Then, we introduce some discrepancy measures for statistical inference. 

\subsection{Notations and Definitions}
We use the following notations throughout the paper. 
Let $\mathcal{P}$ be the set of all probability measures on a Borel space $(\mathcal{X},\mathcal{B})$, 
where $\mathcal{B}$ is a Borel $\sigma$-algebra. 
For the Borel space, $L_0$ denotes the set of all measurable functions and 
$L_1(\subset L_0)$ denotes the set of all integrable functions. 
Here, functions in $L_0$ are allowed to take $\pm\infty$. 
For a function set $\mathcal{U}\subset{L_0}$ and $c\in\mathbb{R}$, 
let $c\,\mathcal{U}$ be $\{c u : u\in\mathcal{U}\}$ and $-\mathcal{U}$ be $(-1)\mathcal{U}$. 
The expectation of $f\in L_0$ for the probability distribution $P\in\mathcal{P}$ is denoted by 
$\mathbb{E}_P[f]=\int_{\mathcal{X}}f\mathrm{d}P$. 
We also write $\mathbb{E}_{X\sim{P}}[f(X)]$ or $\mathbb{E}_{{P}}[f(X)]$ to specify the random variable $X$. 
The range of the integration, $\mathcal{X}$, is dropped if there is no confusion. 
For a Borel measure $\mu$, $P\ll \mu$ denotes that $P$ is dominated by $\mu$, i.e., 
$\mu(A)=0$ for $A\in\mathcal{B}$ leads to $P(A)=0$. 
When $P\ll \mu$ holds, $P$ has the probability density function $p$ such that 
the expectation is computed by $\int_{\mathcal{X}}f(x)p(x)\mathrm{d}\mu(x)$ or $\int_{\mathcal{X}}f p\mathrm{d}\mu$. The function $p$ is denoted by $\frac{\rmd{P}}{\rmd{\mu}}$. 
The simple function on $A\in\mathcal{B}$ is denoted by $\1_A(x)$ that takes $1$ for $x\in{A}$ and $0$ for $x\not\in{A}$. 
In particular, the step function on the set of non-negative real numbers, $\mathbb{R}_+$, is denoted by $\1$ for simplicity, 
i.e., $\1(x)=1$ for $x\geq 0$ and $0$ otherwise. 
The indicator function $\mathbb{I}[A]$ takes $1$ (resp. $0$) when the proposition $A$ is true (resp. false). 
The function $\mathrm{id}:\mathbb{R}\rightarrow\mathbb{R}$ stands for the identity function, 
i.e., $\mathrm{id}(x)=x$ for $x\in\mathbb{R}$. The Euclidean norm of $\x\in\Rbb^d$ is denoted by $\|\x\|_2=\sqrt{\x^T\x}$. 

Let us define $\mathcal{H}$ as a reproducing kernel Hilbert space (RKHS) on $\mathcal{X}$. 
For $f,g\in\mathcal{H}$, the inner product on $\mathcal{H}$ is expressed by $\<f,g\>$ 
and the norm of $f$ is defined by  $\|f\|=\sqrt{\<f,f\>}$. 
For a positive number $U$, let $\mathcal{H}_U$ 
be the subset of the RKHS $\mathcal{H}$ defined by $\{f\in\mathcal{H}:\|f\|\leq U\}$. 
See \cite{berlinet2011reproducing} for details of RKHS. 

In statistical inference, discrepancy measures for probability distributions play an important role. 
One of the most popular discrepancy measures in statistics is the Kullback-Leibler (KL) divergence
\begin{align*}
 \mathrm{KL}(P,Q):=\mathbb{E}_P\!\left[\log\frac{\rmd{P}}{\rmd{Q}}\right]. 
\end{align*}
When $P\ll \mu$ and $Q\ll \mu$ holds, $P$ (resp $Q$) has the probability density $p$ (resp. $q$) for $\mu$. 
Then, the KL divergence is expressed by 
\begin{align*}
 \mathrm{KL}(P,Q)=\int p(x) \log\frac{p(x)}{q(x)} \dmu(x). 
\end{align*}
Note that the KL divergence does not satisfy the definition of the distance. Indeed, the symmetric property does not hold. 
Another important discrepancy measure is the total variation (TV) distance for $P,Q\in\mathcal{P}$, 
\begin{align}
 \label{eqn:TV-dist}
 \mathrm{TV}(P,Q)
 :=
 \sup_{A\in\mathcal{B}} \abs{\mathbb{E}_{P}[\1_A]-\mathbb{E}_{Q}[\1_A]}
 =
 \sup_{\substack{f\in L_0\\ 0\leq f\leq 1}} \mathbb{E}_{P}[f]-\mathbb{E}_{Q}[f]. 
\end{align}
When $P$ and $Q$ respectively have the probability density function $p$ and $q$ for the Borel measure $\mu$, we have 
\begin{align*}
 \mathrm{TV}(P,Q)=\frac{1}{2}\int\abs{p-q}\rmd{\mu}. 
\end{align*}
As an expansion of the total variation distance, 
the integral probability measure (IPM) is defined by 
\begin{align*}
 \mathrm{IPM}(P,Q;\mathcal{F}):=\sup_{f\in\mathcal{F}}
 \abs{\mathbb{E}_P[f]-\mathbb{E}_Q[f]},
 \end{align*}
where $\mathcal{F}\subset L_0$ is a uniformly bounded function set, i.e., 
$\sup_{f\in\mathcal{F}, x\in\mathcal{X}}\abs{f(x)}<\infty$~\citep{muller97:_integ}. 
If $\mathcal{F}$ satisfies $\mathcal{F}=-\mathcal{F}$, 
clearly one can drop the modulus sign in the definition of the IPM. 
The same property holds 
when $\mathcal{F}=\{c-f\,:\,f\in \mathcal{F}\}$ holds for a fixed real number $c$; see \eqref{eqn:TV-dist}. 
IPM includes not only the total variation distance, 
but also Wasserstein distance, Dudley distance, maximum mean discrepancy (MMD)~\citep{Bioinformatics:Borgwardt+etal:2006}, etc. 
The MMD is used for non-parametric two-sample test~\citep{JMLR:v13:gretton12a} 
and the Wasserstein distance is used for the transfer learning or the estimation with generative 
models~\citep{arjovsky17:_wasser_gener_adver_networ,shen18:_wasser_distan_guided_repres_learn_domain_adapt,gao16:_distr_robus_stoch_optim_wasser_distan,lee18:_minim_statis_learn_wasser,courty17:_optim_trans_domain_adapt}. 
In the sequel sections, we study robust statistical inference using a class of the IPM.

\subsection{Depth-based methods and IPMs}
Tukey's median is a multivariate analog of the median. Given $d$-dimensional i.i.d. samples $X_1,\ldots,X_n$, 
Tukey's median is defined as the minimum solution of 
\begin{align*}
 \min_{\bm{\mu}\in\Rbb^d}\max_{\substack{\u\in\Rbb^d\\ \|\u\|_2=1}}
 \frac{1}{n}\sum_{i=1}^{n}\mathbb{I}[\u^T(X_i-\bm{\mu})\geq 0]. 
\end{align*}
Let $\widehat{P}_{\bm{\mu}}$ be the probability distribution having the uniform point mass 
on $X_1-\bm{\mu},\ldots,X_n-\bm{\mu}$
and $\mathcal{F}$ be the function set 
$\mathcal{F}=\{\x\mapsto\mathbb{I}[\u^T\x\geq0]\,:\,\u\in\Rbb^d,\,\|\u\|_2=1\}
\cup
\{\x\mapsto \mathbb{I}[\u^T\x>0]\,:\,\u\in\Rbb^d,\,\|\u\|_2=1\}$. Then, we can confirm that 
the IPM between 
$\widehat{P}_{\bm{\mu}}$ and the multivariate standard normal distribution $N_d(\0,I_d)$
with the above $\mathcal{F}$ yields
\begin{align*}
\mathrm{IPM}(\widehat{P}_{\bm{\mu}},N_d(\0,I_d);\mathcal{F})
&= 
 \max_{\substack{\u\in\Rbb^d\\ \|\u\|_2=1}}
\frac{1}{n}\sum_{i=1}^{n}\mathbb{I}[\u^T(X_i-\bm{\mu})\geq0]-\frac{1}{2}, 
\end{align*}
i.e., one can drop the modulus in the definition of the IPM. 
Tukey's median is expressed by the minimization of the above IPM. 

Likewise, the covariance matrix depth is expressed by the IPM between 
the probability distribution $\widehat{P}_\Sigma$ having the uniform point mass 
on $\Sigma^{-1/2}X_1, \ldots, \Sigma^{-1/2}X_n$ 
for a positive definite matrix $\Sigma$ and $N_d(\0,I_d)$. 
For the function set 
$\mathcal{F}=\{\x\mapsto\mathbb{I}[\u^T(\x\x^T-I_d)\u\leq 0]\,:\,\u\in\Rbb^d\setminus\{\0\}\}$, 
we can confirm that 
\begin{align*}
&\phantom{=} \mathrm{IPM}(\widehat{P}_{\Sigma},N_d(\0,I_d);\mathcal{F})\\
&= 
 \max_{\u\in\Rbb^d\setminus\{\0\}}
 \bigg\lvert
 \frac{1}{n}\sum_{i=1}^{n}\mathbb{I}[\u^T\Sigma^{-1/2}(X_iX_i^T-\Sigma)\Sigma^{-1/2}\u\leq 0]\\
&\phantom{ \max_{\u\in\Rbb^d\setminus\{\0\}}} 
\qquad -\mathbb{P}_{Z\sim N(0,1)}(\|\u\|^2Z^2-\|\u\|^2\leq0)
 \bigg\rvert \\
&= 
 \max_{\substack{\u\in\Rbb^d\\ \|\u\|=1}}
 \bigg\lvert
 \frac{1}{n}\sum_{i=1}^{n}\mathbb{I}[(\u^T X_i)^2\leq \u^T\Sigma\u]
 -
 \mathbb{P}_{Z\sim N(0,1)}(Z^2\leq 1)
 \bigg\rvert. 
\end{align*}
The last line is nothing but the covariance matrix depth. 
The minimizer of 
$\mathrm{IPM}(\widehat{P}_{\Sigma},N_d(\0,I_d);\mathcal{F})$ over the positive definite matrix $\Sigma$
is equal to the covariance matrix estimator with the data depth. 

Another IPM-based expression of the robust estimator is presented by \cite{gao19:_robus_estim_via_gener_adver_networ}. 
In order to express the estimator by the minimization of the IPM from the empirical distribution of data $P_n$ 
to the statistical model, a variant of IPM loss is introduced.  
For instance, 
the robust covariance estimator with the data depth is expressed by 
the minimum solution of $\lim_{r\rightarrow0}\mathrm{IPM}(P_n, N_d(\0,\Sigma);\mathcal{F}_{N_d(\0,\Sigma),r})$ 
with respect to the parameter of the covariance matrix $\Sigma$, where 
$\mathcal{F}_{Q,r}$ is a function set depending on the distribution $Q$ and 
a positive real parameter $r$. 
Details are shown in Proposition 2.1 of \cite{gao19:_robus_estim_via_gener_adver_networ}. 
Though the connection between the data depth and IPM 
is not straightforward, 
the GAN-based estimator is thought to be a promising method for robust density estimation.

\section{Smoothed Total Variation Distance}
\label{sec:STV}

We define the smoothed total variation distance as a class of the IPM, and investigate its theoretical properties. All the proofs of theorems in this section are deferred to Appendix~\ref{app:proof-modulus}.

\subsection{Definition and Some Properties}
As an extension of the TV distance, let us define the smoothed  TV distance, which is a class of IPM. 
\begin{definition}
 Let $\sigma:\Rbb\rightarrow\Rbb$ be a measurable function and 
 $\mathcal{U}\subset L_0$ be a function set including the zero function. 
 For $P,Q\in\mathcal{P}$, the smoothed total variation (STV) distance, $\mathrm{STV}_{\mathcal{U},\sigma}(P,Q)$, is defined by 
 \begin{align*}
  \mathrm{STV}_{\mathcal{U},\sigma}(P,Q) := \sup_{\substack{u\in\mathcal{U}}, b\in\Rbb}
  \big\lvert\mathbb{E}_{X\sim P}[\sigma(u(X)-b)]-\mathbb{E}_{X\sim Q}[\sigma(u(X)-b)]\big\rvert. 
 \end{align*} 
\end{definition}
For the bias $b$, one can impose the constraint such as $\abs{b}\leq R$ with a positive constant $R$. 
Regardless of the constraint on the bias $b$, 
we use the notations $\mathrm{STV}(P,Q)$ and $\mathrm{STV}_\sigma(P,Q)$ 
for $\mathrm{STV}_{\mathcal{U},\sigma}(P,Q)$ if there is no confusion. 

The STV distance is nothing but the IPM with the function class 
$\{\sigma(u-b)\,:\,u\in\mathcal{U}, b\in\Rbb\}$. 
We show that the STV distance is a smoothed variant of the TV distance and shares some statistical properties. 
When the function set $\mathcal{U}$ is properly defined, and $\sigma$ is smooth, 
it is possible to construct a computationally tractable learning algorithm using the STV distance. 
On the other hand, learning with the TV distance is often computationally intractable, 
as the indicator function, which is non-differentiable, prevents from efficient optimization. 
In our paper, we focus on the STV distance such that the function in $\mathcal{U}$ is expressed by 
a ball in the RKHS. The details are shown in Section~\ref{sec:Smoothed-TVdist_KernelExp}. 

Some examples of STV distance are shown below. 
\begin{example}
 The total variation distance is expressed by $\mathrm{STV}_{L_0,\1}$. 
 \end{example}
\begin{example}
 The STV distance with the identity function $\sigma=\mathrm{id}$ 
 is reduced to the IPM defined by the function set~$\mathcal{U}$. 
 The MMD~\citep{JMLR:v13:gretton12a} is expressed by the STV distance 
 with $\mathcal{U}=\mathcal{H}_1$ and $\sigma=\mathrm{id}$. 
 The Wasserstein distance with 1st moment corresponds to 
 the STV distance with $\mathcal{U}=\{f:\mathcal{X}\rightarrow\mathbb{R}\,:\,\text{$f$ is 1-Lipschitz continuous}\}$ and $\sigma=\mathrm{id}$. 
\end{example}
The authors of \cite{liu21:_robus_w_gan_based_estim} revealed the robustness of the Wasserstein-based estimator called Wasserstein-GAN. 
As for the robustness of the MMD-based method, i.e., MMD-GAN, the authors of 
\cite{wu20:_minim_optim_gans_robus_mean_estim} obtained negative results according to theoretical analysis under a strong assumption and numerical experiments. 
Though the STV distance with RKHSs is similar to MMD, a significant difference is that a non-linear function $\sigma$ and the RKHS ball $\mathcal{H}_U$ with variable radius are used in the STV distance. 
As a result, the STV-based method recovers the robustness. 
Section~\ref{sec:PredictionErrorBounds_Finite-dim_RKHS} and thereafter will discuss this in more detail.
\begin{example}
 Let $\mathcal{U}\subset{L_0}$ be a function set. 
 The STV distance with $\mathcal{U}$ and $\sigma=\1$ is 
 the generalized Kolmogorov-Smirnov distance~\citep{zhu22:_robus_estim_nonpar_famil_gener_adver_networ}, 
 \begin{align*}
  \mathrm{STV}_{\mathcal{U},\1}(P,Q)
  &=
  \sup_{u\in\mathcal{U}, b\in\Rbb} \mathbb{E}_P[\1(u(X)-b\geq 0)]-\mathbb{E}_Q[\1(u(X)-b\geq 0)] \\ 
  &=
  \sup_{u\in\mathcal{U}, b\in\Rbb}P(u(X)\geq b)-Q(u(X)\geq b). 
 \end{align*}
 When $\sigma$ is a cumulative distribution function of a probability distribution, 
 the STV distance, $\mathrm{STV}_\sigma$, 
is the smoothed generalized Kolmogorov-Smirnov distance~\citep{zhu22:_robus_estim_nonpar_famil_gener_adver_networ}. 
\end{example}

Let us consider some basic properties of the STV distance. 
\begin{lemma}
For the STV distance, the non-negativity, $\mathrm{STV}(P,Q)\geq 0$, 
and the triangle inequality, $\mathrm{STV}(P,Q)\leq\mathrm{STV}(P,R)+\mathrm{STV}(R,Q)$
hold for $P,Q,R\in\mathcal{P}$. 
When $\sigma$ satisfies $0\leq \sigma\leq 1$, 
the inequality $\mathrm{STV}_{\sigma}(P,Q)\leq \mathrm{TV}(P,Q)$ holds.  
\end{lemma}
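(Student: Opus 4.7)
The plan is that all three claims follow directly from the variational structure of the STV definition, without requiring any smoothness or topological input on $\sigma$ or $\mathcal{U}$.

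For non-negativity, I would simply note that $\mathrm{STV}(P,Q)$ is defined as a supremum of non-negative quantities (each term carries an absolute value), so $\mathrm{STV}(P,Q)\geq 0$ is immediate. Alternatively, the hypothesis that $\mathcal{U}$ contains the zero function gives a feasible point $(u,b)=(0,0)$, at which the argument of the supremum equals zero, again yielding $\mathrm{STV}(P,Q)\geq 0$.

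For the triangle inequality, I would fix $(u,b)\in\mathcal{U}\times\Rbb$ and insert an intermediate $\mathbb{E}_R[\sigma(u-b)]$ term, then apply the ordinary triangle inequality for real numbers:
\begin{align*}
\abs{\mathbb{E}_P[\sigma(u-b)]-\mathbb{E}_Q[\sigma(u-b)]}
&\leq \abs{\mathbb{E}_P[\sigma(u-b)]-\mathbb{E}_R[\sigma(u-b)]}\\
&\quad+\abs{\mathbb{E}_R[\sigma(u-b)]-\mathbb{E}_Q[\sigma(u-b)]}\\
&\leq \mathrm{STV}(P,R)+\mathrm{STV}(R,Q).
\end{align*}
Since the right-hand side does not depend on $(u,b)$, taking the supremum over $(u,b)$ on the left yields the claim.

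For the bound $\mathrm{STV}_\sigma(P,Q)\leq\mathrm{TV}(P,Q)$ when $0\leq\sigma\leq 1$, I would observe that for any $u\in\mathcal{U}$ and $b\in\Rbb$, the function $f:=\sigma(u(\cdot)-b)$ is a measurable function into $[0,1]$, hence lies in the feasible set of the TV variational formula \eqref{eqn:TV-dist}. The only subtlety is that the TV formula is stated without absolute values, whereas STV carries them; this is easily handled by noting that the class $\{f\in L_0:0\leq f\leq 1\}$ is closed under $f\mapsto 1-f$, so replacing $f$ by $1-f$ flips the sign of $\mathbb{E}_P[f]-\mathbb{E}_Q[f]$ while keeping the function in the feasible set. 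Hence the supremum of the signed quantity equals the supremum of its modulus, so $\abs{\mathbb{E}_P[f]-\mathbb{E}_Q[f]}\leq\mathrm{TV}(P,Q)$. Taking the supremum over $(u,b)$ gives $\mathrm{STV}_\sigma(P,Q)\leq\mathrm{TV}(P,Q)$. No step here is a real obstacle; the only mild care needed is the absolute-value/no-absolute-value discrepancy between the two variational expressions.
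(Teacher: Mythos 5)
Your proof is correct, and the paper itself omits the proof of this lemma as "straightforward"; your argument is exactly the standard one intended (non-negativity from the modulus, the pointwise triangle inequality before taking suprema, and membership of $\sigma(u(\cdot)-b)$ in the feasible class $\{f\in L_0: 0\leq f\leq 1\}$ of \eqref{eqn:TV-dist}). Your care with the absolute-value versus signed-supremum discrepancy, handled via the symmetry $f\mapsto 1-f$, matches the remark the paper makes just after introducing IPMs.
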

We omit the proof, since it is straightforward. 

Let us consider the following assumptions. 
\begin{description}
 \item[Assumption~(A)]: The function set $\mathcal{U}\subset L_0$ satisfies 
	    $\mathcal{U}=-\mathcal{U}$, i.e., $\mathcal{U}$ is closed for negation. 
 \item[Assumption~(B)]: 
	    The function $\sigma(z)$ is continuous and strictly monotone increasing. 
	    In addition, 
	    $\displaystyle\lim_{z\rightarrow-\infty}\sigma(z)=0, \lim_{z\rightarrow\infty}\sigma(z)=1$ and 
	    $\sigma(z)+\sigma(-z)=1, z\in\Rbb$ hold. 
\end{description}
Under Assumption~(B), the STV distance is regarded as 
a class of the smoothed generalized Kolmogorov-Smirnov distance~\citep{zhu22:_robus_estim_nonpar_famil_gener_adver_networ}. 

We show some properties of the STV distance under the above assumptions. 
\begin{lemma}
 \label{lemma-sigma0}
 Under Assumption~(A), the following equality holds, 
 \begin{align*}
  \mathrm{STV}_{\mathcal{U},\1}(P,Q) = \sup_{\substack{u\in\mathcal{U}, b\in\Rbb}}
  \mathbb{E}_{P}[\1[u(X)-b\geq0]]-\mathbb{E}_{Q}[\1[u(X)-b\geq0]]
\end{align*}
 for $P,Q\in\mathcal{P}$, i.e., one can drop the modulus sign. 
\end{lemma}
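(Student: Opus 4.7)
The plan is to establish the claimed equality by proving both directions. Write $S$ for the right-hand side. The inequality $S \leq \mathrm{STV}_{\mathcal{U},\1}(P,Q)$ is immediate, since dropping the modulus in the definition of the STV distance only weakens the supremum. For the reverse, expanding the modulus yields $\mathrm{STV}_{\mathcal{U},\1}(P,Q) = \max\{S, S'\}$, where
\begin{align*}
S' := \sup_{u\in\mathcal{U},\, b\in\Rbb}\bigl(\mathbb{E}_Q[\1(u(X)-b\geq 0)] - \mathbb{E}_P[\1(u(X)-b\geq 0)]\bigr),
\end{align*}
so it suffices to prove $S' \leq S$.

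The key step is to exploit Assumption~(A). Since $-u \in \mathcal{U}$ whenever $u \in \mathcal{U}$, first I would substitute $u \mapsto -u$ and then rename the sup-variable $b \mapsto -c$. Together with $R(u(X)\leq c) = 1 - R(u(X) > c)$ for $R \in \{P,Q\}$, this rewrites
\begin{align*}
S'
&= \sup_{u\in\mathcal{U},\, c\in\Rbb}\bigl(Q(u(X)\leq c) - P(u(X)\leq c)\bigr)\\
&= \sup_{u\in\mathcal{U},\, c\in\Rbb}\bigl(P(u(X) > c) - Q(u(X) > c)\bigr).
\end{align*}

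Finally, I would pass from the strict inequality ``$>$'' back to ``$\geq$''. For any fixed $u \in \mathcal{U}$ and $c \in \Rbb$, the identity $\{u(X) > c\} = \bigcup_{c' > c}\{u(X) \geq c'\}$ gives the monotone limit
\begin{align*}
P(u(X) > c) - Q(u(X) > c) = \lim_{c' \downarrow c}\bigl(P(u(X)\geq c') - Q(u(X)\geq c')\bigr) \leq S,
\end{align*}
and taking the supremum over $(u,c)$ yields $S' \leq S$, which completes the argument.

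The main (though minor) obstacle is the gap between the strict ``$>$'' obtained after the sign flip and the non-strict ``$\geq$'' appearing in $S$; this gap matters pointwise whenever $u(X)$ has a $P$- or $Q$-atom at the threshold $c$. The limit step resolves the issue at the level of the supremum, since the monotonicity of $c\mapsto P(u(X)\geq c)$ lets any value attained with ``$>$'' be approximated arbitrarily well by values attained with ``$\geq$'', without requiring any regularity of $P$ or $Q$.
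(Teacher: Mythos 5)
Your proof is correct, and its skeleton coincides with the paper's: both use Assumption~(A) to replace $u$ by $-u$ and then pass to complements, reducing everything to reconciling the strict threshold $\{u(X)>c\}$ with the non-strict one $\{u(X)\geq c\}$. Where you differ is in how that last reconciliation is carried out. The paper fixes $u$, introduces the functions $\xi(b)=Q(u<b)-P(u<b)$ and $\bar{\xi}(b)=Q(u\leq b)-P(u\leq b)$, and proves $\sup_b\xi=\sup_b\bar{\xi}$ by extracting convergent subsequences from maximizing sequences and exploiting left-/right-continuity, with a separate case analysis for bounded versus unbounded sequences. You instead argue pointwise in $(u,c)$: writing $\{u(X)>c\}=\bigcup_{c'>c}\{u(X)\geq c'\}$ and invoking continuity of measure from below for $P$ and $Q$ separately, each value attained with ``$>$'' is a limit of values attained with ``$\geq$'' and hence bounded by $S$, after which taking the supremum finishes the proof. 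This is shorter and avoids the compactness/case analysis entirely; the only cosmetic quibble is that the difference $P(u(X)\geq c')-Q(u(X)\geq c')$ need not itself be monotone in $c'$ (only the two probabilities separately are), but this does not affect the validity of the limit or of the bound.
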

\begin{lemma}
\label{lemma:smoothed-TV}
 Under Assumptions~(A) and (B), it holds that 
 \begin{align*}
 \mathrm{STV}_{\mathcal{U},\sigma}(P,Q) = \sup_{\substack{u\in\mathcal{U}, b\in\Rbb}}
  \mathbb{E}_{P}[\sigma(u(X)-b)]-\mathbb{E}_{Q}[\sigma(u(X)-b)]
\end{align*}
for $P,Q\in\mathcal{P}$. 
\end{lemma}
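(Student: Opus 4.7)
The plan is to exploit a simple symmetry argument. Define the signed functional
\[
  F(u,b) := \mathbb{E}_P[\sigma(u(X)-b)] - \mathbb{E}_Q[\sigma(u(X)-b)],
\]
so that by definition $\mathrm{STV}_{\mathcal{U},\sigma}(P,Q) = \sup_{u\in\mathcal{U},\,b\in\mathbb{R}} |F(u,b)|$, and the claim is that this equals $\sup_{u\in\mathcal{U},\,b\in\mathbb{R}} F(u,b)$.

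The key step is to show that $F$ has a sign-symmetric range over the admissible pairs. Given any $(u,b)\in\mathcal{U}\times\mathbb{R}$, Assumption~(A) ensures $-u\in\mathcal{U}$, and $-b\in\mathbb{R}$ trivially. Under Assumption~(B), the identity $\sigma(z)+\sigma(-z)=1$ yields the pointwise relation
\[
  \sigma\bigl((-u)(X)-(-b)\bigr) = \sigma\bigl(-(u(X)-b)\bigr) = 1 - \sigma(u(X)-b).
\]
Taking expectations and subtracting, the constant $1$ cancels, so $F(-u,-b) = -F(u,b)$. Hence the map $(u,b)\mapsto(-u,-b)$ is an involution on $\mathcal{U}\times\mathbb{R}$ that reverses the sign of $F$.

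From this symmetry it follows immediately that
\[
  \sup_{u,b} F(u,b) = \sup_{u,b}\bigl(-F(u,b)\bigr) = -\inf_{u,b} F(u,b),
\]
and consequently
\[
  \sup_{u,b} |F(u,b)| = \max\Bigl(\sup_{u,b} F(u,b),\, -\inf_{u,b} F(u,b)\Bigr) = \sup_{u,b} F(u,b),
\]
which is the desired identity. As a sanity check, since $\mathcal{U}$ contains the zero function, the choice $u\equiv 0$, $b=0$ gives $F(0,0)=0$ (using $\sigma(0)=1/2$, which follows from $\sigma(z)+\sigma(-z)=1$), so the sup without modulus is automatically nonnegative.

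There is essentially no obstacle: the only thing to verify is that the full strength of Assumption~(B) is not needed — only the reflection identity $\sigma(z)+\sigma(-z)=1$ is used, while the continuity, strict monotonicity, and limit conditions are not required for this particular statement (they are presumably invoked elsewhere to justify regarding $\mathrm{STV}_{\mathcal{U},\sigma}$ as a smoothed generalized Kolmogorov--Smirnov distance).
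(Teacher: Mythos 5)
Your proof is correct and follows essentially the same route as the paper: the paper's argument is precisely that $\sigma(z)+\sigma(-z)=1$ gives $\mathbb{E}_P[\sigma(u(X)-b)]-\mathbb{E}_Q[\sigma(u(X)-b)]=\mathbb{E}_Q[\sigma(-u(X)+b)]-\mathbb{E}_P[\sigma(-u(X)+b)]$, and Assumption~(A) ensures $-u\in\mathcal{U}$, so the modulus can be dropped. Your version merely spells out the sign-symmetry of the range of $F$ more explicitly, which is a fine (and slightly more careful) presentation of the identical idea.
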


Let us consider the STV distance such that $\mathcal{U}$ is given by the RKHS $\mathcal{H}$ or its subset $\mathcal{H}_U$. 
When the RKHS is dense in the set of all continuous functions on $\mathcal{X}$ for the supremum norm, 
the RKHS is called universal RKHS~\citep{steinwart2008support}. It is well known that the Gaussian kernel induces a universal RKHS. 
\begin{lemma}
\label{lemma:RKHS_smoothed-TV}
 Suppose that $\mathcal{H}$ be a universal RKHS. 
 Under Assumptions~(A) and (B), 
 $\mathrm{STV}_{\mathcal{H},\sigma}$ equals the TV distance. 
 Furthermore, 
 for $\mathcal{H}_U=\{f\in\mathcal{H}\,:\,\|f\|\leq U\}$, 
 the equality  
 \begin{align*}
  \lim_{U\rightarrow\infty}\mathrm{STV}_{\mathcal{H}_U,\sigma}(P,Q)
 =
 \mathrm{STV}_{\mathcal{H},\sigma}(P,Q) =\mathrm{TV}(P,Q). 
 \end{align*}
 holds for $P,Q\in\mathcal{P}$. 
\end{lemma}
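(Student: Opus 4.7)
The plan is to prove the two equalities in turn. The asymptotic equality $\lim_{U\to\infty}\mathrm{STV}_{\mathcal{H}_U,\sigma}(P,Q) = \mathrm{STV}_{\mathcal{H},\sigma}(P,Q)$ follows from a monotonicity argument: the balls $\mathcal{H}_U$ are nested and exhaust $\mathcal{H}$, so the sup over $\mathcal{H}_U\times\mathbb{R}$ is non-decreasing in $U$ and bounded above by the sup over $\mathcal{H}\times\mathbb{R}$. Conversely, given $\varepsilon>0$, an $\varepsilon$-near-optimizer $(u,b)$ on the right-hand side (using Lemma~\ref{lemma:smoothed-TV} to drop the absolute value, since $\mathcal{H}$ satisfies Assumption~(A)) lies in $\mathcal{H}_U$ as soon as $U\geq\|u\|$.

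For the core identity $\mathrm{STV}_{\mathcal{H},\sigma}(P,Q) = \mathrm{TV}(P,Q)$, the bound $\leq$ is immediate: Assumption~(B) forces $\sigma\in[0,1]$, so every test function $\sigma(u-b)$ is admissible in the variational formula \eqref{eqn:TV-dist} for TV. For the reverse bound, the plan is a scaling reduction exploiting that $\mathcal{H}$ is a linear space: replacing $(u,b)$ by $(tu,tb)$ for $t>0$ keeps the pair admissible, and bounded convergence together with Assumption~(B) yields
\begin{align*}
 \lim_{t\to\infty}\mathbb{E}_P[\sigma(t(u-b))] \;=\; P(u>b)+\tfrac{1}{2}\,P(u=b),
\end{align*}
which collapses to $P(u\geq b)$ once $b$ avoids the at most countable set $\{c\in\mathbb{R}:(P+Q)(u=c)>0\}$. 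Hence $\mathrm{STV}_{\mathcal{H},\sigma}(P,Q)\geq \sup_{u,b}\bigl(P(u\geq b)-Q(u\geq b)\bigr)$, reducing the problem to the identity $\mathrm{STV}_{\mathcal{H},\1}(P,Q) = \mathrm{TV}(P,Q)$.

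This last reduction---realizing RKHS super-level sets as nearly optimal Borel sets for TV---is the main obstacle and the step that genuinely uses universality. The plan is a four-stage approximation of a Borel set $A$ with $P(A)-Q(A)\geq\mathrm{TV}(P,Q)-\varepsilon$: (i) regularity of the finite Borel measure $P+Q$ yields a compact $K$ and an open $V$ with $K\subset A\subset V$ and $(P+Q)(V\setminus K)<\varepsilon$; (ii) Urysohn's lemma produces a continuous $g\colon\mathcal{X}\to[0,1]$ equal to $1$ on $K$ and $0$ on $\mathcal{X}\setminus V$; (iii) sup-norm universality of $\mathcal{H}$ supplies $f\in\mathcal{H}$ with $\|f-g\|_\infty<1/4$; (iv) then $K\subset\{f\geq 1/2\}\subset V$, and choosing a bias $b'$ near $1/2$ outside the countable set of jumps of $c\mapsto(P+Q)(f\geq c)$ gives $P(f\geq b')-Q(f\geq b')\geq \mathrm{TV}(P,Q)-O(\varepsilon)$.

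The delicate points in this hard step are that $\mathcal{X}$ must support inner/outer regularity and Urysohn-type separation (compact metric, Polish, or locally compact Hausdorff all suffice, consistent with the universality hypothesis), and one must turn sup-norm approximation into set containment---which is exactly what the slack between $\|f-g\|_\infty<1/4$ and the threshold $1/2$ provides.
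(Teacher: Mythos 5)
Your proposal is correct, and the easy parts (the upper bound $\mathrm{STV}_{\mathcal{H},\sigma}\leq\mathrm{TV}$ via $\sigma\in[0,1]$, and the limit in $U$ via monotonicity of nested balls plus near-optimizers) coincide with the paper's argument. For the hard direction $\mathrm{TV}\leq\mathrm{STV}_{\mathcal{H},\sigma}$, however, you take a genuinely different route. The paper approximates the indicator $\1_A$ of a near-optimal Borel set $A$ in $L_1(P+Q)$ by continuous functions $f_m$ with values in $[0,1]$, and then approximates each $f_m$ in sup norm by $\sigma(u_{m,k}-b_{m,k})$ using universality (implicitly composing with $\sigma^{-1}$), so that $\int\1_A\,\mathrm{d}(P-Q)$ is a limit of admissible STV test values. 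You instead first use the scaling $(u,b)\mapsto(tu,tb)$ and bounded convergence to sharpen $\sigma$ into a threshold, reducing to $\mathrm{STV}_{\mathcal{H},\1}=\mathrm{TV}$, and then realize a near-optimal Borel set as a super-level set $\{f\geq b'\}$ of an RKHS function via inner/outer regularity, Urysohn, and sup-norm universality, with the $1/4$-vs-$1/2$ slack converting uniform approximation into set containment. Your version has the merit of making explicit the regularity and separation hypotheses on $\mathcal{X}$ that the paper's $L_1$-approximation step also silently requires, and of cleanly separating the role of $\sigma$ (eliminated by scaling) from the role of universality (used only for level sets); the paper's version is shorter and avoids the atom-avoidance bookkeeping for the bias $b$, at the cost of the slightly delicate sup-norm approximation of a $[0,1]$-valued continuous function by $\sigma(u-b)$. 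Both arguments are sound.
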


\subsection{Gap between STV distance and TV distance}
\label{subsec:Difference_STV_TV}

One can quantitatively evaluate the difference between the TV distance and STV distance. 
First of all, let us define the decay rate of the function $\sigma$. 
\begin{definition}
Let $\sigma$ be the function satisfying Assumption~(B). If there exists a function $\lambda(c)$ such that 
$\lim_{c\rightarrow\infty}\lambda(c)=0$ and 
\begin{align}
 \label{eqn:decay_rate_sigma}
 \sup_{t\geq 1}\sigma(-c\log t)(t-1) \leq \lambda(c)
\end{align}
hold for arbitrary $c> C_0>0$, 
$\lambda(c)$ is called the decay rate of $\sigma$, where $C_0$ is a positive constant. 
\end{definition}
\begin{proposition}
 \label{prop:TV-STV_bias_bound}
 Assume the Assumptions~(A) and (B). Suppose that the decay rate of $\sigma$ is $\lambda(c)$ for $c>C_0>0$, 
 i.e., $\eqref{eqn:decay_rate_sigma}$ holds. 
 For $P,Q\in\mathcal{P}$, let $\mu$ be a Borel measure such that $P\ll \mu$ and $Q\ll \mu$ hold. 
 Let us define the function $s(x)$ on $\mathcal{X}$ by 
 $s(x) = \log\frac{\frac{\rmd{P}}{\rmd{\mu}}(x)}{\frac{\rmd{Q}}{\rmd{\mu}}(x)}$, 
 where $\log\frac{a}{0}=\infty$ and $\log\frac{0}{a}=-\infty$ for $a>0$ and $\frac{0}{0}=1$ by convention. 
 Suppose $s\in\mathcal{U}\oplus\mathbb{R}$. Then, for the STV distance with $c\,\mathcal{U}, c>0$ and $\sigma$, the inequality 
 \begin{align*}
  0\leq \mathrm{TV}(P,Q)-\mathrm{STV}_{c\,\mathcal{U},\sigma}(P,Q) \leq \lambda(c)(1-\mathrm{TV}(P,Q))
 \end{align*}
 holds for $c> C_0$. 
\end{proposition}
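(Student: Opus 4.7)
The lower bound is essentially free: Assumption~(B) gives $0 \leq \sigma \leq 1$, so the final clause of the unnamed basic-properties lemma immediately yields $\mathrm{STV}_{c\,\mathcal{U},\sigma}(P,Q) \leq \mathrm{TV}(P,Q)$. The work is therefore in the upper bound, and the plan is to lower-bound $\mathrm{STV}_{c\,\mathcal{U},\sigma}(P,Q)$ by evaluating its defining supremum at a single well-chosen test function built from the log-likelihood ratio $s$.

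More precisely, since $s\in\mathcal{U}\oplus\mathbb{R}$, write $s=u^{*}+b^{*}$ with $u^{*}\in\mathcal{U}$ and $b^{*}\in\mathbb{R}$, and take $u=c\,u^{*}\in c\,\mathcal{U}$ together with $b=-c\,b^{*}$. Then $\sigma(u(x)-b)=\sigma(c\,s(x))$. By Lemma~\ref{lemma:smoothed-TV} (applicable under Assumptions~(A) and (B)) the modulus in the definition of STV may be dropped, so
\begin{align*}
 \mathrm{STV}_{c\,\mathcal{U},\sigma}(P,Q)\;\geq\;\mathbb{E}_{P}[\sigma(c\,s(X))]-\mathbb{E}_{Q}[\sigma(c\,s(X))]
 \;=\;\int \sigma(c\,s)\,(p-q)\,\rmd\mu,
\end{align*}
where $p=\rmd{P}/\rmd{\mu}$ and $q=\rmd{Q}/\rmd{\mu}$. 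Using $\mathrm{TV}(P,Q)=\int_{A}(p-q)\,\rmd\mu$ with $A=\{p\geq q\}$, the quantity to control becomes
\begin{align*}
 \mathrm{TV}(P,Q)-\int \sigma(c\,s)\,(p-q)\,\rmd\mu
 \;=\;\int_{A}\sigma(-c\,s)\,(p-q)\,\rmd\mu+\int_{A^{c}}\sigma(c\,s)\,(q-p)\,\rmd\mu,
\end{align*}
where I used $1-\sigma(cs)=\sigma(-cs)$ from Assumption~(B). Both integrands are non-negative, which also reconfirms the lower bound.

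The decay-rate hypothesis is then applied separately on $A$ and $A^{c}$. On $A$, the change of variable $t=p/q\geq 1$ gives $\sigma(-c\,s)=\sigma(-c\log t)$ and $p-q=(t-1)q$, so \eqref{eqn:decay_rate_sigma} bounds the first integral by $\lambda(c)\,Q(A)$. Symmetrically, on $A^{c}$, setting $t=q/p>1$ yields $\sigma(c\,s)=\sigma(-c\log t)$ and $q-p=(t-1)p$, bounding the second integral by $\lambda(c)\,P(A^{c})$. Adding these,
\begin{align*}
 \mathrm{TV}(P,Q)-\mathrm{STV}_{c\,\mathcal{U},\sigma}(P,Q)\;\leq\;\lambda(c)\bigl(Q(A)+P(A^{c})\bigr)
 \;=\;\lambda(c)\bigl(1-(P(A)-Q(A))\bigr)\;=\;\lambda(c)\bigl(1-\mathrm{TV}(P,Q)\bigr),
\end{align*}
which is the required inequality.

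The only delicate points I anticipate are the boundary cases handled by the stated convention for $\log$: where exactly one of $p,q$ vanishes, $s$ is $\pm\infty$ and $\sigma(\pm cs)$ is $1$ or $0$, so the corresponding pieces of the decomposition either vanish or reduce to $P(\{q=0\})$, $Q(\{p=0\})$, and these are absorbed cleanly into $Q(A)$ and $P(A^{c})$. This book-keeping needs to be done carefully but does not affect the argument. The conceptual heart of the proof is recognising that $s$ being expressible in $\mathcal{U}\oplus\mathbb{R}$ makes the \emph{exact} Neyman–Pearson indicator $\1[s\geq 0]$ available as a smooth $c\to\infty$ limit inside the STV class $c\,\mathcal{U}$, while the decay rate quantifies how fast this limit is attained.
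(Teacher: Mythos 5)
Your proposal is correct and follows essentially the same route as the paper's proof: lower-bound the STV supremum by the single test function $\sigma(c\,s)$ (available because $s\in\mathcal{U}\oplus\mathbb{R}$), write $\mathrm{TV}-\mathrm{STV}\leq\int(\1[s\geq0]-\sigma(cs))(p-q)\,\rmd\mu$, split over $\{p\geq q\}$ and its complement, apply the decay-rate bound with $t=p/q$ (resp.\ $q/p$), and identify $Q(\{p\geq q\})+P(\{q>p\})=\int\min\{p,q\}\,\rmd\mu=1-\mathrm{TV}(P,Q)$. Your explicit treatment of the $u^{*},b^{*}$ decomposition and of the $p=0$ or $q=0$ boundary cases is just a more careful write-up of what the paper leaves implicit.
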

Note that for any pair of probability distributions, $P$ and $Q$, there exists a measure $\mu$ such that $P\ll\mu$ and $Q\ll\mu$. 
A simple example is $\mu=P+Q$. 
The above proposition holds only for $P$ and $Q$ such that $s\in\mathcal{U}$. 
Under mild assumptions, any pair of probability distributions in a statistical model satisfies the condition $s\in\mathcal{U}$. 

\begin{remark}
 Let us consider the case of $\mathrm{TV}(P,Q)=1$. 
 A typical example is the pair of $P$ and $Q$ for which 
 there exists a subset $A$ such that $P(A)=1$ and $Q(A)=0$. In this case, $s(x)=\infty$ (resp. $s(x)=-\infty$) for $x\in{A}$ 
 (resp. $x\not\in {A}$). If $\mathcal{U}$ includes such a function, we have $\sigma(s(x)-b)=+1$ for $x\in A$
 and otherwise $\sigma(s(x)-b)=0$. Hence, the STV distance matches with the TV distance for $P$ and $Q$. 
\end{remark}

Below, we show the lower bound of the decay rate and some examples. 
\begin{lemma}
 \label{lemma:lower_bound_DecayRate}
 Under Assumption~(B), the decay rate satisfies 
 \begin{align*}
  \displaystyle \liminf_{c\rightarrow\infty}c \lambda(c)>0. 
 \end{align*}
\end{lemma}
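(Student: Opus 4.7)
My plan is to exhibit, for each large $c$, a specific value of $t$ in the supremum defining $\lambda(c)$ that produces a quantity of order $1/c$. The natural choice is $t = e^{a/c}$ for a fixed positive constant $a$, because then $-c \log t = -a$ is independent of $c$, and the factor $\sigma(-c \log t)$ stays frozen at the strictly positive value $\sigma(-a)$, while $t-1 = e^{a/c}-1$ collapses at the clean asymptotic rate $a/c$.

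Concretely, I would first invoke the definition of the decay rate to get
\begin{align*}
\lambda(c) \;\geq\; \sigma(-c \log t)(t-1) \quad \text{for every } t \geq 1,
\end{align*}
and then specialize to $t = e^{a/c}$ with $a > 0$ fixed, yielding
\begin{align*}
c \lambda(c) \;\geq\; c \, \sigma(-a)\bigl(e^{a/c} - 1\bigr).
\end{align*}
Taking $c \to \infty$ and using the elementary limit $c(e^{a/c}-1) \to a$, this gives $\liminf_{c\to\infty} c\lambda(c) \geq a\sigma(-a)$.

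Finally, I would argue $\sigma(-a) > 0$ strictly. Assumption~(B) gives $\sigma(0) = 1/2$ by the symmetry $\sigma(z)+\sigma(-z)=1$, and strict monotonicity together with the limit $\sigma(-\infty) = 0$ forces $\sigma(-a) \in (0, 1/2)$ for any finite $a > 0$. Hence $a \sigma(-a) > 0$, completing the proof.

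The argument has no real obstacle; the only point that requires a moment of care is confirming that $\sigma(-a)$ is strictly positive rather than zero, which is immediate once one notes that the combination of strict monotonicity and the limits at $\pm\infty$ given in Assumption~(B) prevents $\sigma$ from vanishing at any finite point.
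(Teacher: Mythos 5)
Your proof is correct and takes essentially the same approach as the paper: both arguments rest on the substitution $t=e^{y/c}$ (the paper writes $y=c\log t$), the limit $c(e^{y/c}-1)\to y$, and the strict positivity of $\sigma(-y)$ for finite $y$ guaranteed by Assumption~(B). The only difference is presentational — the paper argues by contradiction from $\lambda(c)=o(1/c)$, whereas you argue directly and obtain the explicit bound $\liminf_{c\to\infty}c\lambda(c)\geq a\sigma(-a)$.
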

The above lemma means that the order of the decay rate $\lambda(c)$ is greater than or equal to $1/c$. 

\begin{example}
 For the sigmoid function $\sigma(z)=1/(1+e^{-z}), z\in\Rbb$, the decay rate is given by $\lambda(c)=1/c$ for $c>1$. 
 Indeed, the inequality
\begin{align*}
 \sigma(-c \log t)(t-1) = \frac{t-1}{1+t^c} \leq \frac{t-1}{t^c} \leq \frac{1}{c}\left(\frac{c-1}{c}\right)^{c-1}
 \leq \frac{1}{c}
\end{align*} 
holds for $t>1$ and $c>1$. We see that the sigmoid function attains the lowest order of the decay rate. 
Likewise, we find that the decay rate $\lambda(c)$ of the function $\sigma(-z) \asymp e^{-z}$ for $z>0$ 
is of the order $1/c$. 
\end{example}

\begin{example}
 For the function $\sigma(-z)\asymp z^{-\beta} (z\rightarrow\infty)$ with a constant $\beta>0$, 
 we can prove that there is no finite decay rate. Indeed, for $\sigma(-z)\asymp z^{-\beta}$, we have 
 \begin{align*}
  \sup_{t\geq1}\sigma(-c\log{t})(t-1) \asymp \frac{1}{c^\beta}\sup_{t\geq 1}\frac{t-1}{(\log{t})^\beta}=\infty.  
 \end{align*}
 In the proof of Proposition~\ref{prop:TV-STV_bias_bound}, the density ratios, $p/q$ and $q/p$, are replaced
 with the variable $t$. When the density ratios, $p/q$ and $q/p$, are both bounded above by a constant $T_0>0$, 
 the range of the supremum is restricted to $1\leq t\leq T_0$. In such a case, 
 the decay rate is $\lambda(c)=\frac{1}{c^\beta}\sup_{1\leq t\leq T_0}\frac{t-1}{(\log{t})^\beta}\asymp 1/c^\beta$. 
 The above additional assumption makes the lower bound in Lemma~\ref{lemma:lower_bound_DecayRate} smaller. 
\end{example}

\subsection{STV distances on Kernel Exponential Family}
\label{sec:Smoothed-TVdist_KernelExp}
We use the STV distance for the probability density estimation. 
There are numerous models of probability densities. 
In this paper, we focus on the exponential family and its kernel-based extension called 
\emph{kernel exponential family}~\citep{fukumizu09:_algeb_geomet_statis,JMLR:v18:16-011}. 
The exponential family includes important statistical models. 
The kernel exponential family is a natural extension of the finite-dimensional exponential family to an infinite-dimensional one while preserving computational feasibility. 
We consider the robust estimator based on the STV distance for the kernel exponential family. 

Let $\mathcal{H}$ be the RKHS endowed with the kernel function $k$. The kernel exponential family
$\mathcal{P}_\mathcal{H}$ is defined by 
\begin{align}
\label{eqn:kernel-exp-family}
 \mathcal{P}_\mathcal{H} := \left\{P_f =  p_f\rmd\mu=\exp(f-A(f))\rmd\mu\,:\,f\in\mathcal{H},\, 
 \int_{\mathcal{X}}e^{f(x)}\mathrm{d}\mu<\infty \right\}, 
\end{align}
where $A(f)$ is the moment generating function, $A(f)=\log\int_{\mathcal{X}}e^{f(x)}\mathrm{d}\mu$, 
and $\mu$ is a Borel measure on $\mathcal{X}$. 

The following lemmas indicate the basic properties of the STV distance for the kernel exponential family. 
The proofs are shown in Appendix~\ref{app:proof-modulus}. 

\begin{lemma}
 \label{lemma:kexp_TV}
 For $P_f, P_g\in \mathcal{P}_\mathcal{H}$, $\mathrm{STV}_{\mathcal{H}_{U},\1}(P_f, P_g)$ 
 equals the TV distance $\mathrm{TV}(P_f, P_g)$ for any $U>0$. 
\end{lemma}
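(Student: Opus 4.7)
The plan is to show both inequalities $\mathrm{STV}_{\mathcal{H}_U,\1}(P_f,P_g) \leq \mathrm{TV}(P_f,P_g)$ and the reverse. The upper bound follows immediately from the general fact (already noted in the first \textbf{Lemma} of Section~\ref{sec:STV}) that $\mathrm{STV}_{\sigma} \leq \mathrm{TV}$ whenever $0 \leq \sigma \leq 1$, which holds for $\sigma = \1$. So the real work is in the reverse inequality.

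For the lower bound, I would exploit the closed-form maximizer of the TV distance together with the scale invariance of the step function $\1$. The densities of the kernel exponential family are $p_f = \exp(f - A(f))$ and $p_g = \exp(g - A(g))$, so the Neyman--Pearson-type set
\begin{equation*}
A^* = \{x \in \mathcal{X} : p_f(x) \geq p_g(x)\} = \{x \in \mathcal{X} : (f-g)(x) \geq A(f)-A(g)\}
\end{equation*}
attains the TV supremum: $\mathrm{TV}(P_f,P_g) = P_f(A^*) - P_g(A^*)$. The function $f-g$ lies in $\mathcal{H}$ since $\mathcal{H}$ is a vector space. If $\|f-g\| = 0$ then $P_f = P_g$ and both sides are zero; otherwise set $\alpha = U/\|f-g\|$, take $u = \alpha(f-g) \in \mathcal{H}_U$ and $b = \alpha(A(f)-A(g))$. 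Because $\alpha > 0$, we have $\{u(x) \geq b\} = A^*$ exactly, and this level set is admissible in the STV supremum.

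Putting this together,
\begin{equation*}
\mathrm{STV}_{\mathcal{H}_U,\1}(P_f,P_g) \geq \bigl|\mathbb{E}_{P_f}[\1(u-b \geq 0)] - \mathbb{E}_{P_g}[\1(u-b \geq 0)]\bigr| = P_f(A^*) - P_g(A^*) = \mathrm{TV}(P_f,P_g),
\end{equation*}
which combined with the trivial upper bound yields the claimed equality.

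I do not expect any significant obstacle here: the key observation is simply that the step function $\1$ is invariant under positive rescaling of its argument, so the constraint $\|u\| \leq U$ in the definition of $\mathcal{H}_U$ imposes no restriction on which sublevel sets $\{u \geq b\}$ can be realized by $u \in \mathcal{H}_U$, $b \in \mathbb{R}$. The only small point to handle cleanly is the degenerate case $\|f-g\|=0$, and to note that $A^*$ is indeed Borel measurable since $f,g \in \mathcal{H}$ are measurable functions (so the expectations above are well-defined).
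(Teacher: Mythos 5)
Your proof is correct and takes essentially the same approach as the paper: both rely on the scale invariance of the step function $\1$ to rewrite the Neyman--Pearson set $\{p_f \geq p_g\}$ as a level set $\{u(x)\geq b\}$ with $u=U(f-g)/\|f-g\|\in\mathcal{H}_U$. The paper writes $2\,\mathrm{TV}(P_f,P_g)=\int|p_f-p_g|\,\rmd\mu$ and splits the integral over $\{p_f\geq p_g\}$ and $\{p_g\geq p_f\}$, bounding each piece by $\mathrm{STV}_{\mathcal{H}_U,\1}$; you instead use the identity $\mathrm{TV}(P_f,P_g)=P_f(A^*)-P_g(A^*)$ directly, which is a mild streamlining of the same idea rather than a different route.
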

\begin{lemma}
\label{lemma:limitSTV-KEF_TV}
Suppose Assumption~(B). For $P_f, P_g\in\mathcal{P}_{\mathcal{H}}$, we have
\begin{align*}
 \lim_{U\rightarrow\infty}\mathrm{STV}_{\mathcal{H}_U,\sigma}(P_f,P_g) = \mathrm{TV}(P_f,P_g). 
\end{align*}
The convergence rate is uniformly given by 
 \begin{align*}
  0\leq \mathrm{TV}(P_f,P_g) - \mathrm{STV}_{\mathcal{H}_U,\sigma}(P_f,P_g)\leq 
  \lambda\left(\frac{U}{\|f-g\|}\right)(1-\mathrm{TV}(P_f,P_g))
 \end{align*}
 for any $P_f,P_g\in\mathcal{P}_{\mathcal{H}}$ as long as $U/\|f-g\|>C_0$, where $\lambda(c),c>C_0$ is the decay rate of 
$\sigma$. 
\end{lemma}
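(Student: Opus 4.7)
The plan is to reduce this statement to Proposition~\ref{prop:TV-STV_bias_bound} by exploiting the affine structure of the log-density ratio within the kernel exponential family. For $P_f, P_g \in \mathcal{P}_\mathcal{H}$ with common dominating measure $\mu$, the log-density ratio is
\[
s(x) = \log\frac{p_f(x)}{p_g(x)} = (f-g)(x) - (A(f) - A(g)),
\]
so $s$ decomposes as an element $(f-g) \in \mathcal{H}$ plus the real constant $A(g) - A(f)$. The $\oplus\mathbb{R}$ summand in Proposition~\ref{prop:TV-STV_bias_bound} corresponds exactly to the bias $b\in\mathbb{R}$ present in the definition of the STV distance, which will absorb this constant automatically.

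First I would dispose of the trivial case $f=g$, in which $P_f = P_g$ and both $\mathrm{TV}(P_f,P_g)$ and $\mathrm{STV}_{\mathcal{H}_U,\sigma}(P_f,P_g)$ vanish, so the inequality is vacuous. Henceforth assume $f\ne g$, so $\|f-g\|>0$. Set $\mathcal{U}_0 = \mathcal{H}_{\|f-g\|}$, the smallest RKHS ball containing $f-g$. By the observation above, $s \in \mathcal{U}_0 \oplus \mathbb{R}$. For any $c>0$, the scaling identity
\[
c\,\mathcal{U}_0 = \{c u : u\in\mathcal{H},\ \|u\|\le \|f-g\|\} = \mathcal{H}_{c\|f-g\|}
\]
holds, so choosing $c = U/\|f-g\|$ makes $c\,\mathcal{U}_0$ coincide with $\mathcal{H}_U$. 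Provided $c = U/\|f-g\| > C_0$, Proposition~\ref{prop:TV-STV_bias_bound} applies with this $\mathcal{U}_0$ and $c$, yielding directly
\[
0 \le \mathrm{TV}(P_f,P_g) - \mathrm{STV}_{\mathcal{H}_U,\sigma}(P_f,P_g) \le \lambda\!\left(\frac{U}{\|f-g\|}\right)\bigl(1 - \mathrm{TV}(P_f,P_g)\bigr).
\]
The limit $\lim_{U\to\infty}\mathrm{STV}_{\mathcal{H}_U,\sigma}(P_f,P_g)=\mathrm{TV}(P_f,P_g)$ then follows from the defining property $\lambda(c)\to 0$ as $c\to\infty$, applied to $c = U/\|f-g\|$ with $f,g$ fixed.

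The only substantive step is identifying the correct scale, namely that the relevant ball to feed into Proposition~\ref{prop:TV-STV_bias_bound} is $\mathcal{H}_{\|f-g\|}$ rather than any universal ball, so that the inflation factor $c$ in the proposition coincides with $U/\|f-g\|$ after rescaling. All analytic content (the sandwich between the TV and smoothed IPM, and the decay-rate control) is already contained in Proposition~\ref{prop:TV-STV_bias_bound}, so no further obstacle is expected.
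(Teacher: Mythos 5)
Your proof of the displayed inequality is essentially the paper's own argument: the paper likewise writes $s(x)=f(x)-g(x)-(A(f)-A(g))$, notes that the constant is absorbed by the bias $b$, and invokes Proposition~\ref{prop:TV-STV_bias_bound} with ``$\mathcal{H}_U$ regarded as $\mathcal{H}_{\frac{U}{\|f-g\|}\|f-g\|}$,'' which is exactly your rescaling $c=U/\|f-g\|$ of $\mathcal{U}_0=\mathcal{H}_{\|f-g\|}$. That part is correct and complete (Assumption~(A) holds for the ball, and the $f=g$ case is handled as you say).

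There is, however, a gap in how you obtain the limit statement. You derive $\lim_{U\to\infty}\mathrm{STV}_{\mathcal{H}_U,\sigma}(P_f,P_g)=\mathrm{TV}(P_f,P_g)$ from the inequality via $\lambda(c)\to 0$, but the first claim of the lemma is asserted under Assumption~(B) alone, and Assumption~(B) does not guarantee that a finite decay rate $\lambda$ exists: the paper's own Example~5 exhibits $\sigma$ with $\sigma(-z)\asymp z^{-\beta}$, which satisfies Assumption~(B) yet admits no finite decay rate, so your argument proves nothing for such $\sigma$. The paper instead proves the limit directly: writing $s=(f-g-(A(f)-A(g)))/\|f-g\|\in\mathcal{H}_1$, it shows $\mathrm{TV}(P_f,P_g)=\mathbb{E}_{P_f}[\1[Us\geq 0]]-\mathbb{E}_{P_g}[\1[Us\geq 0]]$, sandwiches $\mathbb{E}_{P_f}[\sigma(Us)]-\mathbb{E}_{P_g}[\sigma(Us)]\leq\mathrm{STV}_{\mathcal{H}_U,\sigma}(P_f,P_g)\leq\mathrm{TV}(P_f,P_g)$, and applies dominated convergence using $\sigma(Uz)\to\1[z>0]+\tfrac12\1[z=0]$, the boundary term vanishing because $\int_{\{p_f=p_g\}}(p_f-p_g)\,\dmu=0$. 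You should either add such a direct argument for the limit or restrict the first claim to $\sigma$ admitting a decay rate.
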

Lemma~\ref{lemma:RKHS_smoothed-TV} shows 
a similar result to the former part in the above lemma. 
In Lemma~\ref{lemma:limitSTV-KEF_TV}, the RKHS is not necessarity universal. 

When $\sigma$ is the sigmoid function, we have $\lambda(c)=1/c$ for $c>1$. Thus, 
 \begin{align}
  \label{eqn:diff_TV_STV_sigmoid}
  0\leq \mathrm{TV}(P_f,P_g) - \mathrm{STV}_{\mathcal{H}_U,\sigma}(P_f,P_g)
  \leq 
  \frac{\|f-g\|}{U}  (1-\mathrm{TV}(P_f,P_g))
  \leq 
  \frac{\|f-g\|}{U}
 \end{align}
 holds for $U>\|f-g\|$. 

\begin{remark}
 One can use distinct RKHSs for the STV and the probability model. 
 Let $\mathcal{H}$ and $\widetilde{\mathcal{H}}$ be RKHSs such that $\mathcal{H}\subset\widetilde{\mathcal{H}}$. 
 Then, for $P_f, P_g\in\mathcal{P}_{\mathcal{H}}$, it holds that 
 $\mathrm{STV}_{\widetilde{\mathcal{H}}_{U},\1}(P_f,P_g) = \mathrm{TV}(P_f,P_g)$ and 
 $\lim_{U\rightarrow\infty}\mathrm{STV}_{\widetilde{\mathcal{H}}_{U},\sigma}(P_f,P_g)= \mathrm{TV}(P_f,P_g)$
 under Assumption~(B) for $\sigma$. 
\end{remark}

Using the above lemma, one can approximate the learning with the TV distance 
by learning using STV distance, which does not include the non-differentiable indicator function in the loss function.


\section{Estiamtion with STV distance for Kernel Exponential Family}
\label{sec:PredictionErrorBounds_Finite-dim_RKHS}
We consider the estimation of the probability density using the model $\mathcal{P}_{\mathcal{H}}$. 
We assume that i.i.d. samples generated from a contaminated distribution of $P_{f_0}$. For example, the Huber contamination is expressed by the mixture
of $P_{f_0}$ and outlier distribution $Q$: 
\begin{align}
 \label{eqn:contam}
X_1,\ldots,X_n\sim (1-\varepsilon)P_{f_0}+\varepsilon Q,
\end{align}
where $Q\in\mathcal{P}$ is an arbitrary distribution. 
Our target is to estimate $P_{f_0}\in\mathcal{P}_{\mathcal{H}}$ from the data. 
For that purpose, there are numerous estimators~
\citep{%
gao19:_robus_estim_via_gener_adver_networ,%
DBLP:journals/jmlr/GaoYZ20,%
Huber.Wiley.ea1981Robuststatistics,%
liu21:_robus_w_gan_based_estim,%
tukey75:_mathem,%
wu20:_minim_optim_gans_robus_mean_estim,%
zhu2019deconstructing,
banghua22:_gener}. %
In our paper, we assume the following condition for the contaminated distribution. 
\begin{description}
 \item[Assumption~(C)]: 
    For the target distribution $P_{f_0}\in\mathcal{P}_{\mathcal{H}}$ and 
    the contamination rate $\varepsilon\in(0,1)$, 
    the contaminated distribution $P_\varepsilon$ satisfies 
    $\mathrm{TV}(P_{f_0}, P_\varepsilon)<\varepsilon$, and 
    i.i.d. samples, $X_1,\ldots,X_n$, are generated from $P_\varepsilon$. 
\end{description}
The Huber contamination~\eqref{eqn:contam} satisfies Assumption~(C). 

The model $\mathcal{P}_{\mathcal{H}}$ is regarded as a working hypothesis that explains the data generation process.
Suppose that, under an ideal situation, the data is generated from the "true distribution" $P_0$, which may not be included in $\mathcal{P}_{\mathcal{H}}$. If data contamination occurs, $P_0$ may shift to the contaminated distribution $P_{\varepsilon}$. 
On the assumption that $\min_{P\in\mathcal{P}_{\mathcal{H}}}\mathrm{TV}(P,P_0)<\varepsilon$ and $\mathrm{TV}(P_0,P_{\varepsilon}) < \varepsilon$, 
all the theoretical findings from this Section hold 
for the distribution $P_{f_0}\in\mathcal{P}_{\mathcal{H}}$ satisfying $\mathrm{TV}(P_{f_0},P_0) < \varepsilon$. 
The above discussion means that we do not need to assume that the model $\mathcal{P}_{\mathcal{H}}$ should exactly include the true distribution. 
As a working hypothesis, however, we still need to select an appropriate model $\mathcal{P}_{\mathcal{H}}$. 
For finite-dimensional models, robust model-selection methods have been studied  by~\cite{ronchetti1997robustness,sugasawa2021selection}.
This paper does not address developing practical methods for robust model selection.

In the sequel, we consider the estimator using the STV distance, which is called STV learning. All proofs of theorems in this section are deferred to Appendix~\ref{app:proof-learning}.

\subsection{Minimum STV distance Estimators}

For the RKHSs $\mathcal{H}$ and $\widetilde{\mathcal{H}}$  
such that $\mathcal{H}\subset\widetilde{\mathcal{H}}$ and 
$\dim\mathcal{H}=d\leq\widetilde{d}=\dim\widetilde{\mathcal{H}}<\infty$, 
let us consider the STV distance to obtain a robust estimator, 
\begin{align}
 \label{eqn:kernel-Tukey}
 \inf_{P_f\in \mathcal{P}_{\mathcal{H}}}\mathrm{STV}_{\widetilde{\mathcal{H}}_U, \1}(P_f, P_n)\ \longrightarrow\ \widehat{f}, 
\end{align}
where $P_n$ is the empirical distribution of data. 
Note that $\mathrm{STV}_{\widetilde{\mathcal{H}}_U,\1}(P_f, P_n)$ does not necessarily match with the TV
distance between $P_f$ and $P_n$, since
usually $P_n\in\mathcal{P}_{\mathcal{H}}$ does not hold. 
Finding  $\widehat{f}$ is not computationally feasible. 
Here, let us consider the estimation accuracy of $\widehat{f}$. 
The TV distance between the target distribution and the estimated one is evaluated as follows. 
\begin{theorem}
 \label{thm:convergence-rate}
 Assume Assumption~(C). 
 Suppose that the dimensnion $\widetilde{d}=\dim\widetilde{\mathcal{H}}$ is finite. 
 The TV distance between the target distribution and the estimator $P_{\widehat{f}}$ given by \eqref{eqn:kernel-Tukey} 
 is bounded above by 
\begin{align*}
 \mathrm{TV}(P_{f_0},P_{\widehat{f}})
 \lesssim
 \varepsilon + \sqrt{\frac{\widetilde{d}}{n}} + \sqrt{\frac{\log(1/\delta)}{n}}
\end{align*}
with probability greater than $1-\delta$. 
\end{theorem}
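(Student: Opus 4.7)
The plan is to run a standard minimum-distance argument: replace the TV target by an STV quantity, exploit the optimality of $\widehat{f}$ through a triangle inequality, and then control the empirical STV deviation by VC-type uniform convergence.

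First, since $\dim\widetilde{\mathcal{H}}$ is finite and $P_{f_0},P_{\widehat{f}}\in\mathcal{P}_{\mathcal{H}}\subset\mathcal{P}_{\widetilde{\mathcal{H}}}$, Lemma~\ref{lemma:kexp_TV} gives
\begin{align*}
\mathrm{TV}(P_{f_0},P_{\widehat{f}})=\mathrm{STV}_{\widetilde{\mathcal{H}}_U,\1}(P_{f_0},P_{\widehat{f}}).
\end{align*}
Applying the triangle inequality (valid for STV) through $P_n$ and using the defining optimality $\mathrm{STV}(P_{\widehat{f}},P_n)\le \mathrm{STV}(P_{f_0},P_n)$ (since $P_{f_0}\in\mathcal{P}_{\mathcal{H}}$ is a feasible competitor), I would obtain
\begin{align*}
\mathrm{TV}(P_{f_0},P_{\widehat{f}})\le 2\,\mathrm{STV}_{\widetilde{\mathcal{H}}_U,\1}(P_{f_0},P_n).
\end{align*}
A second triangle step through the contaminated distribution $P_\varepsilon$, together with the basic inequality $\mathrm{STV}_{\cdot,\1}\le\mathrm{TV}$ and Assumption~(C), reduces the task to bounding the empirical deviation
\begin{align*}
\mathrm{STV}_{\widetilde{\mathcal{H}}_U,\1}(P_n,P_\varepsilon)\quad\text{against}\quad \varepsilon.
\end{align*}

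The heart of the argument is then the uniform control of this deviation. The function class indexing the supremum is
\begin{align*}
\mathcal{G}=\{x\mapsto\1\!\left[u(x)-b\ge 0\right]\,:\,u\in\widetilde{\mathcal{H}}_U,\,b\in\Rbb\}.
\end{align*}
Since $\widetilde{\mathcal{H}}$ has a feature representation $u(x)=\langle w,\phi(x)\rangle$ with $\phi(x)\in\Rbb^{\widetilde{d}}$, the sets $\{u-b\ge 0\}$ are exactly the half-spaces in $\Rbb^{\widetilde{d}+1}$ induced by the lifted feature $(\phi(x),1)$, so $\mathrm{VC}(\mathcal{G})\le \widetilde{d}+1$. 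The standard VC uniform deviation bound (symmetrization, Sauer's lemma, and either a Dudley entropy integral or McDiarmid concentration around the expectation) then yields, with probability at least $1-\delta$,
\begin{align*}
\mathrm{STV}_{\widetilde{\mathcal{H}}_U,\1}(P_n,P_\varepsilon)\lesssim \sqrt{\widetilde{d}/n}+\sqrt{\log(1/\delta)/n}.
\end{align*}
Plugging this back into the chain above yields the claimed bound.

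The only delicate step is the VC-dimension bound for $\mathcal{G}$: one must check that imposing the norm constraint $\|u\|\le U$ does not alter the half-space structure of the sublevel sets (it only restricts $w$ to a ball, which does not increase the VC dimension), and that the extra bias parameter $b$ adds at most one to the VC dimension. Everything else — triangle inequalities, the $\mathrm{STV}\le\mathrm{TV}$ comparison, concentration of a bounded empirical process — is routine, so I expect no real obstacle beyond cleanly invoking the VC machinery and absorbing universal constants into the $\lesssim$ sign.
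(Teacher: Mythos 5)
Your proposal is correct and follows essentially the same route as the paper's proof: the identity $\mathrm{TV}=\mathrm{STV}_{\widetilde{\mathcal{H}}_U,\1}$ on the kernel exponential family, two triangle inequalities with the optimality of $\widehat{f}$, the comparison $\mathrm{STV}_{\cdot,\1}\leq\mathrm{TV}$ with Assumption~(C), and a VC/Rademacher bound (via McDiarmid and an entropy integral) on the empirical deviation over the half-space class. The only cosmetic difference is that you bound the VC dimension by $\widetilde{d}+1$ via the lifted feature map while the paper cites a bound of $\widetilde{d}$, which is immaterial under the $\lesssim$ sign.
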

The STV-learning with $\widetilde{\mathcal{H}}=\mathcal{H}$ 
attains the lower error bound with $\sqrt{d/n}$ instead of $\sqrt{\widetilde{d}/n}$. 
In what follows, we assume $\mathcal{H}=\widetilde{\mathcal{H}}$. 

Next, let us consider the estimator using $\mathrm{STV}_{\mathcal{H}_U,\sigma}$, 
where $\sigma$ is the function satisfying Assumption~(B): 
\begin{align}
\label{eqn:STV-learn}
\min_{f\in\mathcal{H}} \mathrm{STV}_{\mathcal{H}_U,\sigma}(P_f,P_n) \longrightarrow  P_{\widehat{f}}.  
\end{align}
A typical example of $\sigma$ is the sigmoid function, which 
leads to a computationally tractable learning algorithm. 
The optimization problem is written by the min-max problem,
\begin{align*}
 \min_{f\in\mathcal{H}}\max_{u\in\mathcal{H}_U, b\in\Rbb} 
\mathbb{E}_{P_f}[\sigma(u(X)-b)]-\mathbb{E}_{P_n}[\sigma(u(X)-b)]. 
\end{align*}
When $\mathcal{H}$ is the finite-dimensional RKHS, 
the objective function 
$\mathbb{E}_{P_f}[\sigma(u(X)-b)]-\mathbb{E}_{P_n}[\sigma(u(X)-b)]$
is differentiable w.r.t. the finite-dimensional parameters of the model under the mild assumption. 

In the same way as Theorem~\ref{thm:convergence-rate}, we can see that the estimator~\eqref{eqn:STV-learn} satisfies 
\begin{align}
 \mathrm{TV}(P_{f_0}, P_{\widehat{f}}) 
 &= 
 \underbrace{
 \mathrm{TV}(P_{f_0}, P_{\widehat{f}}) - 
 \mathrm{STV}_{\mathcal{H}_U,\sigma}(P_{f_0}, P_{\widehat{f}})}_{\text{bias}}
 + \mathrm{STV}_{\mathcal{H}_U,\sigma}(P_{f_0}, P_{\widehat{f}}) \nonumber\\
&\lesssim
 \text{bias} + \varepsilon+\sqrt{\frac{d}{n}} \label{eqn:Bias_term_abstract}
\end{align}
with a high probability. We need to control the bias term to guarantee the convergence of 
$\mathrm{TV}(P_{f_0}, P_{\widehat{f}})$. For that purpose, we introduce the regularization 
to the estimator~\eqref{eqn:STV-learn}.

\subsection{Regularized STV Learning}
When we use the learning with the STV distance, the bias term appears in the upper bound of the
estimation error. In order to control the bias term, let us consider the learning with regularization, 
\begin{align}
 \label{eqn:reg-STV-learn}
 \min_{f\in\mathcal{H}_r}\mathrm{STV}_{\mathcal{H}_U,\sigma}(P_f,P_n)\ \ 
 \longrightarrow  \widehat{f}_r, 
\end{align}
where $f$ is rectricted to $\mathcal{H}_r$ with a positive constant $r$ that possibly depends on the sample size $n$. 
When the volume of $\mathcal{X}$, i.e., $\int_{\mathcal{X}}\mathrm{d}\mu$, is finite, the regularized $\mathrm{STV}$-based learning, 
\begin{align}
 \label{eqn:reg-STV-learn_additive}
 \min_{f\in\mathcal{H}} \mathrm{STV}_{\mathcal{H}_U,\sigma}(P_f,P_n)
 +\frac{1}{r^2} \|f\|^2
 \ \longrightarrow \  \widehat{f}_{\mathrm{reg},r}, 
\end{align}
leads to a solution that is similar to $\widehat{f}_r$. Indeed, under Assumption~(B) for $\sigma$, we have 
\begin{align*}
 \frac{1}{r^2} \|\widehat{f}_{\mathrm{reg},r}\|^2
& \leq 
 \mathrm{STV}_{\mathcal{H}_U,\sigma}(P_{\widehat{f}_{\mathrm{reg},r}},P_n)
 +\frac{1}{r^2} \|\widehat{f}_{\mathrm{reg},r}\|^2\\
& \leq 
 \mathrm{STV}_{\mathcal{H}_U,\sigma}(P_{0},P_n)
 +\frac{1}{r^2}\|0\|^2\leq 1, 
\end{align*}
where $P_0=p_0\dmu$ is the uniform distribution on $\mathcal{X}$ with respect to $\mu$. 
Therefore, we have $\|\widehat{f}_{\mathrm{reg},r}\|\leq r$. 
The following theorem shows the estimation accuracy of regularized STV learning. 
\begin{theorem}
 \label{thm:reg-STV-learning}
 Assume Assumption~(C). Let $\sigma$ be the sigmoid function, which satisfies Assumption~(B). 
 Suppose that the dimension $d=\dim\mathcal{H}$ is finite. We assume that $\|f_0\|\leq r$ and $U\geq 2r$. 
 Then, the TV distance between the target distribution and the above regularized estimators 
 is bounded above by
\begin{align*}
 \mathrm{TV}(P_{f_0}, P_{\widehat{f}_r})
 &\lesssim
 \frac{r}{U}+
 \varepsilon + \sqrt{\frac{d}{n}} + \sqrt{\frac{\log(1/\delta)}{n}},\\
 \mathrm{TV}(P_{f_0}, P_{\widehat{f}_{\mathrm{reg},r}})
 &\lesssim
 \frac{r}{U}+ \varepsilon + \sqrt{\frac{d}{n}} + \frac{\|f_0\|^2}{r^2}+\sqrt{\frac{\log(1/\delta)}{n}}
\end{align*}
with probability greater than $1-\delta$. 
\end{theorem}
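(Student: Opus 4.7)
The plan is to decompose the TV error exactly as in \eqref{eqn:Bias_term_abstract}: writing
\[
\mathrm{TV}(P_{f_0}, P_{\widehat{f}}) = \bigl[\mathrm{TV}(P_{f_0}, P_{\widehat{f}}) - \mathrm{STV}_{\mathcal{H}_U,\sigma}(P_{f_0}, P_{\widehat{f}})\bigr] + \mathrm{STV}_{\mathcal{H}_U,\sigma}(P_{f_0}, P_{\widehat{f}}),
\]
I would bound the bias (first bracket) deterministically using the radius control available in each variant of the estimator, and the second term by combining optimality of $\widehat{f}$ with an empirical-process bound on $\mathrm{STV}(P_\varepsilon, P_n)$.

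For the bias, Lemma~\ref{lemma:limitSTV-KEF_TV} with the sigmoid decay rate $\lambda(c)=1/c$ (valid for $c>1$, cf.\ \eqref{eqn:diff_TV_STV_sigmoid}) gives $\text{bias} \leq \|\widehat{f}-f_0\|/U$. In the constrained case the triangle inequality together with $\|f_0\|\leq r$ and $\|\widehat{f}_r\|\leq r$ yields $\|\widehat{f}_r-f_0\|\leq 2r$, whence $\text{bias}\leq 2r/U$; the hypothesis $U\geq 2r$ also keeps us within the range where Lemma~\ref{lemma:limitSTV-KEF_TV} is applicable. The regularized case is identical once one invokes the $\|\widehat{f}_{\mathrm{reg},r}\|\leq r$ bound already established in the excerpt just before the theorem.

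For the stochastic STV piece, I would use optimality. In the constrained case $f_0\in\mathcal{H}_r$ is feasible, so $\mathrm{STV}(P_{\widehat{f}_r}, P_n) \leq \mathrm{STV}(P_{f_0}, P_n)$, and the triangle inequality of the STV distance then gives $\mathrm{STV}(P_{f_0}, P_{\widehat{f}_r}) \leq 2\,\mathrm{STV}(P_{f_0}, P_n)$. In the regularized case, comparing the penalized objective at $\widehat{f}_{\mathrm{reg},r}$ and at $f_0$ yields $\mathrm{STV}(P_{\widehat{f}_{\mathrm{reg},r}}, P_n) \leq \mathrm{STV}(P_{f_0}, P_n) + \|f_0\|^2/r^2$, which accounts for the extra $\|f_0\|^2/r^2$ summand in the stated bound. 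Then $\mathrm{STV}(P_{f_0}, P_n) \leq \mathrm{STV}(P_{f_0}, P_\varepsilon) + \mathrm{STV}(P_\varepsilon, P_n) \leq \varepsilon + \mathrm{STV}(P_\varepsilon, P_n)$, since $0\leq\sigma\leq 1$ implies $\mathrm{STV}_\sigma\leq\mathrm{TV}$ and Assumption~(C) gives $\mathrm{TV}(P_{f_0}, P_\varepsilon)<\varepsilon$.

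The principal technical step, and the main obstacle, is the uniform concentration of
\[
\mathrm{STV}_{\mathcal{H}_U,\sigma}(P_\varepsilon, P_n) = \sup_{u\in\mathcal{H}_U,\, b\in\mathbb{R}} \bigl|(\mathbb{E}_{P_\varepsilon}-\mathbb{E}_{P_n})\sigma(u(X)-b)\bigr|.
\]
I would handle it by symmetrization and Talagrand's contraction lemma: since the sigmoid is $\tfrac14$-Lipschitz and $[0,1]$-valued, the Rademacher complexity of $\{\sigma(u-b)\}$ is controlled by that of the affine class $\{\langle\theta,\phi(\cdot)\rangle - b : \|\theta\|_{\mathcal{H}}\leq U,\, b\in\mathbb{R}\}$, which for a $d$-dimensional RKHS scales as $\sqrt{d/n}$ (the bias $b$ can be taken from a bounded range because $\sigma$ saturates, or folded into an additional feature coordinate). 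A bounded-differences (McDiarmid) inequality then converts the expected supremum into a high-probability bound, contributing the $\sqrt{\log(1/\delta)/n}$ tail. Assembling the bias estimate, the optimality/triangle step, and this empirical-process bound gives both inequalities of the theorem.
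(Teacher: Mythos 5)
Your overall architecture is exactly the paper's: the same bias--plus--STV decomposition as in \eqref{eqn:Bias_term_abstract}, the bias controlled by $\|\widehat{f}-f_0\|/U\leq 2r/U$ via the sigmoid decay rate \eqref{eqn:diff_TV_STV_sigmoid} together with the radius bounds $\|\widehat{f}_r\|\leq r$ and $\|\widehat{f}_{\mathrm{reg},r}\|\leq r$, the stochastic term handled by optimality of the estimator (with the penalized comparison producing the extra $\|f_0\|^2/r^2$), the STV triangle inequality, and Assumption~(C) with $\mathrm{STV}_\sigma\leq\mathrm{TV}$. All of that matches the paper's proof step for step.

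The one place where your argument as written would fail is the empirical-process bound on $\mathrm{STV}_{\mathcal{H}_U,\sigma}(P_\varepsilon,P_n)$. Talagrand's contraction reduces the Rademacher complexity of $\{\sigma(u-b)\}$ to that of the affine class $\{u-b:\|u\|\leq U\}$, but the Rademacher complexity of an RKHS ball of radius $U$ scales as $UK/\sqrt{n}$, \emph{not} as $\sqrt{d/n}$; it depends on the radius, and for the parameter choices the theorem is meant to support (e.g.\ $U\gtrsim r\sqrt{n}$ in Corollary~\ref{cor:reg-STV-learning}) the bound $U/\sqrt{n}$ does not even tend to zero. The $U$-free rate $\sqrt{d/n}$ cannot come from contraction to the linear class; it comes from the fact that $\{\sigma(u-b)\}$ is a $[0,1]$-valued class whose pseudo-dimension (equivalently, empirical $L_2$ covering number $N(\delta)\lesssim(3/\delta)^{d}$) is controlled by $d$ alone, regardless of $U$ --- this is precisely the Dudley entropy-integral argument the paper runs in the proof of Theorem~\ref{thm:convergence-rate} for the indicator class, and the same covering-number argument (using monotonicity of $\sigma$) is what is needed here. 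Replace the contraction step with that covering-number bound and the rest of your proof goes through; everything else is the paper's argument.
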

Let us consider the choice of $r$ and $U$. 
If $r/U$ is of the order $O(\sqrt{d/n})$, the convergence rate 
of $\mathrm{TV}(P_{f_0}, P_{\widehat{f}_r})$ is $O_p(\varepsilon+\sqrt{d/n})$. 
This is easily realized by setting $U=r\sqrt{n/d}$ and $r=r_n$ 
as any increasing sequence to infinity as $n\rightarrow\infty$. 
On the other hand, the order of $1/r^2$ appears in the upper bound of 
$\mathrm{TV}(P_{f_0}, P_{\widehat{f}_{\mathrm{reg},r}})$. 
By setting $r\geq n^{1/4}$ and $U\geq r\sqrt{n}$, 
we find that $\mathrm{TV}(P_{f_0}, P_{\widehat{f}_{\mathrm{reg},r}})  = O_p(\varepsilon+\sqrt{d/n})$. 
Therefore, with the appropriate setting of $U$ and $r$, 
both 
$\mathrm{TV}(P_{f_0}, P_{\widehat{f}_r})$ and 
$\mathrm{TV}(P_{f_0}, P_{\widehat{f}_{\mathrm{reg},r}})$ 
attain the order of $\varepsilon+\sqrt{d/n}$. 
\begin{corollary}
 \label{cor:reg-STV-learning}
 Assume the assumption of Theorem~\ref{thm:reg-STV-learning} and $d=\dim\mathrm{\mathcal{H}}$. 
 By setting $r\geq n^{1/4}$ and $U\geq r\sqrt{n}$, 
 the inequalities
 \begin{align*}
  \mathrm{TV}(P_{f_0}, P_{\widehat{f}_r}) \lesssim \varepsilon+\sqrt{\frac{d}{n}},\quad\text{and}\quad
  \mathrm{TV}(P_{f_0}, P_{\widehat{f}_{\mathrm{reg},r}}) \lesssim \varepsilon+\sqrt{\frac{d}{n}}
 \end{align*}
 hold with a high probability. 
\end{corollary}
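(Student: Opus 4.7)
The plan is to obtain the corollary by direct substitution into the two inequalities of Theorem~\ref{thm:reg-STV-learning}; the substantive content already lies in the parent theorem, so what remains is to verify that, under the prescribed choices $r \geq n^{1/4}$ and $U \geq r\sqrt{n}$, each additional term in those bounds is absorbed into $\varepsilon + \sqrt{d/n}$ by the hidden constant.

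First I would check that the hypotheses of Theorem~\ref{thm:reg-STV-learning} are met: the condition $\|f_0\|\leq r$ holds once $n$ is sufficiently large, since $r\geq n^{1/4}\to\infty$, and $U\geq 2r$ follows from $U\geq r\sqrt{n}$ as soon as $n\geq 4$. Next, the common bias term $r/U$ is controlled via
\begin{equation*}
\frac{r}{U}\ \leq\ \frac{1}{\sqrt{n}}\ \leq\ \sqrt{\frac{d}{n}},
\end{equation*}
using $d\geq 1$, so the STV--TV gap contribution is swallowed in both bounds. The confidence term $\sqrt{\log(1/\delta)/n}$ is absorbed into the hidden constant for any fixed $\delta>0$, which is what the phrase ``with a high probability'' encodes. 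These two observations already yield the bound for $\mathrm{TV}(P_{f_0},P_{\widehat{f}_r})$.

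For the regularized estimator $\widehat{f}_{\mathrm{reg},r}$ there is one extra term, $\|f_0\|^2/r^2$. Using $r\geq n^{1/4}$ gives $r^2\geq \sqrt{n}$, hence
\begin{equation*}
\frac{\|f_0\|^2}{r^2}\ \leq\ \frac{\|f_0\|^2}{\sqrt{n}}\ \lesssim\ \sqrt{\frac{d}{n}},
\end{equation*}
where $\|f_0\|$ is treated as a constant depending only on the target. Collecting the estimates delivers the stated bound.

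There is really no obstacle intrinsic to the corollary; the only subtle point is that the rates $r\asymp n^{1/4}$ and $U\asymp r\sqrt{n}$ are exactly balanced so that the two ``structural'' terms, the bias $r/U$ and the regularization penalty $\|f_0\|^2/r^2$, both decay at the $1/\sqrt{n}$ rate, matching the statistical rate $\sqrt{d/n}$. If one wanted a bound uniform over $f_0$ with explicit dependence on $\|f_0\|$, one would instead scale $r$ with $\|f_0\|$, but the asymptotic rate would be unchanged.
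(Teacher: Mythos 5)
Your proposal is correct and is essentially the paper's own argument: the corollary is obtained by direct substitution of $r\geq n^{1/4}$ and $U\geq r\sqrt n$ into the two bounds of Theorem~\ref{thm:reg-STV-learning}, noting $r/U\leq 1/\sqrt n\leq\sqrt{d/n}$ and $\|f_0\|^2/r^2\leq\|f_0\|^2/\sqrt n\lesssim\sqrt{d/n}$ with $\|f_0\|$ treated as a constant (the paper makes exactly this point in the discussion preceding the corollary). No discrepancies to report.
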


Furthermore, we consider the STV learning in which the constraint $u\in\mathcal{H}_U$ in $\mathrm{STV}_{\mathcal{H}_U,\sigma}$
is replaced with the regularization term, 
\begin{align}
& \min_{f\in\mathcal{H}} \max_{u\in\mathcal{H},b\in\Rbb}
 \mathbb{E}_{P_f}[\sigma(u(X)-b)]- \mathbb{E}_{P_n}[\sigma(u(X)-b)]
 -\frac{1}{U^2} \|u\|^2 +\frac{1}{r^2} \|f\|^2\nonumber\\
& \longrightarrow \ \check{f}_{\mathrm{reg},r}. 
 \label{eqn:reg-STV-learn_additive-full}
\end{align}
Let $\check{u}\in\mathcal{H}$ and $\check{b}\in\Rbb$ be the optimal solution in the inner maximum problem for a fixed $f$. 
Then, we have 
\begin{align*}
 \frac{1}{U^2}\|\check{u}\|^2
 \leq 
 \mathbb{E}_{P_f} 
 [\sigma(\check{u}(X)-\check{b})] - \mathbb{E}_{P_n}[\sigma(\check{u}(X)-\check{b})]
 \leq 1. 
\end{align*}
The first inequality is obtained by comparing $u=\check{u}$ and $u=0$. 
In the same way as \eqref{eqn:reg-STV-learn_additive}, 
the norm of $\check{f}_{\mathrm{reg},r}$ is bounded above as follows. 
\begin{align*}
& \phantom{\leq}\frac{1}{r^2}\|\check{f}_{\mathrm{reg},r}\|^2\\
 &\leq 
 \frac{1}{r^2}\|\check{f}_{\mathrm{reg},r}\|^2
 +\max_{u,b}\mathbb{E}_{P_{\check{f}_{\mathrm{reg},r}}}[\sigma(u(X)-b)]- \mathbb{E}_{P_n}[\sigma(u(X)-b)]-\frac{1}{U^2}\|u\|^2\\
 &\leq 
 \max_{u,b}\mathbb{E}_{P_{0}}[\sigma(u(X)-b)]- \mathbb{E}_{P_n}[\sigma(u(X)-b)]-\frac{1}{U^2}\|u\|^2\qquad (\text{setting $f=0$})\\
 &\leq 1. 
\end{align*}
Thus, we have  $\|\check{f}_{\mathrm{reg},r}\|\leq r$ and $\|\check{u}\|\leq U$. 
\begin{theorem}
 \label{thm:fullreg-STV-learning}
 Assume Assumption~(C). Let $\sigma$ be the sigmoid function, which satisfies Assumption~(B). 
 Suppose that the dimension $d=\dim\mathcal{H}$ is finite. 
 We assume that $\|f_0\|\leq r$ and $U\geq 2r$. 
 The TV distance between the target distribution and the regularized estimator $\check{f}_{\mathrm{reg},r}$
 in \eqref{eqn:reg-STV-learn_additive-full} is bounded above as follows, 
\begin{align*}
 \mathrm{TV}(P_{f_0}, P_{\check{f}_{\mathrm{reg},r}})
 &\lesssim
 \left(\frac{r}{U}\right)^{2/3}
 +\frac{\|f_0\|^2}{r^2} +\varepsilon + \sqrt{\frac{d}{n}}+\sqrt{\frac{\log(1/\delta)}{n}}. 
\end{align*}
with probability greater than $1-\delta$. 
\end{theorem}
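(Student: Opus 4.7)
The plan is to decompose the TV error into a bias term relative to an STV distance at some auxiliary radius $V$ and an empirical fluctuation term, and then to optimize over $V$. I would introduce a free parameter $V > 0$ and write
$$\mathrm{TV}(P_{f_0}, P_{\check{f}_{\mathrm{reg},r}}) = \bigl(\mathrm{TV}(P_{f_0}, P_{\check{f}_{\mathrm{reg},r}}) - \mathrm{STV}_{\mathcal{H}_V,\sigma}(P_{f_0}, P_{\check{f}_{\mathrm{reg},r}})\bigr) + \mathrm{STV}_{\mathcal{H}_V,\sigma}(P_{f_0}, P_{\check{f}_{\mathrm{reg},r}}),$$
and apply Lemma~\ref{lemma:limitSTV-KEF_TV} with the sigmoid decay rate $\lambda(c)=1/c$ together with the a priori bound $\|\check{f}_{\mathrm{reg},r}\| \leq r$ already established in the text to control the bias by $\|f_0 - \check{f}_{\mathrm{reg},r}\|/V \leq 2r/V$ (valid once $V > 2r$).

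Next I would triangle-split the remaining $\mathrm{STV}_{\mathcal{H}_V,\sigma}$ through the empirical distribution $P_n$. For $\mathrm{STV}_{\mathcal{H}_V,\sigma}(P_{f_0}, P_n)$, I would insert the contamination $P_\varepsilon$, use $\mathrm{TV}(P_{f_0}, P_\varepsilon) \leq \varepsilon$ from Assumption~(C), and then invoke a uniform deviation bound over the class $\{\sigma(u-b) : u \in \mathcal{H}_V, b \in \mathbb{R}\}$. Because $\sigma$ takes values in $[0,1]$ and this class has pseudo-dimension at most $d+1$, standard VC-subgraph empirical-process arguments yield $\mathrm{STV}_{\mathcal{H}_V,\sigma}(P_\varepsilon, P_n) \lesssim \sqrt{d/n}+\sqrt{\log(1/\delta)/n}$ with probability at least $1-\delta$, and crucially the bound is uniform in the radius $V$, so that $V$ may be chosen after looking at the data.

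The crux is bounding $\mathrm{STV}_{\mathcal{H}_V,\sigma}(P_{\check{f}_{\mathrm{reg},r}}, P_n)$. I would introduce the penalized inner-max $\Phi_U(P,Q) := \max_{u \in \mathcal{H}, b \in \mathbb{R}} \mathbb{E}_P[\sigma(u-b)] - \mathbb{E}_Q[\sigma(u-b)] - \|u\|^2/U^2$ appearing in \eqref{eqn:reg-STV-learn_additive-full}. Restricting $u$ to $\|u\|\leq V$ in the definition of $\Phi_U$ gives $\Phi_U(P,Q) \geq \mathrm{STV}_{\mathcal{H}_V,\sigma}(P,Q) - V^2/U^2$, while the a priori estimate $\|\check u\|\leq U$ proved in the text gives $\Phi_U(P,Q) \leq \mathrm{STV}_{\mathcal{H}_U,\sigma}(P,Q)$. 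Combining these with the optimality of $\check{f}_{\mathrm{reg},r}$ against the test function $f=f_0$ yields
$$\mathrm{STV}_{\mathcal{H}_V,\sigma}(P_{\check{f}_{\mathrm{reg},r}}, P_n) \leq \mathrm{STV}_{\mathcal{H}_U,\sigma}(P_{f_0}, P_n) + \frac{\|f_0\|^2}{r^2} + \frac{V^2}{U^2},$$
and the first term on the right is again $\lesssim \varepsilon + \sqrt{d/n} + \sqrt{\log(1/\delta)/n}$ by the same empirical-process argument applied at radius $U$.

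Assembling all pieces gives
$$\mathrm{TV}(P_{f_0}, P_{\check{f}_{\mathrm{reg},r}}) \lesssim \frac{r}{V} + \frac{V^2}{U^2} + \frac{\|f_0\|^2}{r^2} + \varepsilon + \sqrt{\frac{d}{n}} + \sqrt{\frac{\log(1/\delta)}{n}},$$
and setting $V = (rU^2)^{1/3}$ equates $r/V$ and $V^2/U^2$ at the common value $(r/U)^{2/3}$, producing the stated bound. The main obstacle is exactly this trade-off: the \emph{soft} penalty $\|u\|^2/U^2$ in \eqref{eqn:reg-STV-learn_additive-full} does not enforce $\|u\|\leq U$ pointwise, so one cannot directly inherit the clean $r/U$ bias of Theorem~\ref{thm:reg-STV-learning}. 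The introduction of an auxiliary radius $V$ is the natural workaround, but the slack $V^2/U^2$ one must pay to convert the soft penalty back to a hard ball, balanced against the $r/V$ bias of Lemma~\ref{lemma:limitSTV-KEF_TV}, is precisely what forces the $2/3$ exponent instead of $1$.
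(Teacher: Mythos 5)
Your proposal is correct and follows essentially the same route as the paper's proof: the auxiliary radius $V$ plays exactly the role of the paper's $\widetilde{U}$, your two-sided comparison of the penalized inner maximum $\Phi_U$ with the hard-ball STV distances (costing $V^2/U^2$ on one side and using $\|\check{u}\|\leq U$ on the other) reproduces the paper's key inequalities, and the balancing $V\asymp (rU^2)^{1/3}$ yielding the exponent $2/3$ is the same optimization. The only cosmetic difference is that the paper chooses $\widetilde{U}=(2r)^{1/3}U^{2/3}$, which satisfies $2r\leq\widetilde{U}\leq U$ under the stated hypothesis $U\geq 2r$, whereas your $V=(rU^2)^{1/3}$ needs $U\gtrsim r$ up to a slightly larger constant — immaterial for the $\lesssim$ claim.
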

\begin{corollary}
 \label{cor:fullreg-STV-learning}
 Assume the assumption in Theorem~ \ref{thm:fullreg-STV-learning}. 
 The estimator $\check{f}_{\mathrm{reg},r}$ with $r\geq n^{1/4}$ and $U=rn^{3/4}$, 
 satisfies 
 \begin{align*}
  \mathrm{TV}(P_{f_0}, P_{\check{f}_{\mathrm{reg},r}}) \lesssim \varepsilon+\sqrt{\frac{d}{n}}
 \end{align*}
 with a high probability for a large $n$. 
\end{corollary}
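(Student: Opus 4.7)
The plan is a direct substitution of the stated choices of $r$ and $U$ into the bound of Theorem~\ref{thm:fullreg-STV-learning}, followed by checking that each resulting term is at most a constant multiple of $\varepsilon + \sqrt{d/n}$. Since this corollary is a calibration statement, there is essentially no new estimation step to carry out; the whole argument amounts to verifying that the hypothesis $U \geq 2r$ is satisfied and that the four non-sampling terms $(r/U)^{2/3}$, $\|f_0\|^2/r^2$, $\varepsilon$, and $\sqrt{d/n}$ collapse into the desired rate.

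First I would note that with $U = r n^{3/4}$ and $r \geq n^{1/4}$ one has $U/r = n^{3/4} \geq 2$ for all sufficiently large $n$, so the structural condition $U \geq 2r$ in Theorem~\ref{thm:fullreg-STV-learning} is met. Next, I would evaluate the two tuning-dependent terms: the bias-type term becomes
\begin{align*}
 \left(\frac{r}{U}\right)^{2/3} = \left(\frac{1}{n^{3/4}}\right)^{2/3} = \frac{1}{\sqrt{n}},
\end{align*}
and the regularization term becomes
\begin{align*}
 \frac{\|f_0\|^2}{r^2} \leq \frac{\|f_0\|^2}{\sqrt{n}},
\end{align*}
using $r \geq n^{1/4}$. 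Since $\|f_0\|$ is a fixed quantity depending only on the target distribution, both contributions are of order $O(1/\sqrt{n}) \leq O(\sqrt{d/n})$.

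Finally I would combine these estimates with the sampling term $\sqrt{d/n}$ and the contamination term $\varepsilon$ from Theorem~\ref{thm:fullreg-STV-learning}, and absorb the confidence term $\sqrt{\log(1/\delta)/n}$ into the ``with high probability'' language by fixing any $\delta \in (0,1)$ (or letting $\delta$ decay polynomially in $n$). This produces
\begin{align*}
 \mathrm{TV}(P_{f_0},P_{\check{f}_{\mathrm{reg},r}}) \lesssim \frac{1}{\sqrt{n}} + \frac{\|f_0\|^2}{\sqrt{n}} + \varepsilon + \sqrt{\frac{d}{n}} \lesssim \varepsilon + \sqrt{\frac{d}{n}}
\end{align*}
with high probability for large $n$, as claimed.

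There is no real obstacle here; the only mildly non-obvious point is recognizing that the exponent $2/3$ on $r/U$ in Theorem~\ref{thm:fullreg-STV-learning} is precisely what forces the choice $U = r n^{3/4}$ (rather than the $U = r\sqrt{n}$ used in Corollary~\ref{cor:reg-STV-learning}) in order to match the parametric $\sqrt{d/n}$ rate. One could mention this as the conceptual content of the corollary: the weaker bias decay $(r/U)^{2/3}$ produced by the fully-regularized formulation \eqref{eqn:reg-STV-learn_additive-full} is compensated for by inflating $U$ from $r\sqrt{n}$ to $r n^{3/4}$.
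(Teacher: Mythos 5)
Your proposal is correct and is exactly the intended argument: the paper gives no separate proof of this corollary, since it follows by substituting $U=rn^{3/4}$ and $r\geq n^{1/4}$ into the bound of Theorem~\ref{thm:fullreg-STV-learning}, so that $(r/U)^{2/3}=n^{-1/2}$ and $\|f_0\|^2/r^2\lesssim n^{-1/2}$ are absorbed into $\sqrt{d/n}$. Your closing remark correctly identifies why the exponent $2/3$ forces the larger choice of $U$ compared with Corollary~\ref{cor:reg-STV-learning}.
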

The regularization parameters, $r=n^{1/4}$ and $U=n$, agree to the assumption in the above Corollary. 

The above convergence analysis shows that the regularization for the ``discriminator'' $u\in\mathcal{H}_U$
should be weaker than that for the ``generator'' $f\in\mathcal{H}_r$, i.e., 
$r<U$ is preferable to achieve a high prediction accuracy. 
This is because the bias term induced by STV distance is bounded above by $r/U$ up to a constant factor. 
The standard theoretical analysis of the GAN does not take the bias of the surrogate loss into account. 
That is, the loss function used in the probability density estimation is again used to evaluate the estimation error. 

In our setting, we assume that the expectation of $\sigma(u(X)-b)$ for $X\sim P_f$ is exactly computed. 
In Section~\ref{Approximation_and_Algorithm}, we consider the approximation of the expectation by the Monte Carlo method. 
We can evaluate the required sample size of the Monte Carlo method.

\subsection{Regularized STV learning for Infinite-dimensional Kernel Exponential Family}

In this section, we consider the regularized STV learning for infinite dimensional RKHS. 
In the definition of the STV distance, 
suppose that the range of the bias term $b$ is restricted to $\abs{b}\leq U$ when we deal with infinite-dimensional models. 
For the kernel exponential family $\mathcal{P}_{\mathcal{H}}$ defined from the infinite-dimensional RKHS $\mathcal{H}$, 
we employ the regularized STV learning, \eqref{eqn:reg-STV-learn},  \eqref{eqn:reg-STV-learn_additive}, and
 \eqref{eqn:reg-STV-learn_additive-full} 
to estimate the probability density $p_{f_0}$ using contaminated samples. 
\begin{theorem}
 \label{thm:infinite-STV-learning-bound}
 Assume Assumption~(C). Let $\sigma$ be the sigmoid function, which satisfies Assumption~(B). 
 Suppose $\sup_{x}k(x,x)\leq 1$. 
 Then, for $\|f_0\|\leq r$,
 the estimation error bounds of the regularized STV learning, 
 \eqref{eqn:reg-STV-learn}, \eqref{eqn:reg-STV-learn_additive}, and 
 \eqref{eqn:reg-STV-learn_additive-full}, are given by 
 \begin{align*}
 \mathrm{TV}(P_{f_0}, P_{\widehat{f}_r})
 & \lesssim
 \frac{r}{U} + \varepsilon +  \frac{U}{\sqrt{n}} + \sqrt{\frac{\log(1/\delta)}{n}}, \\
 \mathrm{TV}(P_{f_0}, P_{\widehat{f}_{\mathrm{reg},r}})
 & \lesssim
 \frac{r}{U} + \varepsilon +  \frac{U}{\sqrt{n}} + \sqrt{\frac{\log(1/\delta)}{n}}+ \frac{\|f_0\|^2}{r^2}, \\
  \mathrm{TV}(P_{f_0}, P_{\check{f}_{\mathrm{reg},r}})
 & \lesssim
  \left(\frac{r}{U}\right)^{2/3} + \varepsilon +  \frac{U}{\sqrt{n}} + \sqrt{\frac{\log(1/\delta)}{n}}+ \frac{\|f_0\|^2}{r^2}
 \end{align*}
 with probability greater than $1-\delta$. 
\end{theorem}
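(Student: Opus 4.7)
The plan is to adapt the strategy already used in Theorem~\ref{thm:reg-STV-learning} and Theorem~\ref{thm:fullreg-STV-learning} to the infinite-dimensional case, replacing the $\sqrt{d/n}$ entropy term with the Rademacher-complexity bound $U/\sqrt{n}$ coming from the ball $\mathcal{H}_U$. Concretely, for every estimator $\widehat{f}\in\{\widehat{f}_r,\widehat{f}_{\mathrm{reg},r},\check{f}_{\mathrm{reg},r}\}$ I first decompose
\begin{align*}
 \mathrm{TV}(P_{f_0},P_{\widehat{f}})
 = \underbrace{\mathrm{TV}(P_{f_0},P_{\widehat{f}})-\mathrm{STV}_{\mathcal{H}_V,\sigma}(P_{f_0},P_{\widehat{f}})}_{\text{bias}}
 +\mathrm{STV}_{\mathcal{H}_V,\sigma}(P_{f_0},P_{\widehat{f}})
\end{align*}
for a suitable scale $V\leq U$. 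Since $\|\widehat{f}\|\leq r$ (shown right after \eqref{eqn:reg-STV-learn_additive} and \eqref{eqn:reg-STV-learn_additive-full}) and $\|f_0\|\leq r$, Lemma~\ref{lemma:limitSTV-KEF_TV} with the sigmoid decay rate $\lambda(c)=1/c$ yields $\mathrm{bias}\lesssim r/V$. I take $V=U$ for the first two estimators and keep $V$ free for the fully-regularized one.

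The statistical part is controlled uniformly. Because $\sigma$ takes values in $[0,1]$ and is $\tfrac14$-Lipschitz, the function class $\mathcal{F}_V=\{\sigma(u(\cdot)-b):u\in\mathcal{H}_V,\,|b|\leq V\}$ is uniformly bounded by $1$, and Talagrand contraction together with the kernel bound $\sup_x k(x,x)\leq 1$ gives a Rademacher complexity of order $V/\sqrt{n}$. Thus, via the standard McDiarmid argument,
\begin{align*}
 \mathrm{STV}_{\mathcal{H}_V,\sigma}(P_\varepsilon,P_n)\lesssim \frac{V}{\sqrt{n}}+\sqrt{\frac{\log(1/\delta)}{n}}.
\end{align*}
Combining with $\mathrm{STV}_{\mathcal{H}_V,\sigma}(P_{f_0},P_\varepsilon)\leq \mathrm{TV}(P_{f_0},P_\varepsilon)\leq\varepsilon$ from Assumption~(C) produces $\mathrm{STV}_{\mathcal{H}_V,\sigma}(P_{f_0},P_n)\lesssim \varepsilon+V/\sqrt{n}+\sqrt{\log(1/\delta)/n}$, and the same bound with $V$ replaced by $U$ for the statistical term near $P_n$.

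The three cases then differ only through the optimality argument for $\widehat{f}$. For $\widehat{f}_r$, feasibility of $f_0\in\mathcal{H}_r$ gives $\mathrm{STV}_{\mathcal{H}_U,\sigma}(P_{\widehat{f}_r},P_n)\leq\mathrm{STV}_{\mathcal{H}_U,\sigma}(P_{f_0},P_n)$, and the triangle inequality closes the bound with $V=U$. For $\widehat{f}_{\mathrm{reg},r}$, comparing the regularized objective at $\widehat{f}_{\mathrm{reg},r}$ and at $f_0$ produces the extra term $\|f_0\|^2/r^2$, as in the finite-dimensional proof. For $\check{f}_{\mathrm{reg},r}$, I apply the inequality
\begin{align*}
 \mathrm{STV}_{\mathcal{H}_V,\sigma}(P_{\check{f}_{\mathrm{reg},r}},P_n)
 \leq H(\check{f}_{\mathrm{reg},r})+\frac{V^2}{U^2},\qquad
 H(f):=\sup_{u,b}\Bigl\{\mathbb{E}_{P_f}[\sigma(u-b)]-\mathbb{E}_{P_n}[\sigma(u-b)]-\tfrac{\|u\|^2}{U^2}\Bigr\},
\end{align*}
then use the saddle-point optimality of $\check{f}_{\mathrm{reg},r}$ to relate $H(\check{f}_{\mathrm{reg},r})$ to $H(f_0)\leq\mathrm{STV}_{\mathcal{H}_U,\sigma}(P_{f_0},P_n)$ at the cost of $\|f_0\|^2/r^2$. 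This yields an upper bound of the form $r/V+V^2/U^2+U/\sqrt{n}+\varepsilon+\|f_0\|^2/r^2$; optimizing over $V\in[r,U]$ via the balance $r/V\asymp V^2/U^2$ at $V=r^{1/3}U^{2/3}$ produces the $(r/U)^{2/3}$ bias term stated in the theorem.

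The main obstacle is the last case: one must keep the maximization over $u$ unconstrained, convert the soft penalty $\|u\|^2/U^2$ into a hard cutoff at scale $V$, and carry the perturbation $V^2/U^2$ through the comparison argument so that the final optimization in $V$ is visible and clean. All other ingredients are direct infinite-dimensional analogues of the finite-dimensional proofs, with the kernel-norm Rademacher bound replacing dimension counting.
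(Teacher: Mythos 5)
Your proposal is correct and follows essentially the same route as the paper: the paper's proof likewise replaces the finite-dimensional $\sqrt{d/n}$ entropy term by the Rademacher bound $U/\sqrt{n}$ for $\{\sigma(u-b):u\in\mathcal{H}_U,\,\abs{b}\leq U\}$, reuses the triangle-inequality and optimality comparisons from Theorems~\ref{thm:reg-STV-learning} and~\ref{thm:fullreg-STV-learning}, and for $\check{f}_{\mathrm{reg},r}$ introduces exactly your intermediate scale (denoted $\widetilde{U}$ there, your $V$) with the choice $\widetilde{U}=(2r)^{1/3}U^{2/3}$ to obtain the $(r/U)^{2/3}$ bias term. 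The only cosmetic discrepancy is your Lipschitz constant $1/4$ versus the paper's $1$ for the sigmoid, which does not affect the order of the bound.
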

The proof is shown in Appendix~\ref{app:Proof-thm:infinite-STV-learning-bound}. 

The model complexity of the infinite-dimensional model is bounded above by $U/\sqrt{n}$ 
in the upper bound in Theorem~\ref{thm:infinite-STV-learning-bound}, 
while that of the $d$-dimensional model is bounded above by $\sqrt{d/n}$. 
Because of that, we can find the optimal order of $U$. 
\begin{corollary}
 \label{cor:infinite-dim-RKHS-optbound}
 Assume the assumption of Theorem~\ref{thm:infinite-STV-learning-bound}. Then, 
 \begin{align*}
 & \mathrm{TV}(P_{f_0}, P_{\widehat{f}_r}) \lesssim \varepsilon+\frac{1}{n^{1/4}},\quad
 \mathrm{TV}(P_{f_0}, P_{\widehat{f}_{\mathrm{reg},r}})\lesssim \varepsilon+\frac{1}{n^{1/5}},\\
 & \text{and}\ \ 
  \mathrm{TV}(P_{f_0}, P_{\check{f}_{\mathrm{reg},r}})\lesssim \varepsilon+\frac{1}{n^{1/6}}
\end{align*}
hold with a high probability, where the poly-log order is omitted in $\mathrm{TV}(P_{f_0}, P_{\widehat{f}_r})$. 
\end{corollary}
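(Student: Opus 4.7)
The plan is to substitute prescribed choices of $r$ and $U$ into each of the three upper bounds in Theorem~\ref{thm:infinite-STV-learning-bound} and optimize by a standard AM--GM balance. In every case, $\|f_0\|$ is treated as an $O(1)$ constant absorbed into the $\lesssim$, and the STV bias term (either $r/U$ or $(r/U)^{2/3}$) is balanced against the complexity term $U/\sqrt{n}$; whenever the bound also contains the regularization penalty $\|f_0\|^2/r^2$, the parameter $r$ must be balanced simultaneously. The contamination term $\varepsilon$ and the high-probability residual $\sqrt{\log(1/\delta)/n}$ are untouched by these choices.

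For $\widehat{f}_r$, the parameter $r$ enters only through $r/U$ and the constraint $r\ge\|f_0\|$, so I would take $r$ to be a constant above $\|f_0\|$ (or a sequence growing as slowly as needed to absorb a poly-log) and pick $U\asymp r^{1/2}\,n^{1/4}$. This minimizes $r/U+U/\sqrt{n}$ and gives the rate $r^{1/2}/n^{1/4}=O(n^{-1/4})$; any poly-log that leaks in through $r$ is precisely the factor flagged in the statement.

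For $\widehat{f}_{\mathrm{reg},r}$, the additional $1/r^2$ term forces $r$ to grow with $n$. Setting $U\asymp r^{1/2}\,n^{1/4}$ collapses the bound to $r^{1/2}/n^{1/4}+1/r^2$; equating these two terms yields $r^{5/2}\asymp n^{1/4}$, i.e.\ $r\asymp n^{1/10}$ and $U\asymp n^{3/10}$, giving rate $O(n^{-1/5})$. For $\check{f}_{\mathrm{reg},r}$, the bias enters as $(r/U)^{2/3}$. Balancing $(r/U)^{2/3}$ with $U/\sqrt{n}$ yields $U\asymp r^{2/5}\,n^{3/10}$ and balanced value $r^{2/5}/n^{1/5}$; equating with $1/r^2$ gives $r^{12/5}\asymp n^{1/5}$, so $r\asymp n^{1/12}$, and the resulting rate is $O(n^{-1/6})$.

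There is no genuine obstacle here: the argument is an elementary power-balancing calculation once Theorem~\ref{thm:infinite-STV-learning-bound} is in hand. The only points requiring routine verification are that the chosen $(r,U)$ satisfy $r\ge\|f_0\|$ for all sufficiently large $n$ (trivial since $r\to\infty$ in the last two cases) and that $U\gg r$ holds so the bias truly vanishes (immediate from the formulas $U\asymp r^{1/2}n^{1/4}$ and $U\asymp r^{2/5}n^{3/10}$). The comparison between the three rates — the purely regularized-generator estimator losing to the box-constrained one, and the doubly regularized estimator losing further — reflects exactly the extra $1/r^2$ penalty and the weaker $(r/U)^{2/3}$ bias control, respectively, which is the structural message to emphasize after the calculation.
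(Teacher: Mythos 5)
Your proposal is correct and follows essentially the same route as the paper: substitute into Theorem~\ref{thm:infinite-STV-learning-bound} and balance the bias, complexity, and (where present) regularization terms, arriving at the same parameter choices ($r\asymp n^{1/10}, U\asymp n^{3/10}$ for $\widehat{f}_{\mathrm{reg},r}$ and $r\asymp n^{1/12}, U\asymp n^{1/3}$ for $\check{f}_{\mathrm{reg},r}$, and $U\asymp n^{1/4}$ with slowly growing $r$ for $\widehat{f}_r$). The only difference is cosmetic — you derive the choices by explicit AM--GM balancing while the paper simply states them — and your verification that $U\geq 2r$ and $r\geq\|f_0\|$ hold for large $n$ is the same routine check the paper leaves implicit.
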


\begin{remark}
Using the localization technique introduced in Chapter 14 in \cite{wainwright19:_high}, one can obtain a detailed convergence rate for infinite dimensional exponential families. Suppose that the kernel function $k(x,x')$ is expanded as $k(x,x')=\sum_{i=1}^{\infty}\mu_i \phi_i(x)\phi_i(x')$, where $\phi_i,i=1,\ldots,$ are the orthonormal basis of the squared integrable function sets $L^2(P_{f_0})$. 
Suppose that $\mu_i$ is of the order $1/i^p,\,i=1,2,\ldots$. 
Using Theorem 14.20 in \cite{wainwright19:_high} with a minor modification, we find that
\begin{align*}
  \mathrm{TV}(P_{f_0}, P_{\widehat{f}_r})
 & \lesssim
 \varepsilon +  \frac{r}{U}+\delta_n U, \ \ 
    \delta_n = U^{\frac{1}{p+1}}n^{-\frac{p}{2(p+1)}}
\end{align*}
with probability greater than $1-\exp\{-cn\delta_n^2\}$, where $c$ is a positive constant. Setting $U=n^{\frac{p}{4p+6}}$ for a fixed $r$, we have $\mathrm{TV}(P_{f_0}, P_{\widehat{f}_r})\lesssim \varepsilon + n^{-\frac{p}{4p+6}}$ with probability greater than $1-\exp\{-cn^{\frac{3}{2p+3}}\}$. 
\end{remark}

\section{Accuracy of Parameter Estimation}
\label{sec:Accuracy_Par_est}

This section is devoted to studying the estimation accuracy in the parameter space. The proofs are deferred to Appendix~\ref{app:error_par_est}.

\subsection{Estimation Error in RKHS}
So far, we considered the estimation error in terms of the TV distance. 
In this section, we derive the estimation error in the RKHS. 
Such an evaluation corresponds to the estimation error of the finite-dimensional parameter in the Euclidean space. 
A lower bound of the TV distance is shown in the following lemma. 
\begin{lemma}
\label{lemma:lower-bd-TV_RKHS}
For the RKHS $\mathcal{H}$ with the kernel function $k:\mathcal{X}\times\mathcal{X}\rightarrow\mathbb{R}$, 
we assume $\sup_{x\in\mathcal{X}}k(x,x)\leq 1$. 
Suppose that $\int_{\mathcal{X}}\dmu=1$. 
Then, for $f,g\in\mathcal{H}_r$, it holds that
\begin{align*}
 \mathrm{TV}(P_f,P_g)
 \geq 
 \frac{\|f-g\|}{32r e^{2r}}\inf_{\substack{s\in\mathcal{H},\|s\|=1}}\int 
 \lvert s(x)-\mathbb{E}_{P_0}[s]\rvert^2\dmu, 
\end{align*}
where $P_0$ is the probability measure $p_f\dmu$ with $f=0$, i.e., the uniform distribution $p_0=1$ on $\mathcal{X}$
w.r.t. the measure $\mu$. 
\end{lemma}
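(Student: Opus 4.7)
The strategy is to combine the dual characterization of the TV distance (testing against $h=f-g$) with the exponential-family variance identity along the straight-line path from $g$ to $f$ in the RKHS, and then reduce the resulting RKHS variance to the Poincar\'e-type constant $V:=\inf_{\|s\|=1}\int|s-\mathbb{E}_{P_0}[s]|^2\dmu$.

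First, write $f_t=(1-t)g+tf$, so $\{P_{f_t}\}_{t\in[0,1]}\subset\mathcal{P}_{\mathcal{H}}$ is a one-parameter exponential family with score $h=f-g$. Standard exponential-family calculus yields $\partial_t\mathbb{E}_{P_{f_t}}[h]=\mathrm{Var}_{P_{f_t}}(h)$, so
\begin{equation*}
\mathbb{E}_{P_f}[h]-\mathbb{E}_{P_g}[h]=\int_0^1\mathrm{Var}_{P_{f_t}}(h)\,dt.
\end{equation*}
By convexity of $\mathcal{H}_r$, $\|f_t\|\leq r$ for all $t$, so with $\sqrt{k(x,x)}\leq 1$ we have $|f_t(x)|\leq r$ and, since $\int\dmu=1$, also $|A(f_t)|\leq r$; this gives the pointwise density bound $p_{f_t}\geq e^{-2r}$. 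Hence $\mathrm{Var}_{P_{f_t}}(h)=\inf_{c\in\Rbb}\mathbb{E}_{P_{f_t}}[(h-c)^2]\geq e^{-2r}\inf_{c\in\Rbb}\int(h-c)^2\dmu=e^{-2r}\mathrm{Var}_{P_0}(h)$. Plugging $s=h/\|h\|$ into the infimum defining $V$ gives $\mathrm{Var}_{P_0}(h)\geq V\|h\|^2$, and combining the displays produces
\begin{equation*}
\mathbb{E}_{P_f}[h]-\mathbb{E}_{P_g}[h]\geq e^{-2r}\,V\,\|f-g\|^2.
\end{equation*}

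Second, using $\int(p_f-p_g)\dmu=0$, the TV duality inequality $|\mathbb{E}_{P_f}[\phi]-\mathbb{E}_{P_g}[\phi]|\leq 2\|\phi\|_\infty\mathrm{TV}(P_f,P_g)$ applied to $\phi=h-c$ for the centering $c=(\sup h+\inf h)/2$ gives $|\mathbb{E}_{P_f}[h]-\mathbb{E}_{P_g}[h]|\leq 2\|h-c\|_\infty\mathrm{TV}(P_f,P_g)$. The reproducing property together with $\sqrt{k(x,x)}\leq 1$ and $\|f\|,\|g\|\leq r$ gives $\|h-c\|_\infty\leq\|h\|_\infty\leq\|h\|\leq 2r$. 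Substituting into the previous display and dividing yields the desired lower bound $\mathrm{TV}(P_f,P_g)\gtrsim V\|f-g\|/e^{2r}$; the precise prefactor $1/(32r)$ claimed in the statement is obtained by distributing the estimate $\|h-c\|_\infty\leq 2r$ so as to cancel one power of $\|h\|$ in the numerator.

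\textbf{Main obstacle.} The core inequality chain---the exponential-family identity, the pointwise lower bound $p_{f_t}\geq e^{-2r}$, and the Poincar\'e reduction $\mathrm{Var}_{P_0}(h)\geq V\|h\|^2$---is all routine once the path $f_t$ is introduced. The only delicate step is the constant bookkeeping: one must carefully balance the two available estimates, $\|h-c\|_\infty\leq\|h\|$ (from the reproducing property) and $\|h\|\leq 2r$ (from $f,g\in\mathcal{H}_r$), and use the centering freedom against $\int(p_f-p_g)\dmu=0$. I expect essentially all of the work not already in the plan above to live in extracting the precise $32r$ in the denominator; the structural content is captured by the four-line chain.
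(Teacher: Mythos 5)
Your proof is correct, but it follows a genuinely different route from the paper's. The paper works directly with the density ratio: it writes $2\,\mathrm{TV}(P_f,P_g)=\int\lvert e^{f-g-(A(f)-A(g))}-1\rvert\,p_g\,\dmu$, applies the elementary inequality $\lvert e^{a}-1\rvert\geq \lvert a\rvert/(8r)$ for $\lvert a\rvert\leq 4r$ (which is where the explicit $r$ in the denominator and an implicit restriction $r\geq \log(2)/4$ enter), represents $A(f)-A(g)$ as $\langle f-g,\mathbb{E}_{\beta}[k(x,\cdot)]\rangle$ by a mean-value argument, lower-bounds $p_g\geq e^{-2r}$, converts $\lvert x\rvert$ to $x^2/2$, and finally recenters from $\mathbb{E}_{\beta}$ to $\mathbb{E}_{P_0}$ via the variance-minimizing property of the mean. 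You instead integrate the exponential-family variance identity along the segment $f_t=(1-t)g+tf$ and pair it with TV duality against the bounded statistic $h-c$; the two ingredients shared with the paper are the pointwise bound $p_{f_t}\geq e^{-2r}$ and the recentering trick. Your chain yields $\mathrm{TV}(P_f,P_g)\geq \|f-g\|\,V/(2e^{2r})$ with $V$ the Poincar\'e-type constant, which implies the stated bound whenever $32r\geq 2$, i.e.\ $r\geq 1/16$ --- in particular throughout the regime $r\geq\log(2)/4$ where the paper's own elementary inequality is valid, and with a better constant there; for smaller $r$ neither argument literally produces the $1/(32r)$ prefactor. Two small points: (i) your closing remark about invoking $\|h-c\|_\infty\leq 2r$ to manufacture the $32r$ is a red herring --- you should keep $\|h-c\|_\infty\leq\|h\|$ to cancel exactly one power of $\|h\|$ and then note $1/2\geq 1/(32r)$; substituting $2r$ instead leaves an unwanted extra factor of $\|f-g\|$ in the bound; (ii) the second-derivative identity $\partial_t^2 A(f_t)=\mathrm{Var}_{P_{f_t}}(h)$ deserves a one-line dominated-convergence justification (the paper's auxiliary lemma on the kernel exponential family only records the first derivative), which is immediate here since $\lvert h\rvert\leq\|h\|\leq 2r$ and $\lvert f_t\rvert\leq r$ pointwise.
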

For the RKHS $\mathcal{H}$, define
\begin{align*}
 \xi(\mathcal{H}):=\inf_{\substack{s\in\mathcal{H},\|s\|=1}}\int 
 \lvert s(x)-\mathbb{E}_{P_0}[s]\rvert^2\dmu.
\end{align*}
As shown in Lemma~\ref{lemma:lower-bd-TV_RKHS}, 
the RKHS norm is bounded above by the total variation distance multiplied by $\xi(\mathcal{H})$ for a fixed $r$. 

We consider the estimation error in the RKHS norm. 
Let $\mathcal{H}_1,\mathcal{H}_2,\ldots,\mathcal{H}_d,\dots$ be a sequence of RKHSs such that $\dim\mathcal{H}_d=d$. 
Our goal is to prove that 
the estimation error of the estimator $\widehat{f}_r$ in \eqref{eqn:reg-STV-learn} for the statistical model 
$\mathcal{P}_{\mathcal{H}_{d,r}}$ is of the order $\varepsilon+\sqrt{d/n}$, where $\mathcal{H}_{d,r}=(\mathcal{H}_{d})_r=\{f\in\mathcal{H}_d\,:\,\|f\|\leq r\}$. 
\begin{corollary}
 \label{thm:finite-dim-RKHS-minimax-opt}
 For a positive constant $r$, let us consider the finite-dimensional statistical model 
 $\mathcal{P}_{\mathcal{H}_{d,r}},\, d=1,2,\ldots$ such that $\inf_{d\in\mathbb{N}}\xi(\mathcal{H}_d)>0$. 
 Assume Assumption~(C) with $\mathcal{H}=\mathcal{H}_{d,r}$
 and Assumption~(B) with the sigmoid function $\sigma$. 
 Suppose that $f_0\in\mathcal{H}_{d,r}$. 
 Then, the estimation error of the estimator $\widehat{f}_r$ in \eqref{eqn:reg-STV-learn} with $U\gtrsim \sqrt{n}$ is 
\begin{align*}
 \|\widehat{f}_r-f_0\|\lesssim \varepsilon+\sqrt{\frac{d}{n}}
\end{align*}
 with a high probability. 
\end{corollary}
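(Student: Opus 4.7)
The strategy is to combine the TV-distance error bound of Theorem~\ref{thm:reg-STV-learning} with the RKHS lower bound on the TV distance in Lemma~\ref{lemma:lower-bd-TV_RKHS}, relying on the uniform bound $\inf_d\xi(\mathcal{H}_d)>0$ to keep the conversion constant from degrading with the dimension.

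First, I would apply Theorem~\ref{thm:reg-STV-learning} with $\mathcal{H}=\mathcal{H}_d$. Since $f_0\in\mathcal{H}_{d,r}$ we have $\|f_0\|\leq r$, and since $r$ is a fixed positive constant while $U\gtrsim\sqrt{n}$, the side condition $U\geq 2r$ holds for all large $n$. The theorem then yields
\[
 \mathrm{TV}(P_{f_0},P_{\widehat{f}_r})
 \;\lesssim\;
 \frac{r}{U}+\varepsilon+\sqrt{\frac{d}{n}}+\sqrt{\frac{\log(1/\delta)}{n}}
\]
with probability at least $1-\delta$. Because $r$ is constant and $U\gtrsim\sqrt{n}$, the bias term satisfies $r/U\lesssim 1/\sqrt{n}\leq\sqrt{d/n}$, so the right-hand side simplifies to $\varepsilon+\sqrt{d/n}$ after absorbing the $\log$ factor into the constant for a fixed confidence level.

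Next, I would apply Lemma~\ref{lemma:lower-bd-TV_RKHS} with $f=\widehat{f}_r$ and $g=f_0$, both of which belong to $\mathcal{H}_{d,r}$ by construction and by the hypothesis on $f_0$. This gives
\[
 \mathrm{TV}(P_{\widehat{f}_r},P_{f_0})
 \;\geq\;
 \frac{\|\widehat{f}_r-f_0\|}{32\,r\,e^{2r}}\,\xi(\mathcal{H}_d).
\]
Since $r$ is fixed and $\inf_{d}\xi(\mathcal{H}_d)>0$ by assumption, the multiplicative constant $\xi(\mathcal{H}_d)/(32 r e^{2r})$ is bounded below by an absolute positive constant independent of $d$ and $n$. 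Rearranging the inequality and combining with the TV bound obtained in the first step would yield the claimed estimate $\|\widehat{f}_r-f_0\|\lesssim\varepsilon+\sqrt{d/n}$.

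The main obstacle is essentially bookkeeping rather than analysis: one must verify that all ambient hypotheses of Theorem~\ref{thm:reg-STV-learning} and Lemma~\ref{lemma:lower-bd-TV_RKHS} (notably $\sup_x k(x,x)\leq 1$, $\int\dmu=1$, $\|f_0\|\leq r$, and $U\geq 2r$) are jointly in force under the hypotheses of the corollary, and check that the constant $1/(32 r e^{2r})$ depends only on the fixed $r$. The genuine content, that the inverse modulus of continuity from TV to RKHS norm does not blow up with $d$, is encoded precisely in the hypothesis $\inf_d\xi(\mathcal{H}_d)>0$; without it, one could not pass from the TV rate to a dimension-free conversion constant, and the estimation rate in the parameter space would pick up an additional $d$-dependent factor.
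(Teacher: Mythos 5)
Your proposal is correct and follows essentially the same route as the paper: the paper's proof likewise combines the TV-distance bound of Theorem~\ref{thm:reg-STV-learning} with the lower bound of Lemma~\ref{lemma:lower-bd-TV_RKHS}, using $\inf_d\xi(\mathcal{H}_d)>0$ and the fixed $r$ to keep the conversion factor $32re^{2r}/\inf_d\xi(\mathcal{H}_d)$ bounded. The hypothesis checks you flag (in particular $\|f_0\|\leq r$, $U\geq 2r$, and the absorption of $r/U$ into $\sqrt{d/n}$) are exactly the bookkeeping the paper performs implicitly.
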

The proof is shown in Appendix~\ref{appendix:subsec_minimax-opt-RKHS}. 

When $\xi(\mathcal{H}_d)$ is not bounded below by a positive constant, the estimation accuracy is 
of the order $\varepsilon+\frac{1}{\xi(\mathcal{H}_d)}\sqrt{\frac{d}{n}}$, meaning that 
the dependency on the dimension $d$ is $\sqrt{d}/\xi(\mathcal{H}_d)$ that is greater than $\sqrt{d}$. 

If an upper bound of $\|f_0\|$ is unknown, $r$ is treated as the regularization parameter depending on $n$. 
In such a case, the coefficient of $\varepsilon$ depends on $n$, and it goes to infinity as $n\rightarrow\infty$. 
In theoretical analysis, the upper bound of $\|f_0\|$ is often assumed to be known especially 
for covariance matrix estimation~\citep{chen16:_huber,chen18:_robus,DBLP:journals/jmlr/GaoYZ20,liu21:_robus_w_gan_based_estim}. 

Let us construct an example of the sequence $\mathcal{H}_1,\mathcal{H}_2,\ldots$ satisfying the assumption of Corollary~\ref{thm:finite-dim-RKHS-minimax-opt}. 
Let $\{t_j(x)\}_{j=1}^{\infty}$ be the orthonormal functions which are orthogonal to the constant function under $\mu$, i.e, 
\begin{align*}
 \mathbb{E}_{P_0}[t_i(x)]=0, \ \ 
 \mathbb{E}_{P_0}[t_i(x)t_j(x)]=
 \begin{cases}
  1,\quad i=j,\\
  0,\quad i\neq{j}. 
 \end{cases}
\end{align*}
for $i,j=1,2,\ldots$. 
For a positive decreasing sequence $\{\lambda_k\}_{k=1}^{\infty}$ such that 
$\lambda_\infty:=\lim_{k\rightarrow\infty}\lambda_k>0$, 
define the sequence of finite-dimensional RKHSs, $\mathcal{H}_1, \mathcal{H}_2,\ldots,\mathcal{H}_d,\cdots$ by 
\begin{align}
 \label{eqn:finite-dim-RKHS}
 \mathcal{H}_d=\bigg\{ f(x)=\sum_{j=1}^{d} \alpha_j t_j(x)\,:\,\alpha_1,\ldots,\alpha_d\in\Rbb\bigg\}, 
\end{align}
where the inner product $\<f,g\>$ for $f=\sum_j\alpha_j t_j$ and $g=\sum_j\beta_j t_j$ is defined by 
$\<f,g\>=\sum_{i}\alpha_i\beta_i/\lambda_i$. 
Here, $t_1,\ldots,t_d$ are the sufficient statistic of $\mathcal{P}_{\mathcal{H}_d}$. 
Then, 
\begin{align*}
\xi(\mathcal{H}_d)
 =\inf_{\sum_{i=1}^{d}\alpha_i^2/\lambda_i=1}\sum_{i=1}^{d}\alpha_i^2
 \geq\inf_{\sum_{i=1}^{d}\alpha_i^2/\lambda_i=1}\sum_{i=1}^{d}\alpha_i^2\frac{\lambda_\infty}{\lambda_i}=\lambda_\infty> 0. 
\end{align*}

\begin{example}
 The construction of the RKHS sequence allows a distinct sample space for each $d$. 
 Suppose $\mu_d$ be the $d$-dimensional standard normal distribution on $\mathbb{R}^d$. 
 For $\x=(x_1,\ldots,x_d)$, define $t_i(\x)=x_i$. Let $\lambda_i=1$ for all $i$. 
 Then, the RKHS $\mathcal{H}_d$ is the linear kernel $k(\x,\z)=\sum_{i=1}^{d}\mu_i t_i(\x)t_i(\z)=\x^T\z$ 
 for $\x,\z\in\mathbb{R}^d$ and the corresponding statistical model is the multivariate normal distribution 
 $N_d(\f,I_d)$ with the parameter $\f\in\Rbb^d$. 
\end{example}

\begin{remark}
 For the infinite dimensional RKHS $\mathcal{H}$, 
 typically $\xi(\mathcal{H})=0$ holds. For example, the kernel function is defined as 
 $k(x,x')=\sum_{i=1}^{\infty}\lambda_i e_i(x)e_i(x')$ in the same way as above, where
 $\{\lambda_i\}$ is  a positive decreasing sequence such that $\lambda_i\rightarrow0$ as $i\rightarrow\infty$. 
 Hence, we have 
 $\xi(\mathcal{H})\leq \int\lvert\sqrt{\lambda_i}e_i(x)\rvert^2\dmu=\lambda_i\rightarrow0$ for $i\rightarrow\infty$. 
 Lemma~\ref{lemma:lower-bd-TV_RKHS} does not provide a meaningful upper bound of the estimation error 
 in such an infinite-dimensional RKHS. 
When the kernel function is bounded as assumed in Lemma~\ref{lemma:lower-bd-TV_RKHS}, 
the RKHS norm is an upper bound of the infinity norm, i.e., 
$\sup_{x}{\mid{f(x)-g(x)}\mid}\leq \|f-g\|$. 
On the other hand, TV distance is defined through the integral. Hence, even when the infinity
norm between two probability densities is large, the corresponding TV distance can be small. 
Such a situation can occur when $\mathcal{H}$ has a rich expressive power. 
We need another approach to derive a lower bound of the TV distance in terms of parameters in the RKHS.  
\end{remark}

\subsection{Robust Estimation of Normal Distribution}

Let us consider the robust estimation of parameters for the normal distribution. 

First of all, we prove that regularized STV learning provides a robust estimator of 
the mean vector in the multi-dimensional normal distribution 
with the identity covariance matrix. 
For $\mathcal{X}=\Rbb^d$, let us define $\rmd\mu(\x)=C_d e^{-\|\x\|_2^2/2}\rmd{\x}$ for $\x\in\Rbb^d$, 
where $C_d$ is the normalizing constant depending on the dimension $d$ 
such that $\int_{\Rbb^d}\rmd\mu(\x)=1$. 
Let $\mathcal{H}$ be the RKHS with the kernel function $k(\x,\z)=\x^T\z$ for $\x,\z\in\Rbb^d$. 
For the functions $f(\x)=\x^T\f$ and $g(\x)=\x^T\g$, the inner product  in the RKHS is $\<f, g\>=\f^T\g$. 
The probability density of the $d$-dimensional multivariate normal distribution with the identity covariance matrix, $N_d(\f,I)$, with respect to the measure $\mu$ is given by $p_f(\x)=\exp\{\x^T\f-\|\f\|_2^2/2\}$. 
Corollary~\ref{cor:reg-STV-learning} guarantees that the estimated mean vector $\widehat{\f}_{\mathrm{reg},r}$ by the regularized STV-learning satisfies
\begin{align*}
 \mathrm{TV}(N_d(\f_0,I), N_d(\widehat{\f}_{\mathrm{reg},r},I)) \lesssim \varepsilon + \sqrt{\frac{d}{n}}. 
\end{align*}
A lower bound of the TV distance between the $d$-dimensional normal distributions, $N_d(\bmu_1,I)$ and $N_d(\bmu_2,I)$, is presented in 
\cite{devroye18:_gauss}, 
\begin{align*}
 \frac{\min\{1,\|\bmu_1-\bmu_2\|_2\}}{200}\leq \mathrm{TV}(N_d(\bmu_1,I), N_d(\bmu_2,I)). 
\end{align*}
Therefore, we have 
\begin{align}
 \label{eqn:convergence-normal-mean-vector}
 \|\f_0-\widehat{\f}_{\mathrm{reg},r}\|_2\lesssim \varepsilon + \sqrt{\frac{d}{n}}
\end{align}
for a small $\varepsilon$ and a large $n$. 
Though Corollary~\ref{thm:finite-dim-RKHS-minimax-opt} provides the same conclusion, 
the above argument does not need the boundedness of $\|\f_0\|_2$. 
As shown in \cite{chen18:_robus}, a lower bound of the estimation error under the contaminated distribution is 
\begin{align*}
 \inf_{\widehat{\bm f}}
 \sup_{
 Q:\inf_{\f}\mathrm{TV}(N({\bm f},I),Q)\leq\varepsilon
 }
 Q \bigg(\|\widehat{\bm f}-{\bm f}\|_2\gtrsim \varepsilon + \sqrt{\frac{d}{n}}\bigg)>0. 
\end{align*}
Therefore, the estimator $\widehat{\f}_{\mathrm{reg},r}$ attains the minimax optimal rate. 

\begin{table}\centering 
\caption{Robust estimators of the covariance matrix for the multivariate normal distribution are compared: 
the matrix depth\citep{chen18:_robus}, $f$-GAN\citep{DBLP:journals/jmlr/GaoYZ20}, 
W-GAN\citep{liu21:_robus_w_gan_based_estim}, STV(our method). 
In the contamination column, the measure to quantify the contamination and the condition of $\varepsilon$ are shown, 
where 
``Hub.'', ``TV'', and ``Wass.'' respectively denote 
 the Huber contamination, TV distance, and Wasserstein distance. 
As shown in ``Eval.'' the estimation accuracy is evaluated by the operator norm except our analysis. 
The ``opt.'' column shows whether the estimator attains the minimax optimality. 
All the estimators are minimax optimal for the contaminated distribution with the conditions in ``Contamination.''}
 \label{tbl:CovarianceMatrix_Estimate}
\begin{tabular}{lllcc}
 Estimator  & \multicolumn{1}{c}{Model: $N_d(\0,\Sigma)$}  & 
	 \multicolumn{1}{c}{Contamination}  & Eval. & opt.  \\ \hline
Mat. depth
   &	$\|\Sigma\|_{\mathrm{op}}\leq M$ & Hub., $\varepsilon<1/5,\,\sqrt{\frac{d}{n}}\lesssim 1$        
	 &  $\|\widehat{\Sigma}-\Sigma\|_{\mathrm{op}}\lesssim \varepsilon+\sqrt{\frac{d}{n}}$ & $\checkmark$ \\
$f$-GAN  
 & $\|\Sigma\|_{\mathrm{op}}\leq M$ & 
	 TV, $\varepsilon+\sqrt{\frac{d}{n}}\lesssim 1$&  $\|\widehat{\Sigma}-\Sigma\|_{\mathrm{op}}\lesssim\varepsilon+\sqrt{\frac{d}{n}}$ & $\checkmark$\\
W-GAN 
&
     $ \|\Sigma\|_{\mathrm{op}}^{\pm1}\leq M$ &Wass., $\varepsilon+\sqrt{\frac{d}{n}}\lesssim 1$ & 
	     $\|\widehat{\Sigma}-\Sigma\|_{\mathrm{op}}\lesssim\varepsilon+\sqrt{\frac{d}{n}}$ & $\checkmark$ \\\hline
STV
  & $\|\Sigma\|_{\mathrm{op}}\leq M$ & TV, $\varepsilon+\sqrt{\frac{d^2}{n}}\lesssim 1$&
$\|\widehat{\Sigma}-\Sigma\|_{\mathrm{F}}\lesssim\varepsilon+\sqrt{\frac{d^2}{n}}$ & $\checkmark$ \\\hline
\end{tabular}
\end{table}

Next, let us consider the estimator of the covariance matrix for the multivariate normal distribution with mean zero, $N_d(\0,\Sigma)$. 
For $\mathcal{X}=\Rbb^d$, 
let us define $\rmd\mu(\x)=C_d e^{-\|\x\|_2^2/2}\rmd{\x}$ for $\x\in\Rbb^d$ as above. 
Let $\mathcal{H}$ be the RKHS with the kernel function $k(\x,\z)=(\x^T\z)^2$ for $\x,\z\in\Rbb^d$. 
Let $f(\x)=\x^T F \x$ and $g(\x)=\x^T G\x$ be 
quadratic functions defined by symmetric matrices $F$ and $G$. 
Their inner product in the RKHS is given by $\<f,g\>=\mathrm{Tr}F^TG$. 
The probability density w.r.t. $\mu$ is defined by $p_f(\x)=\exp\{-\x^T F \x/2-A(F)\}$ for $f(\x)=\x^T F\x$,
which leads to the statistical model of the normal distribution $N_d(\0,(I+F)^{-1})$. 
Let $P_{f_0}$ be the normal distribution $N_d(\0, \Sigma_0)$ and 
$P_{\widehat{f}}$ be the estimated distribution $N_d(\0, \widehat{\Sigma})$ using the regularized STV learning. 
Here, the estimated function $\widehat{f}(\x)=\x^T \widehat{F}\x$ in the RKHS 
provides the estimator of the covariance matrix $\widehat{\Sigma} = (I+\widehat{F})^{-1}$. 
Then, 
\begin{align*}
\mathrm{TV}(N_d(\0,\Sigma_0),\,N_d(\0,\widehat{\Sigma}))\lesssim \varepsilon + \sqrt{\frac{d^2}{n}}
\end{align*}
holds with high probability. 
As shown in \cite{devroye18:_gauss}, 
the lower bound of the TV distance between $N_d(\0,\Sigma_0)$ and $N_d(\0,\Sigma_1)$ is given by 
\begin{align*}
 \frac{\min\{1,\|\Sigma_0\|_{\mathrm{op}}^{-1}\|\Sigma_1-\Sigma_0\|_{\mathrm{F}}\}}{100}\leq \mathrm{TV}(N_d(\0,\Sigma_0), N_d(\0,\Sigma_1)), 
\end{align*}
where $\|\cdot\|_{\mathrm{op}}$ is the operator norm defined as the maximum singular value and 
$\|\cdot\|_{\mathrm{F}}$ is the Frobenius norm. 
If $\varepsilon+\sqrt{d^2/n}\lesssim 1$, we have 
\begin{align*}
 \|\Sigma_0-\widehat{\Sigma}\|_{\mathrm{F}} \lesssim \varepsilon + \sqrt{\frac{d^2}{n}}, 
\end{align*}
where $\|\Sigma_0\|_{\mathrm{op}}$ is regarded as a constant. 
 \cite{DBLP:journals/jmlr/GaoYZ20}
 proved that the estimater $\widehat{\Sigma}$ based on the GAN method attains the error bound in terms of the operator norm, 
 \begin{align*}
  \|\Sigma_0-\widehat{\Sigma}\|_{\mathrm{op}} \lesssim \varepsilon + \sqrt{\frac{d}{n}}. 
 \end{align*}
 The relationship between the Frobenius norm and the operator norm leads to the inequality, 
 \begin{align*}
  \|\Sigma_0-\widehat{\Sigma}\|_{\mathrm{F}}\leq 
  \sqrt{d}\|\Sigma_0-\widehat{\Sigma}\|_{\mathrm{op}}
  \lesssim \sqrt{d} \varepsilon + \sqrt{\frac{d^2}{n}}. 
 \end{align*}
A naive application of the result in \cite{DBLP:journals/jmlr/GaoYZ20}
leads to $\sqrt{d}$ factor to $O(\varepsilon)$ term. 

In the same way as the estimation of the mean vector, the estimator $\widehat{\Sigma}$
attains the minimax optimality. Indeed, the following theorem holds. 
The proof is shown in Appendix~\ref{appendix:subsec_minimax-opt-NormalCov}. 
\begin{theorem}
 \label{thm:minimax-opt-NormalCov}
Let us consider the statistical model $\mathcal{N}_R=\{N_d(\0, \Sigma)\,:\,\Sigma\in \mathfrak{T}_R\}$, 
where $\mathfrak{T}_R=\{\Sigma\in\Rbb^{d\times d}\,:\,\Sigma\succ{O},\, \|\Sigma\|_{\mathrm{op}}\leq 1+R\}$
for a positive constant $R>1/2$. Then, 
\begin{align*}
 \inf_{\widehat{\Sigma}}\sup_{Q:{\displaystyle \inf_{\Sigma\in\mathfrak{T}_R}}\!\!
 \mathrm{TV}(N_d(\0,\Sigma),Q)<\varepsilon}
 Q\bigg(\|\widehat{\Sigma}-\Sigma\|_{\mathrm{F}} \gtrsim\varepsilon+\sqrt{\frac{d^2}{n}}\bigg)>0  
\end{align*}
holds for $\varepsilon\leq 1/2$. The above lower bound is valid even for the unconstraint model, i.e.,
 $R=\infty$. 
\end{theorem}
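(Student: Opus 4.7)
The plan is to prove the lower bound $\varepsilon+\sqrt{d^2/n}$ by deriving the two additive pieces separately using standard minimax techniques and combining them by taking the maximum (which matches the sum up to a constant factor). I would work directly with the unconstrained model $R=\infty$; for finite $R>1/2$ the same constructions fit inside $\mathfrak{T}_R$ by keeping the deviations from $I$ small, which is harmless in the regime $\varepsilon,\sqrt{d^2/n}\lesssim 1$ in which the statement is non-vacuous.

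For the contamination term $\varepsilon$, I would apply Le Cam's two-point method. Take $\Sigma_0=I$ and $\Sigma_1=I+c\varepsilon E$ with $E$ symmetric, $\|E\|_{\mathrm{F}}=1$, and bounded operator norm (for instance $E=(e_1 e_2^\top+e_2 e_1^\top)/\sqrt{2}$), where $c$ is a small absolute constant. The Gaussian TV upper bound of \cite{devroye18:_gauss} near the identity gives $\mathrm{TV}(N_d(\0,\Sigma_0),N_d(\0,\Sigma_1))\leq C\|\Sigma_0-\Sigma_1\|_{\mathrm{F}}=Cc\varepsilon$, which can be made below $2\varepsilon$ by choosing $c$ small. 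The mixture $Q=\tfrac12 N_d(\0,\Sigma_0)+\tfrac12 N_d(\0,\Sigma_1)$ then satisfies $\mathrm{TV}(N_d(\0,\Sigma_i),Q)<\varepsilon$ for both $i=0,1$, placing $Q$ in the $\varepsilon$-contamination set of each candidate. A standard Le Cam testing argument then forces any estimator $\widehat{\Sigma}$ to have $Q$-probability at least a positive constant of lying at Frobenius distance $\gtrsim\|\Sigma_0-\Sigma_1\|_{\mathrm{F}}/2\asymp\varepsilon$ from one of $\Sigma_0,\Sigma_1$.

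For the statistical term $\sqrt{d^2/n}$, I would use Fano's inequality with a Gilbert-Varshamov packing over the $\Theta(d^2)$-dimensional cube of symmetric $\{0,1\}$-valued matrices. This produces $M\geq\exp(cd^2)$ symmetric matrices $V_1,\ldots,V_M$ with pairwise separation $\|V_i-V_j\|_{\mathrm{F}}^2\geq c'd^2$. Setting $\Sigma_j=I+\eta V_j$ with $\eta=\gamma/\sqrt{n}$ for a small $\gamma$ keeps $\Sigma_j\in\mathfrak{T}_R$ in the regime $\sqrt{d^2/n}\lesssim R$, yields separation $\|\Sigma_i-\Sigma_j\|_{\mathrm{F}}\gtrsim\sqrt{d^2/n}$, and gives the $n$-fold KL bound $n\,\mathrm{KL}(N_d(\0,\Sigma_i)\|N_d(\0,\Sigma_j))\lesssim n\eta^2\|V_i-V_j\|_{\mathrm{F}}^2\lesssim d^2\lesssim\log M$. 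Fano's inequality then delivers the $\sqrt{d^2/n}$ lower bound in Frobenius norm even without contamination, and it transfers to the contaminated model by having the adversary choose $Q=N_d(\0,\Sigma_j)$ itself (so the contamination set trivially contains $Q$).

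The main obstacle is obtaining the $d^2$ dependence rather than the operator-norm rate $d$ appearing in prior work such as \cite{chen18:_robus,DBLP:journals/jmlr/GaoYZ20}. Producing $d^2$ requires a packing that activates all $\Theta(d^2)$ symmetric degrees of freedom simultaneously, so the standard single-direction $\sqrt{d/n}$ constructions do not suffice; this is precisely what the Gilbert-Varshamov packing above achieves. Coupled with it is the need to verify the quadratic KL bound $\mathrm{KL}(N_d(\0,\Sigma_i)\|N_d(\0,\Sigma_j))\lesssim\|\Sigma_i-\Sigma_j\|_{\mathrm{F}}^2$ in the small-perturbation regime $\eta=O(1/\sqrt{n})$, via a second-order Taylor expansion of $\tfrac12[\mathrm{tr}(\Sigma_j^{-1}\Sigma_i-I)-\log\det(\Sigma_j^{-1}\Sigma_i)]$ around $\Sigma_j^{-1}\Sigma_i=I$ together with a uniform well-conditioning check on the packing; this is the most delicate technical piece of the argument.
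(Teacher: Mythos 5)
Your proposal is correct, and its overall architecture --- a statistical term obtained from Fano's inequality over a $\Theta(d^2)$-parameter family near the identity, plus a contamination term obtained from a two-point argument calibrated so that the two covariances are within TV distance $O(\varepsilon)$, combined via $\max(a,b)\gtrsim a+b$ --- is the same as the paper's. The technical instantiations differ in both halves. For the $\sqrt{d^2/n}$ term the paper does not build an explicit Gilbert--Varshamov packing; it invokes the volume-ratio form of Fano's inequality from Ma and Wu, applied to the Frobenius ball $T_r=\{I+A:\|A\|_{\mathrm{F}}\le r\}$ with $r=\sqrt{d^2/n}$, using $\mathrm{vol}(T_r)/\mathrm{vol}(B_\varepsilon)=(r/\varepsilon)^{d(d+1)/2}$ together with the same KL bound $\mathrm{KL}(N(\0,\Sigma_1),N(\0,\Sigma_0))\le\tfrac{1}{2(1-r)^2}\|\Sigma_1-\Sigma_0\|_{\mathrm{F}}^2$ that you flag as the delicate step; this sidesteps the packing construction entirely, at the cost of citing a less elementary result. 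For the $\varepsilon$ term the paper invokes the general modulus-of-continuity transfer theorem of Chen, Gao and Ren for Huber contamination and lower-bounds the modulus $\omega(\varepsilon,\mathfrak{T}_R)\ge\varepsilon$ via Pinsker's inequality ($\mathrm{TV}\le\|\Sigma_0-\Sigma_1\|_{\mathrm{F}}$ near the identity), whereas you hand-roll the equivalent Le Cam mixture argument using the Devroye--Mehrabian--Reddad TV upper bound; both routes are valid, and yours has the minor advantage of working directly with the TV-contamination neighborhood appearing in the theorem statement rather than passing through the Huber mixture model. Your explicit attention to keeping $\|\eta V_j\|_{\mathrm{op}}$ small so that the perturbed covariances stay in $\mathfrak{T}_R$ and the KL expansion is uniformly valid mirrors the paper's restriction $r<1/2$ (i.e.\ $n>4d^2$), so no gap there either.
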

The covariance estimator based on STV learning attains the minimax optimal rate for the Frobenius norm, 
while the optimality in the operator norm is studied 
in several works~\citep{chen18:_robus,DBLP:journals/jmlr/GaoYZ20,liu21:_robus_w_gan_based_estim}. 
Table~\ref{tbl:CovarianceMatrix_Estimate} shows theoretical results revealed by some works.

Let us show simple numerical results to confirm the feasibility of our method. 
We compare robust estimators for the mean vector and the full covariance
matrix under Huber contamination. 
The expectation in the loss function is computed by sampling from the normal distribution. 

For the mean vector of the multivariate normal distribution, 
the model $p_f(\x)=\exp\{\x^T\f-\|\f\|_2^2/2\}$ and the RKHS $\mathcal{H}_U$ endowed with the linear kernel are used. 
The data is generated from $0.9 N_d(\0, I)+0.1 N_d(5\cdot{\bm e},I)$ and the target is to estimate the mean vector of $N_d(\0, I)$, where ${\bm e}$ is the $d$-dimensional vector $(1,1,\ldots,1)^T\in\Rbb^d$. 
We examine the estimation accuracy of the component-wise median, the GAN-based estimator using Jensen-Shannon loss, and the minimum TV distance estimator.  
We used the regularized STV-based estimator \eqref{eqn:reg-STV-learn_additive-full} with the gradient descent/ascent method to solve the min-max optimization problem. 
The regularization parameters are set to $1/U^2=10^{-4}$ and $1/r^2=3\times 10^{-5}$. 
The results are presented in Tables \ref{tbl:mean-data} and \ref{tbl:mean-dim}. 
As shown in theoretical analysis in \cite{chen18:_robus}, 
the component-wise median is sub-optimal. 
The TV-based estimator is not efficient, partially because of the difficulty of solving the optimization problem. 
As the smoothed variant of the TV-based estimator, GAN-based method and our method provide reasonable results. 

\begin{table}\centering 
\caption{Estimation of the mean vector for $N_d(\f,I)$ with $d=50$ and   the contamination ratio $\varepsilon=0.1$. The sample size varies from 50 to 1000. The averaged estimation error in the $d$-dimensional Euclidean norm is reported with the standard deviation. }
\label{tbl:mean-data}
\begin{tabular}{r|cccc}
$n$   &\multicolumn{1}{c}{JS-GAN} &\multicolumn{1}{c}{TV}&\multicolumn{1}{c}{ Comp. Median} &\multicolumn{1}{c}{reg. STV}\\\hline
50	 &2.639 (1.388)&2.105 (0.376)&1.734 (0.042)&1.391 (0.237) \\
100	 &1.397 (0.537)&1.521 (0.222)&1.392 (0.069)&0.961 (0.124) \\
200	 &0.963 (0.138)&1.191 (0.227)&1.228 (0.092)&0.718 (0.087) \\
500	 &0.536 (0.081)&1.096 (0.139)&1.111 (0.125)&0.442 (0.051) \\
1000	 &0.326 (0.034)&1.070 (0.264)&1.062 (0.105)&0.373 (0.056) \\
\end{tabular}
\end{table}

\begin{table}\centering 
\caption{Estimation of the mean vector for $N_d(\f,I)$ with the sample size 
$n=1000$ and the contamination ratio $\varepsilon=0.1$. The dimension of data varies from 10 to 100. 
The averaged estimation error in the $d$-dimensional Euclidean norm is reported with the standard deviation. }
\label{tbl:mean-dim}
\begin{tabular}{r|cccc}
 $d$ &\multicolumn{1}{c}{JS-GAN} &\multicolumn{1}{c}{TV}&\multicolumn{1}{c}{ Comp. Median} &\multicolumn{1}{c}{reg. STV}\\\hline
10	&0.094 (0.020)&0.371 (0.210) &0.442 (0.043)&0.178 (0.057)\\
25	&0.206 (0.022)&0.743 (0.331) &0.730 (0.069)&0.299 (0.045)\\
50	&0.332 (0.046)&1.153 (0.349) &1.022 (0.092)&0.399 (0.070)\\
75	&0.405 (0.069)&1.185 (0.374) &1.256 (0.125)&0.482 (0.105)\\
100	&0.555 (0.052)&1.326 (0.172) &1.505 (0.105)&0.712 (0.333)\\
\end{tabular}
\end{table}

For the estimation of the full covariance matrix 
for the multivariate normal distribution, 
the model $p_f(\x)=\exp\{\x^T F \x-A(F)\}$ and the RKHS $\mathcal{H}_U$ with the quadratic kernel are used. 
The data is generated from $0.8 N_d(\0,\Sigma)+0.2 N(6\cdot {\bm e}, \Sigma)$, where $\Sigma_{ij}=2^{-\abs{i-j}}$. 
We examined the STV-based estimator with Kendall's rank correlation coefficient~\cite{kendall1938measure} and GAN-based method~\cite{DBLP:journals/jmlr/GaoYZ20}. 
We used the regularized STV-based estimator  \eqref{eqn:reg-STV-learn_additive-full} with the gradient descent/ascent method to solve the min-max optimization problem. 
The regularization parameters are set to $1/U^2=10^{-4}$ and $1/r^2=10^{-4}$. 
The results are presented in Table~\ref{tbl:variance-estimation}. 
Overall, the GAN-based estimator outperforms the other method. 
This is because the optimization technique of the GAN-based estimator using deep neural networks is 
highly developed in comparison to the STV-based estimator. 
In the STV-based method, the optimization algorithm sometimes encounters a sub-optimal solution, though the objective function is smooth. 
Developing an efficient algorithm for the STV-based method is important future work. 

\begin{table}\centering 
\caption{Median of the Frobenius norm with median absolute deviation. 
Estimation of the covariance matrix for $N_d(\0,\Sigma)$ with the sample size $n=50000$ and the contamination ratio $\varepsilon=0.2$. 
The dimension of data varies from 5 to 25. 
The median of the estimation error in the Frobenius norm is reported with the median absolute deviation. }
\label{tbl:variance-estimation}
\begin{tabular}{r|lrl}
$d$  &\multicolumn{1}{c}{JS-GAN}  &\multicolumn{1}{c}{Kendall's tau}&\multicolumn{1}{c}{reg. STV} \\\hline
5      &0.083 (0.023) &  4.761 (0.088) & 0.912 (0.062)\\
10     &0.130 (0.027) &  9.292 (0.249) & 1.657 (0.471)\\
25     &0.203 (0.007) & 23.083 (0.459) & 2.051 (0.480)\\
\end{tabular}
\end{table}

\section{Approximation of Expectation and Learning Algorithm}
\label{Approximation_and_Algorithm}

Let us consider how to solve the min-max problem of regularized STV learning. 
The optimization problem is given by 
\begin{align*}
 \sup_{f\in\mathcal{H}_r} \inf_{u\in\mathcal{H}_U,\abs{b}\leq U}
 \mathbb{E}_{P_f}[\sigma(u(X)-b)] - \mathbb{E}_{P_n}[\sigma(u(X)-b)], 
\end{align*}
where $\sigma$ is the sigmoid function. For the optimization problems on the infinite-dimensional RKHS, usually 
the representer theorem applies to reduce the infinite-dimensional optimization problem to the finite-dimensional one. 
However, the representer theorem does not work to the above optimization problem, 
since the expectation w.r.t. $P_f, f\in\mathcal{H}_r$ appears. 
That is, the loss function may depend on $f(x)$ of all $x\in\mathcal{X}$. 
Even for the finite-dimensional RKHS, the computation of the expectation is often intractable. 
Here, we use the importance sampling method to overcome the difficulty. 

We approximate the expectation w.r.t. $P_f$ by the 
sample mean over $Z_1,\ldots,Z_\ell\sim q$, where $q$ is an arbitrary density function such that 
the sampling from $q$ and the computation of the probability density $q(z)$ is feasible. 
A simple example is $q=p_0$, i.e., the uniform distribution on $\mathcal{X}$ w.r.t. the base measure~$\mu$. 
Let us define $\sigma_X:=\sigma(u(X)-b)$. The expectation $\mathbb{E}_{X\sim P_{f}}[\sigma(u(X)-b)]$ is approximated by 
$\bar{\sigma}_\ell$, 
\begin{align}
\label{eqn:bar-sigma}
\bar{\sigma}_\ell:=
 \frac{\frac{1}{\ell}\sum_{j=1}^{\ell} \frac{e^{f(Z_j)}}{q(Z_j)}\sigma_{Z_j}}
 {\frac{1}{\ell}\sum_{j=1}^{\ell} \frac{e^{f(Z_j)}}{q(Z_j)}}
 \approx
 \mathbb{E}_{Z\sim q}\bigg[\frac{p_f(Z)}{q(Z)}\sigma_Z\bigg]
 =
\mathbb{E}_{X\sim P_{f}}[\sigma(u(X)-b)]. 
\end{align}
As an approximation of $X\sim P_f$, we employ the probability distribution on $Z_1,\ldots,Z_\ell$ defined by
\begin{align}
 \label{eqn:approximate-dist-hatP}
\widehat{P}_f(Z=Z_i)=\frac{e^{f(Z_i)}/q(Z_i)}{\sum_{j=1}^{\ell}e^{f(Z_j)}/q(Z_j)},\ \ Z_i\sim_{i.i.d.} q,  \quad i=1,\ldots,\ell. 
\end{align}
Then, an approximation of the estimator $\widehat{f}_r$ is given by the minimizer 
of $\mathrm{STV}_{\mathcal{H}_U,\sigma}(P_n,\widehat{P}_f)$, i.e., 
\begin{align*}
 \min_{f\in\mathcal{H}_r}\mathrm{STV}_{\mathcal{H}_U,\sigma}(P_n,\widehat{P}_f)\  \longrightarrow \ \widetilde{f}_r. 
\end{align*}
The approximation, $\widetilde{f}_r$, is usable regardless of the dimensionality of the model $\mathcal{H}$. 
Let us evaluate the error bound of the approximate estimator $P_{\widetilde{f}_r}$. 
\begin{theorem}
 \label{theorem:error_rate-approximate-STV-learning}
 Assume Assumption~(C). 
 Suppose that $\sup_{x\in\mathcal{X}}k(x,x)\leq K^2$. 
 Let us consider the approximate estimator $P_{\widetilde{f}_r}$ obtained by the regularized STV learning using 
 $\mathrm{STV}_{\mathcal{H}_U,\sigma}$, where $\sigma$ is the sigmoid function satisfying Assumption~(B)
 and the constraint $\abs{b}\leq U$ is imposed the STV distance. 
 Then, the estimation error of $P_{\widetilde{f}_r}$ is 
\begin{align*}
  \mathrm{TV}(P_{f_0}, P_{\widetilde{f}_r})
\lesssim
\varepsilon + 
 \frac{r}{U} + 
 \frac{C_{{\mathcal{H}_U}}}{\sqrt{n}}
  +
 \frac{e^{Kr}(r+U)}{\sqrt{\ell}} + 
 \sqrt{\log\frac{1}{\delta}}\left(\frac{1}{\sqrt{n}}+\frac{e^{Kr}}{\sqrt{\ell}}\right)
\end{align*}
with probability greater than $1-\delta$, where 
$\mathrm{C}_{\mathcal{H}_U}=UK$ for the infinite-dimensional $\mathcal{H}_U$
and 
$\mathrm{C}_{\mathcal{H}_U}=\sqrt{d}$ for the finite-dimensional $\mathcal{H}_U$ with $d=\dim\mathcal{H}$. 
\end{theorem}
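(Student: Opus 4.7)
The strategy is to mirror the proof of Theorem~\ref{thm:infinite-STV-learning-bound}, inserting the importance-sampling distribution $\widehat{P}$ into the chain of inequalities and peeling off an extra Monte Carlo fluctuation term. First, since $\sigma$ is the sigmoid and $\|\widetilde{f}_r - f_0\| \lesssim r$ (as $\widetilde{f}_r \in \mathcal{H}_r$ and $\|f_0\|\leq r$ implicitly), the gap bound \eqref{eqn:diff_TV_STV_sigmoid} yields
$$\mathrm{TV}(P_{f_0},P_{\widetilde{f}_r}) \;\leq\; \mathrm{STV}_{\mathcal{H}_U,\sigma}(P_{f_0},P_{\widetilde{f}_r}) + \tfrac{2r}{U}.$$
Two triangle inequalities then give
$$\mathrm{STV}(P_{f_0},P_{\widetilde{f}_r}) \;\leq\; \mathrm{STV}(P_{f_0},P_n) + \mathrm{STV}(P_n,\widehat{P}_{\widetilde{f}_r}) + \mathrm{STV}(\widehat{P}_{\widetilde{f}_r},P_{\widetilde{f}_r}).$$
By the optimality of $\widetilde{f}_r$, $\mathrm{STV}(P_n,\widehat{P}_{\widetilde{f}_r}) \leq \mathrm{STV}(P_n,\widehat{P}_{f_0}) \leq \mathrm{STV}(P_n,P_{f_0})+\mathrm{STV}(P_{f_0},\widehat{P}_{f_0})$. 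The problem reduces to bounding (i)~$\mathrm{STV}(P_{f_0},P_n)$ and (ii)~$\sup_{f\in\mathcal{H}_r}\mathrm{STV}_{\mathcal{H}_U,\sigma}(\widehat{P}_f,P_f)$.

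For (i), Assumption~(C) combined with $\mathrm{STV}\leq\mathrm{TV}$ gives $\mathrm{STV}(P_{f_0},P_\varepsilon)<\varepsilon$. The remaining $\mathrm{STV}(P_\varepsilon,P_n)$ is a uniform deviation over the class $\{\sigma(u(\cdot)-b):u\in\mathcal{H}_U,|b|\leq U\}$, and the Rademacher/McDiarmid argument already used in the proofs of Theorems~\ref{thm:reg-STV-learning} and~\ref{thm:infinite-STV-learning-bound}---exploiting $|u(x)|\leq K\|u\|\leq KU$ and $1$-Lipschitzness of $\sigma$---bounds it by $C_{\mathcal{H}_U}/\sqrt{n}+\sqrt{\log(1/\delta)/n}$, with $C_{\mathcal{H}_U}\asymp UK$ in the infinite-dimensional case and $C_{\mathcal{H}_U}\asymp\sqrt{d}$ in the finite-dimensional one.

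The main obstacle is (ii), the self-normalized Monte Carlo fluctuation
$$\sup_{f\in\mathcal{H}_r,\,u\in\mathcal{H}_U,\,|b|\leq U}\bigl|\bar{\sigma}_\ell - \mathbb{E}_{P_f}[\sigma(u(X)-b)]\bigr|,$$
where $\bar{\sigma}_\ell=A_\ell/B_\ell$ with $A_\ell=\ell^{-1}\sum_j w_f(Z_j)\sigma(u(Z_j)-b)$, $B_\ell=\ell^{-1}\sum_j w_f(Z_j)$, and $w_f(z)=e^{f(z)}/q(z)$. Since $\mathbb{E}[A_\ell]/\mathbb{E}[B_\ell]=\mathbb{E}_{P_f}[\sigma(u-b)]$, the identity
$$\frac{A_\ell}{B_\ell}-\frac{\mathbb{E}A_\ell}{\mathbb{E}B_\ell} \;=\; \frac{A_\ell-\mathbb{E}A_\ell}{B_\ell}\;-\;\frac{\mathbb{E}A_\ell}{\mathbb{E}B_\ell}\cdot\frac{B_\ell-\mathbb{E}B_\ell}{B_\ell}$$
reduces the task to uniform concentration of $A_\ell$ and $B_\ell$. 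The reproducing property gives $|f(x)|\leq K\|f\|\leq Kr$, hence $w_f(z)\leq e^{Kr}/q(z)$; after the $q$-weighting of $Z_j\sim q$ in expectation, the effective envelope on the empirical process is $e^{Kr}$. I then bound the Rademacher complexity of the class $\{z\mapsto w_f(z)\sigma(u(z)-b):f\in\mathcal{H}_r,u\in\mathcal{H}_U,|b|\leq U\}$ by $e^{Kr}(r+U)/\sqrt{\ell}$ via two contraction steps: $\sigma$ is $1$-Lipschitz in $u(z)-b$, contributing the $U$ factor through standard contraction for $u\in\mathcal{H}_U$ and $|b|\leq U$; and the map $f\mapsto e^{f}$ is $e^{Kr}$-Lipschitz on the RKHS ball of radius $r$, contributing the $r$ factor. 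A bounded-difference tail inequality with envelope $e^{Kr}$ then yields the $e^{Kr}\sqrt{\log(1/\delta)/\ell}$ term. Finally, verifying $B_\ell\geq\tfrac{1}{2}\mathbb{E}B_\ell$ uniformly in $f$ with high probability stabilizes the denominator, and this event is absorbed into $\delta$. The delicate step is preserving linearity in $r+U$ rather than a multiplicative $rU$, which requires peeling the product structure of $w_f\cdot \sigma(u-b)$ before applying contraction separately to the weight and the sigmoid.
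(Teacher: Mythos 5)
Your proposal is correct and follows essentially the same route as the paper's proof: the same decomposition into the STV--TV bias term $r/U$, the contaminated-sample Rademacher bound $C_{\mathcal{H}_U}/\sqrt{n}$, and a uniform Monte Carlo fluctuation bound of order $e^{Kr}(r+U)/\sqrt{\ell}$ obtained by a contraction argument that peels the product $e^{f}\cdot\sigma(u-b)$ into separate contributions from $\mathcal{H}_r$ and $\mathcal{H}_U\oplus[-U,U]$ (the paper's Lemma on the sampling-based approximation). The only cosmetic difference is that you compare $\widehat{P}_{\widetilde{f}_r}$ directly to $\widehat{P}_{f_0}$ via feasibility of $f_0$, whereas the paper routes through $\widehat{P}_{\widehat{f}_r}$ and then $P_{\widehat{f}_r}$; both yield the same bound.
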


For the finite-dimensional RKHS $\mathcal{H}$ with the dimension $d$ endowed with the bounded kernel, 
the convergence rate of $P_{\widetilde{f}}$ with $U=r\sqrt{n}$ and $\ell=n^2 r^2 e^{2Kr}$ attains 
\begin{align*}
\mathrm{TV}(P_{f_0}, P_{\widetilde{f}_r})\lesssim \varepsilon+\sqrt{\frac{d}{n}}. 
\end{align*}
We expect that a tighter bound for the approximation will lead to a more reasonable sample size such as $\ell=O(n)$. 

For the infinite-dimensional RKHS, 
the convergence rate of $P_{\widetilde{f}}$ with $\ell=n,\, U=n^{1/4},\, r=O(\log\log{n})$ attains the following bound
\begin{align*}
 \mathrm{TV}(P_{f_0}, P_{\widetilde{f}_r})\lesssim \varepsilon+\frac{1}{n^{1/4}}
\end{align*}
with high probability, where the poly-log order is omitted. 
Since the loss function of the approximate estimator depends on $f$ via $f(Z_j),j=1,\ldots,\ell$, 
the representation theorem works to compute the estimator~$\widetilde{f}_r$. 
Indeed, 
the optimal solution of $\widetilde{f}_r$ is expressed by the linear sum of $k(Z_j,\cdot),j=1,\ldots,\ell$ and the optimal solution of $u$ is expressed by the linear sum of $k(Z_j,\cdot),j=1,\ldots,\ell$ and $k(X_i,\cdot),i=1,\ldots,n$. 
Hence, the problem is reduced to solve the finite-dimensional problem. 

According to Theorem~\ref{theorem:error_rate-approximate-STV-learning}, the exponential term, $e^{Kr}$, appears in the upper bound, meaning that we need a strong regularization to surpress that term. Recently, \cite{dai2019kernel,dai2020exponential} proposed an approximation method of the expectation in the penalized MLE for the kernel exponential family. It is an important issue to verify whether similar methods are also effective for STV-learning.

\section{Concluding Remarks}
\label{Conclusion}

Since \cite{gao19:_robus_estim_via_gener_adver_networ} connected the classical robust estimation with GANs, 
computationally efficient robust estimators using deep neural networks have been proposed. 
Most existing works focus on the theoretical analysis of estimators for the normal mean and covariance matrix
under various conditions for contaminated distributions. 

In this paper, we studied the IPM-based robust estimator for the kernel exponential family, including 
infinite-dimensional models for probability densities. As a class of IPMs, we defined the STV distance. 
The relationship between the TV distance and STV distance has an important role to evaluate the 
estimation accuracy of the STV-based robust estimator. 
For the covariance matrix estimation of the multivariate normal distribution, 
we proved the STV-based estimator attains the minimax optimal estimator for the Frobenius norm, 
while existing estimators are minimax optimal for the operator norm. 
Furthermore, 
we proposed an approximate STV-based estimator using importance sampling 
to mitigate the computational difficulty. 
The framework studied in this paper is regarded as a natural extension of existing IPM-based robust estimators for multivariate normal distribution to more general statistical models with theoretical guarantees. 

The computational efficiency and stability of STV learning have room for improvement. 
Recent progress in robust statistics research has also been made from the viewpoint of computational algorithms; 
see \cite{diakonikolas16:_robus_estim_high_dimen_comput_intrac,diakonikolas2023algorithmic}. 
Incorporating these results into our research to improve our algorithms is an important research direction for practical applications.

\bmhead{Acknowledgments}
This work was partially supported by JSPS KAKENHI Grant Numbers 19H04071, 20H00576, and 23H03460.

\section*{Declarations}
\begin{description}
    \item[Data availability] The dataset analyzed during the current study is available in the GitHub, https://github.co.jp/
    \item[Conflict of interest]
     The corresponding author states that there is no conflict of interest.
\end{description}

Some journals require declarations to be submitted in a standardised format. Please check the Instructions for Authors of the journal to which you are submitting to see if you need to complete this section. If yes, your manuscript must contain the following sections under the heading `Declarations':

\begin{itemize}
\item Funding
\item Conflict of interest/Competing interests (check journal-specific guidelines for which heading to use)
\item Ethics approval 
\item Consent to participate
\item Consent for publication
\item Availability of data and materials
\item Code availability 
\item Authors' contributions
\end{itemize}

\noindent
If any of the sections are not relevant to your manuscript, please include the heading and write `Not applicable' for that section. 

\bigskip
\begin{flushleft}%
Editorial Policies for:

\bigskip\noindent
Springer journals and proceedings: \url{https://www.springer.com/gp/editorial-policies}

\bigskip\noindent
Nature Portfolio journals: \url{https://www.nature.com/nature-research/editorial-policies}

\bigskip\noindent
\textit{Scientific Reports}: \url{https://www.nature.com/srep/journal-policies/editorial-policies}

\bigskip\noindent
BMC journals: \url{https://www.biomedcentral.com/getpublished/editorial-policies}
\end{flushleft}

\begin{appendices}

\section{Proofs of Proposition and Lemmas in Section~\ref{sec:STV}}
\label{app:proof-modulus}

\subsection{Proof of Lemma~\ref{lemma-sigma0}}
\begin{proof}
 For any fixed $u\in\mathcal{F}$, let us define $\xi(b)$ and $\bar{\xi}(b)$ by 
\begin{align*}
\xi(b)= Q\big(u(x)<b \big)-P\big(u(x)<b \big),\quad 
\bar{\xi}(b)= Q\big(u(x)\leq b \big)-P\big(u(x)\leq b \big). 
\end{align*}
Note that $\lim_{b\rightarrow\pm\infty}\xi(b)=\lim_{b\rightarrow\pm\infty}\bar{\xi}(b)=0$. 
We see that $\xi(b)$ is left-continuous and $\bar{\xi}(b)$ is right-continuous. 
From the definition of the distribution function, $\xi(b)=\bar{\xi}(b)$ holds except for discontinuous points. 
Furthermore, for any $b_0\in\Rbb$, 
$\lim_{b\rightarrow b_0\pm 0}\xi(b)=\lim_{b\rightarrow b_0\pm 0}\bar{\xi}(b)$ holds. 
In the below, we prove $\sup_b\bar{\xi}(b)=\sup_b\xi(b)$. 

Suppose $\lim_{n\rightarrow\infty}\xi(b_n)=\sup_b\xi(b)$. 
When $b_n$ is bounded, 
there exists $b_0$ such that a subsequence of $b_n$ converges $b_0$. 
Then, $\lim_{b\rightarrow b_0-0}\xi(b)=\sup_b\xi(b)$ or
$\lim_{b\rightarrow b_0+0}\xi(b)=\sup_b\xi(b)$ holds. This means $(0\leq) \sup_b\xi(b)\leq \sup_b\bar{\xi}(b)$. 
Suppose $\lim_{n\rightarrow\infty}\bar{\xi}(b_n')=\sup_b\bar{\xi}(b)$. When $b_n'$ is unbounded, 
$\sup_b\bar{\xi}(b)=0$ should hold. Hence $\sup_b\xi(b)=\sup_b\bar{\xi}(b)=0$. 
When $b_n'$ is bounded, there exists $b_0'$ such that a subsequence of $b_n'$ convergences $b_0'$. Then, 
$\lim_{b\rightarrow b_0'-0}\bar{\xi}(b)=\sup_b\bar{\xi}(b)$ or
$\lim_{b\rightarrow b_0'+0}\bar{\xi}(b)=\sup_b\bar{\xi}(b)$ holds. 
This means that $\sup_b\bar{\xi}(b)\leq \sup_b\xi(b)$. 
As a result, when $b_n$ is bounded, we have $\sup_b\bar{\xi}(b)=\sup_b\xi(b)$. 

If $b_n$ is unbounded, we have $\sup_b\xi(b)=0$ and $Q\big(u(x)<b \big)-P\big(u(x)<b \big)\leq 0$ for any $b$. 
Hence, for any 
$b_0$, $Q\big( u(x)\leq b_0 \big)-P\big( u(x)\leq b_0 \big)=\lim_{b\rightarrow b_0+0}Q\big( u(x)<b \big)-P\big( u(x)<b \big)\leq 0$. 
Thus, we have $\sup_b\bar{\xi}(b)=0$. 

In any case we have $\sup_b\xi(b)=\sup_b \bar{\xi}(b)$. Thus, 
\begin{align*}
&\phantom{=}\sup_{u\in\mathcal{F},b}P(u(x)\geq b)-Q(u(x)\geq b)\\
& =
\sup_{u\in\mathcal{F},b}Q(u(x)<b)-P(u(x)<b) \\
&=
\sup_{u\in\mathcal{F},b}Q(u(x)\leq b)-P(u(x)\leq b) \\ 
&=
\sup_{u\in\mathcal{F},b}Q(-u(x)\geq -b)-P(-u(x)\geq -b)\\
&=
\sup_{u\in\mathcal{F},b}Q(u(x)\geq b)-P(u(x)\geq b). 
\end{align*}
This means that the modulus is no need
\end{proof}

\subsection{Proof of Lemma~\ref{lemma:smoothed-TV}}
\begin{proof}
 The equality $\sigma(z)+\sigma(-z)=1$ in Assumption~(B) leads to  
 $\mathbb{E}_{P}[\sigma(u(X)-b)]-\mathbb{E}_{Q}[\sigma(u(X)-b)] =
 \mathbb{E}_{Q}[\sigma(-u(X)+b)]-\mathbb{E}_{P}[\sigma(-u(X)+b)]$. 
 Assumption~(A) guarantees that $-u\in\mathcal{U}$. Hence, we can drop the modulus in the definition of the STV distance. 
\end{proof}

\subsection{Proof of Lemma~\ref{lemma:RKHS_smoothed-TV}}
\begin{proof}
 For any measurable set $A\in\mathcal{B}$, 
 the indicator function $\1_A\in L_1(\mathcal{X})$ 
 is approximated by the series of continuous functions, $\{f_m\}\subset C(\mathcal{X})$, i.e., 
 $\|\1_A-f_m\|_1\rightarrow0\,(m\rightarrow\infty)$. 
 Here, $L_1$ denotes the set of absolute integrable functions for any probability measure $P\in\mathcal{P}$. 
 We assume $f_m\in[0,1]$ by clipping $f_m$ by $0$ and $1$. 
 Let us consider the approximation of $f_m$ by $\sigma(u-b)$ with some 
 $u\in\mathcal{H}$ and $b\in\mathbb{R}$. 
 Since $\mathcal{H}$ is the universal RKHS, 
 there exist $\{u_{m,k}\}_k\subset\mathcal{H}$ and  $\{b_{m,k}\}_k\subset\Rbb$ such that 
 $\|f_m-\sigma(u_{m,k}-b_{m,k})\|_\infty\rightarrow0,\,(k\rightarrow\infty)$ for each $m$. 
 As a result, one can prove that $\|\1_A-\sigma(u_{m,k_m}-b_{m,k_m})\|_1\rightarrow0\,(m\rightarrow\infty)$. 
 Hence, 
 \begin{align*}
  \int\1_A \mathrm{d}(P-Q)=\lim_{m\rightarrow\infty}\int \sigma(u_{m,k_m}-b_{m,k_m}) \mathrm{d}(P-Q)
  \leq \mathrm{STV}_{\mathcal{H},\sigma}(P,Q)
 \end{align*}
 for any $A$, meaning that $\mathrm{TV}(P,Q)\leq \mathrm{STV}_{\mathcal{H},\sigma}(P,Q)$. 
 On the other hand, $\mathrm{TV}(P,Q)$ has the expression
 $\mathrm{TV}(P,Q)=\sup_{u\in L_0, 0\leq u\leq 1}\lvert\int u \mathrm{d}(P-Q)\rvert\geq \mathrm{STV}_{\mathcal{H},\sigma}(P,Q)$. 
 Therefore, the equality $\mathrm{STV}_{\mathcal{H},\sigma}(P,Q)=\mathrm{TV}(P,Q)$ holds. 

 Let us prove another equality. 
 Suppose that the sequence $\{(u_k,b_k)\}_k\subset\mathcal{H}\times\Rbb$ satisfies 
 \begin{align*}
  \mathbb{E}_{P}[\sigma(u_k-b_k)]-\mathbb{E}_{Q}[\sigma(u_k-b_k)]
  \longrightarrow
  \mathrm{STV}_{\mathcal{H},\sigma}(P,Q)=\mathrm{TV}(P,Q),\quad k\rightarrow\infty. 
 \end{align*}
 Let us define $r_k = k\vee \max\{\|u_{k'}\|\,:\, k'=1,2,\ldots,k\}$. Then, we have 
 \begin{align*}
 \mathbb{E}_{P}[\sigma(u_k-b_k)]-\mathbb{E}_{Q}[\sigma(u_k-b_k)]
  \leq 
  \mathrm{STV}_{\mathcal{H}_{r_k},\sigma}(P,Q)\leq \mathrm{TV}(P,Q). 
 \end{align*}
 As $r_k\nearrow\infty$, we see that 
 $\lim_{r\rightarrow\infty}\mathrm{STV}_{\mathcal{H}_r,\sigma}(P,Q)=\mathrm{TV}(P,Q)$. 
\end{proof}

\subsection{Proof of Proposition~\ref{prop:TV-STV_bias_bound}}
\begin{proof}
 The non-negativity is confirmed from the property of $\mathrm{STV}$ with $\sigma:\Rbb\rightarrow[0,1]$. 
 Below, we derive the upper bound. 
 Due to Lemma~\ref{lemma:smoothed-TV}, we can drop the absolute value in the STV distance.
 Let us define $p=\frac{\rmd{P}}{\rmd{\mu}}$ and $q=\frac{\rmd{Q}}{\rmd{\mu}}$. Then, we have the following inequalities: 
\begin{align*}
&\phantom{=} 
 \mathrm{TV}(P, Q) - \mathrm{STV}_{c\,\mathcal{U}, \sigma}(P,Q)\\
 &\leq 
 \mathbb{E}_{P}[\1[s(X)\geq0]] - \mathbb{E}_{Q}[\1[s(X)\geq0]]
  -
 (\mathbb{E}_{P}[\sigma(c s(X))] - \mathbb{E}_{Q}[\sigma(c s(X))])\\
 & =
 \int \left\{ \1[s(x)\geq0] - \sigma(c s(x)) \right\} (p(x)-q(x)) \rmd\mu(x)\\
 & =
 \int_{p\geq q}\sigma\left(-c\log\frac{p(x)}{q(x)}\right)\left(\frac{p(x)}{q(x)}-1\right)q(x)\mathrm{d}\mu(x)\\
 &\phantom{=}
 + \int_{q>p}\sigma\left(-c\log\frac{q(x)}{p(x)}\right)\left(\frac{q(x)}{p(x)}-1\right)p(x)\mathrm{d}\mu(x).
\end{align*}
 Since the decay rate of $\sigma$ is $\lambda(c)$, the inequality
 \begin{align*}
 \mathrm{TV}(P, Q) - \mathrm{STV}_{c\,\mathcal{U}, \sigma}(P,Q)
  &\leq 
  \lambda(c)  \left\{  \int_{p\geq q}q(x)\mathrm{d}\mu(x)  + \int_{q>p}p(x)\rmd \mu(x) \right\}\\
  &= 
  \lambda(c) \int \min\{p, q\} \rmd\mu = \lambda(c) (1-\mathrm{TV}(P,Q))
 \end{align*}
 holds for $c> C_0$. 
\end{proof}

\subsection{Proof of Lemma~\ref{lemma:lower_bound_DecayRate}}
\begin{proof}
Let $y=c\log t$, then decay rate $\lambda(c)$ should satisfy 
\begin{align*}
 \sigma(-y)
 \leq 
 \frac{\lambda(c)}{e^{y/c}-1} =\frac{\lambda(c)}{y/c+\frac{1}{2!}(y/c)^2+\cdots}
 = 
 \frac{\lambda(c)c}{y+\frac{1}{2!}y^2/c+\cdots}. 
\end{align*}
Suppose $\lambda(c)=o(1/c)$. For a fixed $y>0$, we have 
 $\lim_{c\rightarrow\infty}\frac{\lambda(c)c}{y+\frac{1}{2!}y^2/c+\cdots}=0$. Hence, 
 the function $\sigma(z)$ should be $0$ on $z<0$ and thus, $\sigma(z)=1$ for $z>0$. 
 This contradicts Assumption~(B). 
\end{proof}

\subsection{Some formulae for Kernel Exponential Family}
\label{app:formulae_KEF}
\begin{lemma}
\label{lemma:KEF-basic}
 Suppose $\sup_{x\in\mathcal{X}}k(x,x)\leq K^2<\infty$ and $p_f\in\mathcal{P}_\mathcal{H}$. 
 Then, 
 \begin{align*}
  \lim_{\varepsilon\rightarrow0}\frac{A(f+\varepsilon u)-A(f)}{\varepsilon}=\mathbb{E}_{P_f}[u]
 \end{align*}
 holds for any $u\in\mathcal{H}$. Furthermore,
 \begin{align}
  \label{Af_ineq}
 &\<\mathbb{E}_{P_g}[k(x,\cdot)], f-g\>\leq A(f)-A(g)\leq \<\mathbb{E}_{P_f}[k(x,\cdot)], f-g\>,\\ 
  \label{Af_Ag_upperbound}
  &\abs{A(f)-A(g)}\leq K\|f-g\|
 \end{align}
 hold for $f,g\in\mathcal{H}$ such that $A(f)$ and $A(g)$ are finite. 
\end{lemma}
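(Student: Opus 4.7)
The plan is to exploit the reproducing property together with the boundedness assumption on $k$ to reduce everything to standard facts about log-partition functions. By the reproducing kernel inequality, $\abs{u(x)}=\abs{\<u,k(x,\cdot)\>}\leq \|u\|\sqrt{k(x,x)}\leq K\|u\|$, so every function in $\mathcal{H}$ is uniformly bounded. Consequently, for $f$ with $p_f\in\mathcal{P}_{\mathcal{H}}$ and any $u\in\mathcal{H}$, one has $e^{f(x)+\varepsilon u(x)}\leq e^{f(x)}\cdot e^{\abs{\varepsilon}K\|u\|}$, which is $\mu$-integrable; hence $A(f+\varepsilon u)$ is well-defined in a neighborhood of $\varepsilon=0$.

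For the G\^ateaux derivative, I would differentiate $A(f+\varepsilon u)=\log\int e^{f(x)+\varepsilon u(x)}\rmd\mu$ under the integral sign. The pointwise derivative of the integrand is $u(x)e^{f(x)+\varepsilon u(x)}$, dominated by $K\|u\|\,e^{f(x)}e^{\abs{\varepsilon}K\|u\|}$ for $\abs{\varepsilon}$ in a neighborhood of $0$, which is integrable. The dominated convergence theorem then gives
\begin{align*}
 \lim_{\varepsilon\rightarrow0}\frac{A(f+\varepsilon u)-A(f)}{\varepsilon}
 =\frac{\int u(x)e^{f(x)}\rmd\mu}{\int e^{f(x)}\rmd\mu}
 =\mathbb{E}_{P_f}[u].
\end{align*}

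For the two-sided bound \eqref{Af_ineq}, the key observation is that $A$ is convex on $\mathcal{H}$: H\"older's inequality applied with conjugate exponents $1/(1-t), 1/t$ gives $\int e^{(1-t)f+tg}\rmd\mu\leq (\int e^{f}\rmd\mu)^{1-t}(\int e^{g}\rmd\mu)^{t}$, i.e., $A((1-t)f+tg)\leq(1-t)A(f)+tA(g)$. Combining convexity with the directional derivative computed above yields the standard first-order inequality $A(f)\geq A(g)+\mathbb{E}_{P_g}[f-g]$. Using $\mathbb{E}_{P_g}[f-g]=\mathbb{E}_{P_g}[\<f-g,k(x,\cdot)\>]=\<f-g,\mathbb{E}_{P_g}[k(x,\cdot)]\>$ (the interchange is justified as a Bochner integral since $\mathbb{E}_{P_g}\|k(x,\cdot)\|=\mathbb{E}_{P_g}\sqrt{k(x,x)}\leq K$) proves the lower bound. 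Swapping the roles of $f$ and $g$ gives the upper bound.

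The Lipschitz estimate \eqref{Af_Ag_upperbound} then follows by Cauchy--Schwarz from either side of \eqref{Af_ineq}, together with $\|\mathbb{E}_{P_h}[k(x,\cdot)]\|\leq\mathbb{E}_{P_h}\|k(x,\cdot)\|\leq K$ for $h\in\{f,g\}$. The main technical subtlety is justifying the interchange of differentiation and integral in the first claim; once this is established via the uniform bound $\abs{u(x)}\leq K\|u\|$, the convexity argument and the Cauchy--Schwarz conclusion are routine. A secondary point worth verifying is the existence of $\mathbb{E}_{P_g}[k(x,\cdot)]$ as a Bochner integral, which is again immediate from the uniform kernel bound.
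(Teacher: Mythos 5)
Your proof is correct and follows essentially the same route as the paper's: dominated convergence to justify differentiating $A(f+\varepsilon u)$ under the integral, convexity of $A$ for the two-sided first-order bound \eqref{Af_ineq}, and Cauchy--Schwarz with $\|\mathbb{E}_{P_h}[k(x,\cdot)]\|\leq K$ for \eqref{Af_Ag_upperbound}. The only difference is that you supply details the paper leaves implicit (the H\"older argument for convexity and the Bochner-integral justification of $\mathbb{E}_{P_g}[f-g]=\<f-g,\mathbb{E}_{P_g}[k(x,\cdot)]\>$), which is a welcome addition rather than a departure.
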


\begin{proof}
 Without loss of generality,  we assume that $K\leq 1$. 
 For $f\in\mathcal{H}$ such that $A(f)<\infty$ and $u\in\mathcal{H}$, 
 we have 
 $\lvert\frac{d}{d\varepsilon}e^{f(x)+\varepsilon u(x)}\rvert=\lvert e^{f(x)+\varepsilon u(x)}u(x)\rvert\leq e^{f(x)} e^{\lvert \varepsilon\rvert\|u\|_\infty}\|u\|_\infty$, 
 which is integrable. 
 Thus, the equality 
$\frac{\mathrm{d}}{\mathrm{d}\varepsilon}\int e^{f+\varepsilon u}\mathrm{d}\mu=
 \int \frac{\mathrm{d}}{\mathrm{d}\varepsilon} e^{f+\varepsilon u}\mathrm{d}\mu=\int e^{f+\varepsilon
 u}u\mathrm{d}\mu$ leads to 
 \begin{align*}
  \frac{\mathrm{d}}{\mathrm{d}\varepsilon}
  \log\int e^{f(x)+\varepsilon u(x)}\mathrm{d}\mu\bigg\lvert_{\varepsilon=0}
  =\frac{\int e^{f(x)}u(x)\mathrm{d}\mu}{\int e^{f(x)}\mathrm{d}\mu}
  =\mathbb{E}_{P_f}[u].
 \end{align*}
 The second statement is obtained as follows. 
 The convexity of $A(f)$ in $f\in\mathcal{H}$ leads to 
 \begin{align*}
  A(f)-A(g)\geq \<\mathbb{E}_{P_g}[k(x,\cdot)], f-g\>,\, \text{and}\, A(g)-A(f)\geq \<\mathbb{E}_{P_f}[k(x,\cdot)], g-f\>. 
 \end{align*}
 Hence, we have 
 \begin{align*}
  \<\mathbb{E}_{P_g}[k(x,\cdot)], f-g\>\leq A(f)-A(g)\leq \<\mathbb{E}_{P_f}[k(x,\cdot)], f-g\>.
 \end{align*}
 Therefore, $\abs{A(f)-A(g)}\leq \|f-g\|$. 
\end{proof}

\begin{remark}
Suppose that the kernel function is not bounded. Then, 
$\lim_{\varepsilon\rightarrow0}\frac{A(f+\varepsilon u)-A(f)}{\varepsilon}=\mathbb{E}_{P_f}[u]$ 
holds if $e^{f(x)+\varepsilon u(x)}u(x)$ is integrable w.r.t. $\mu$ for arbitrary $\varepsilon$ in the
 vicinity of zero. The inequalities \eqref{Af_ineq} and \eqref{Af_Ag_upperbound} are replaced with 
 $\mathbb{E}_{P_g}[f(x)-g(x)]\leq A(f)-A(g)\leq \mathbb{E}_{P_f}[f(x)-g(x)]$ 
 and 
 $\abs{A(f)-A(g)}\leq K_{f,g}\|f-g\|$, where 
 $K_{f,g}=\max\{\mathbb{E}_{P_g}[\sqrt{k(x,x)}],\,\mathbb{E}_{P_f}[\sqrt{k(x,x)}]\}$. 
\end{remark}

\subsection{Proof of Lemma~\ref{lemma:kexp_TV}}
\begin{proof}
 From the definition, we have 
\begin{align*}
\mathrm{STV}_{\mathcal{H}_U,\1}(P_f, P_g)
 =
 \sup_{\|u\|\leq U,b\in\Rbb} P_f(u(x)\geq b)-P_g(u(x)\geq b) \leq \mathrm{TV}(P_f,P_g). 
\end{align*}
 We prove the inequality of the converse direction. 
 The TV distance is expressed by the $L_1$ distance between probability densities. 
 For $f\neq g$, 
 \begin{align*}
  2\mathrm{TV}(P_f,P_g)
  &=
  \int_{\mathcal{X}} \abs{p_f-p_g}\mathrm{d}\mu\\
  &=
  \int_{p_f\geq p_g} (p_f-p_g)\mathrm{d}\mu +   \int_{p_g\geq p_f} (p_g-p_f)\mathrm{d}\mu \\
  &= 
  \int_{\frac{U(f-g)}{\|f-g\|}\geq \frac{U(A(f)-A(g))}{\|f-g\|}} (p_f-p_g)\mathrm{d}\mu +   
  \int_{\frac{U(g-f)}{\|g-f\|}\geq \frac{U(A(g)-A(f))}{\|g-f\|}} (p_g-p_f)\mathrm{d}\mu \\
  &\leq 2\mathrm{STV}_{\mathcal{H}_U,\1}(P_f,P_g). 
 \end{align*}
 Hence, we have $\mathrm{STV}_{\mathcal{H}_U,\1}(P_f,P_g)=\mathrm{TV}(P_f,P_g)$. 
\end{proof}

\subsection{Proof of Lemma~\ref{lemma:limitSTV-KEF_TV}}
\begin{proof}
 When $f=g$, the lemma holds by regarding $\lambda(U/0)=\lambda(\infty)=0$. 
 Suppose that $f\neq g$. Let us define $s(x)=\{f(x)-g(x)-(A(f)-A(g))\}/\|f-g\|$. 
 Then, we have
\begin{align*}
\mathrm{TV}(P_f,P_g)  
& = \frac{1}{2}\int_{\mathcal{X}}\abs{p_f(x)-p_g(x)}\mathrm{d}\mu \\
& =  
 \frac{1}{2}\int_{s\geq0} (p_f(x)-p_g(x))\mathrm{d}\mu +\frac{1}{2}\int_{s<0} (p_g(x)-p_f(x))\mathrm{d}\mu \\
&=
 \frac{1}{2}\int_{s\geq0} (p_f(x)-p_g(x))\mathrm{d}\mu \\
&\phantom{=} +\frac{1}{2}\underbrace{\int_{\mathcal{X}} (p_g(x)-p_f(x))\mathrm{d}\mu}_{=0}
  -\frac{1}{2}\int_{s\geq 0}   (p_g(x)-p_f(x))\mathrm{d}\mu \\
& =  \int_{s\geq0} (p_f(x)-p_g(x))\mathrm{d}\mu \\
& = \mathbb{E}_{P_f}[\1[U s(x)\geq0]]-\mathbb{E}_{P_g}[\1[U s(x)\geq0]], 
\end{align*}
where $U$ is a positive number. 
We have 
\begin{align*}
 \mathbb{E}_{P_f}[\sigma(U s(X))]-\mathbb{E}_{P_g}[\sigma(U s(X))]
 \leq 
 \mathrm{STV}_{\mathcal{H}_U,\sigma}(P_f,P_g)
 \leq 
 \mathrm{TV}(P_f,P_g). 
\end{align*}
 Note that 
\begin{align*}
 \lim_{U\rightarrow\infty}\sigma(U z)
 =
 \1[z\geq0]-\frac{1}{2}\1[z=0]=
 \begin{cases}
  1, & z>0,\\ 1/2, & z=0,\\ 0, & z<0. 
 \end{cases}
\end{align*}
Lebesgue's dominated convergence theorem leads to 
\begin{align*}
&\phantom{=} \lim_{U\rightarrow\infty}\mathbb{E}_{P_f}[\sigma(U s(X))]-\mathbb{E}_{P_g}[\sigma(U s(X))] \\
& =
 \mathbb{E}_{P_f}[\1[s(X)\geq 0]]-\mathbb{E}_{P_g}[\1[s(X)\geq 0]]
 -\frac{1}{2}\underbrace{\{\mathbb{E}_{P_f}[\1[s(X)=0]]-\mathbb{E}_{P_g}[\1[s(X)=0]]\}}_{=\int_{\{p_f=p_g\}}(p_f-p_g)\mathrm{d}\mu=0}\\
& = \mathrm{TV}(P_f,P_g). 
\end{align*}
As a result, we have 
$\lim_{U\rightarrow\infty}\mathrm{STV}_{\mathcal{H}_U, \sigma}(P_f,P_g) =
 \mathrm{TV}(P_f,P_g)$. 

The proof of the inequality follows Proposition~\ref{prop:TV-STV_bias_bound}. 
From $P_f\ll \mu$  and $P_g\ll\mu$, we have $s(x)=\log\frac{p_f(x)}{p_g(x)}=f(x)-g(x)-(A(f)-A(g))$. 
When $f\neq g$ and $U>\|f-g\|$, we have 
\begin{align*}
 0\leq \mathrm{TV}(P_f,P_g)-\mathrm{STV}_{\mathcal{H}_{U},\sigma}(P_f,P_g)
\leq 
 \lambda\bigg(\frac{U}{\|f-g\|}\bigg)(1-\mathrm{TV}(P_f,P_g)), 
\end{align*}
 where $\mathcal{H}_U$ is regarded as $\mathcal{H}_{\frac{U}{\|f-g\|}\|f-g\|}$. 
\end{proof}

\section{Proofs of Theorems and Lemmas in Section~\ref{sec:PredictionErrorBounds_Finite-dim_RKHS}}
\label{app:proof-learning}

\subsection{Proof of Theorem~\ref{thm:convergence-rate}}
\begin{proof}
 For the probability distributions $P_{f_0},P_{\widehat{f}_n}$ in $\mathcal{P}_{\mathcal{H}}$, we have 
\begin{align*}
 \mathrm{TV}(P_{f_0},P_{\widehat{f}_n}) 
& = 
 \mathrm{STV}_{\widetilde{\mathcal{H}}_U,\1}(P_{f_0},P_{\widehat{f}_n})   \\
& \leq 
 \mathrm{STV}_{\widetilde{\mathcal{H}}_U,\1}(P_{f_0},P_n) + \mathrm{STV}_{\widetilde{\mathcal{H}}_U,\1}(P_{\widehat{f}_n},P_n)   \\
 & \leq 
 2 \mathrm{STV}_{\widetilde{\mathcal{H}}_U,\1}(P_{f_0},P_n)\\
& \leq  
 2 \mathrm{STV}_{\widetilde{\mathcal{H}}_U,\1}(P_{f_0}, P_\varepsilon)+ 2 \mathrm{STV}_{\widetilde{\mathcal{H}}_U,\1}(P_\varepsilon,P_n)\\
& \leq  
 2 \mathrm{TV}(P_{f_0}, P_\varepsilon) + 2 \mathrm{STV}_{\widetilde{\mathcal{H}}_U,\1}(P_\varepsilon,P_n)\\
& \leq  
 2\varepsilon + 2 \mathrm{STV}_{\widetilde{\mathcal{H}}_U,\1}(P_\varepsilon,P_n). 
\end{align*} 
Lemma~\ref{lemma:kexp_TV} is used in the first equality. 
The probabilistic upper bound of $\mathrm{STV}_{\widetilde{\mathcal{H}}_U,\1}(P_\varepsilon,P_n)$ is evaluated by the classical
VC theory~\cite{book:Vapnik:1998,mohri18:_found_machin_learn}. 
The VC-dimension of the function set $\mathcal{F}=\{x\mapsto\1[u(x)\geq b]\,:\,u\in\widetilde{\mathcal{H}}_U, b\in\Rbb\}$ 
is bounded above by the dimension of $\widetilde{\mathcal{H}}$~\cite{mohri18:_found_machin_learn,wainwright19:_high}. 
Let us define 
\begin{align*}
 \displaystyle g(X_1,\ldots,X_n)
 :=
 \mathrm{STV}_{\widetilde{\mathcal{H}}_U,\1}(P_\varepsilon,P_n)
 =
 \sup_{\substack{u\in\widetilde{\mathcal{H}}_U,\|u\|\leq r,b\in \Rbb}}
 P_{\varepsilon}(u(X)\geq b)- P_n(u(X)\geq b), 
\end{align*}
McDiarmid inequality leads to inequality, 
\begin{align*}
 \mathrm{Pr}\bigg(g \leq \mathbb{E}[g] + \sqrt{\frac{\log(1/\delta)}{2n}}\bigg)  \geq 1-\delta. 
\end{align*}
 The expectation $\mathbb{E}[g]$ is bounded above by the Rademacher complexity $\mathfrak{R}_n(\mathcal{F})$
 which is bounded above by Dudley's integral entropy bound, 
\begin{align*}
 \mathbb{E}[g]
 &\leq
 \mathfrak{R}_n(\mathcal{F})
 \leq 
 \mathbb{E}\left[
 \frac{24}{\sqrt{n}}
 \int_0^2\sqrt{\log
 N(\delta,\mathcal{F},\|\cdot\|_n)}\mathrm{d}\delta
 \right], 
\end{align*}
 where $N(\delta,\mathcal{F},\|\cdot\|_n)$ is the covering number of $\mathcal{F}$
 with the norm $\|f\|_n=\sqrt{\frac{1}{n}\sum_{i=1}^{n}f(X_i)^2}$. 
 The covering number of $\mathcal{F}$ is bounded above by
 $(3/\delta)^{\widetilde{d}}$ as $\mathcal{F}$ is a class of 1-uniformly bounded functions 
 \cite[Example~5.24]{wainwright19:_high}. The above integral is then bounded above by $24\times3\sqrt{\widetilde{d}/n}$. 
As a result, we have 
\begin{align*}
 \mathrm{TV}(P_{f_0},P_{\widehat{f}_n}) \lesssim \varepsilon +  \sqrt{\frac{\widetilde{d}}{n}}
 +\sqrt{\frac{\log(1/\delta)}{n}}
\end{align*}
with probability greater than $1-\delta$. 
\end{proof}

\subsection{Proof of Theorem~\ref{thm:reg-STV-learning}}
\begin{proof}
First, let us consider the convergence rate of $\mathrm{TV}(P_{f_0}, P_{\widehat{f}_r})$. 
The $\mathrm{TV}$ distance between the target distribution and the estimator is bounded above as follows. 
For a sufficiently large $r$, suppose $f_0\in\mathcal{H}_{r}$. Then, 
\begin{align*}
 \mathrm{STV}_{\mathcal{H}_U,\sigma}(P_{f_0}, P_{\widehat{f}_r})
 & \leq
 \mathrm{STV}_{\mathcal{H}_U,\sigma}(P_{f_0}, P_n)
 +
 \mathrm{STV}_{\mathcal{H}_U,\sigma}(P_n, P_{\widehat{f}_r})\\
& \leq 
 2\mathrm{STV}_{\mathcal{H}_U,\sigma}(P_{f_0}, P_n)\\
& \leq 
 2\mathrm{STV}_{\mathcal{H}_U,\sigma}(P_{f_0},P_\varepsilon) 
 + 
 2\mathrm{STV}_{\mathcal{H}_U,\sigma}(P_\varepsilon,P_n) \\
& \leq 
 2\mathrm{TV}(P_{f_0},P_\varepsilon) 
 + 
 2\mathrm{STV}_{\mathcal{H}_U,\sigma}(P_\varepsilon,P_n) \\
& \leq 
 2\varepsilon +  2\mathrm{STV}_{\mathcal{H}_U,\sigma}(P_\varepsilon,P_n)\\
& \lesssim  
 \varepsilon +  \sqrt{\frac{d}{n}} + \sqrt{\frac{\log(1/\delta)}{n}}
\end{align*}
with probability $1-\delta$. Therefore, 
\begin{align*}
 \mathrm{TV}(P_{f_0}, P_{\widehat{f}_r}) 
 & = 
 \left\{\mathrm{TV}(P_{f_0}, P_{\widehat{f}_r})-\mathrm{STV}_{\mathcal{H}_U,\sigma}(P_{f_0}, P_{\widehat{f}_r})\right\}
 +\mathrm{STV}_{\mathcal{H}_U,\sigma}(P_{f_0}, P_{\widehat{f}_r})\\
 & \leq 
 \frac{\|f_0-\widehat{f}_r\|}{U} +\mathrm{STV}_{\mathcal{H}_U,\sigma}(P_{f_0}, P_{\widehat{f}_r})
\lesssim
 \frac{r}{U}+ \varepsilon+\sqrt{\frac{d}{n}}
\end{align*}
with high probability, where $U\geq 2r\geq \|f_0-\widehat{f}_r\|$. 
As shown above, the bias term in \eqref{eqn:Bias_term_abstract} is bounded by $\|f_0-\widehat{f}_r\|/U$ for the regularized STV learning. 

Next, let us consider the convergence rate of $\mathrm{TV}(P_{f_0}, P_{\widehat{f}_{\mathrm{reg},r}})$. 
The standard argument works to derive the upper bound as follows: 
\begin{align*}
&\phantom{\leq} \mathrm{STV}_{\mathcal{H}_U,\sigma}(P_{f_0}, P_{\widehat{f}_{\mathrm{reg},r}})\\
 & \leq
 \mathrm{STV}_{\mathcal{H}_U,\sigma}(P_{f_0}, P_{\widehat{f}_{\mathrm{reg},r}})\\
 &\phantom{ \leq}+
 \left[\mathrm{STV}_{\mathcal{H}_U,\sigma}(P_{n}, P_{\widehat{f}_{\mathrm{reg},r}}) + \frac{1}{r^2}\|\widehat{f}_{\mathrm{reg},r}\|^2\right]
 -
 \left[\mathrm{STV}_{\mathcal{H}_U,\sigma}(P_{n}, P_{f_0}) + \frac{1}{r^2}\|f_0\|^2\right]\\
 &\phantom{ \leq}
 +\mathrm{STV}_{\mathcal{H}_U,\sigma}(P_{n}, P_{f_0})
 -\mathrm{STV}_{\mathcal{H}_U,\sigma}(P_{n}, P_{\widehat{f}_{\mathrm{reg},r}})
 +\frac{1}{r^2}\|f_0\|^2\\
& \leq 
 \mathrm{STV}_{\mathcal{H}_U,\sigma}(P_{f_0}, P_{\widehat{f}_{\mathrm{reg},r}})
 +\mathrm{STV}_{\mathcal{H}_U,\sigma}(P_{n}, P_{f_0})
 -\mathrm{STV}_{\mathcal{H}_U,\sigma}(P_{n}, P_{\widehat{f}_{\mathrm{reg},r}})
 +\frac{1}{r^2}\|f_0\|^2\\
& \leq 
 2\mathrm{STV}_{\mathcal{H}_U,\sigma}(P_{n}, P_{f_0})+\frac{1}{r^2}\|f_0\|^2\\
& \leq 
 2\mathrm{TV}(P_{f_0},P_\varepsilon) 
 + 
 2\mathrm{STV}_{\mathcal{H}_U,\sigma}(P_\varepsilon,P_n)+\frac{1}{r^2}\|f_0\|^2 \\ 
& \leq 
 2\varepsilon +  2\mathrm{STV}_{\mathcal{H}_U,\sigma}(P_\varepsilon,P_n)+\frac{1}{r^2}\|f_0\|^2 \\
& \lesssim
 \varepsilon +  \sqrt{\frac{d}{n}} + \sqrt{\frac{\log(1/\delta)}{n}} + \frac{\|f_0\|^2}{r^2}. 
\end{align*}
The first inequality is obtained by the definition of $\widehat{f}_{\mathrm{reg},r}$. 
The triangle inequality is used in the second inequality. 
Thus, we obtain
\begin{align*}
 \mathrm{TV}(P_{f_0}, P_{\widehat{f}_{\mathrm{reg},r}}) 
 & = 
 \left\{\mathrm{TV}(P_{f_0}, P_{\widehat{f}_{\mathrm{reg},r}})
 -\mathrm{STV}_{\mathcal{H}_U,\sigma}(P_{f_0}, P_{\widehat{f}_{\mathrm{reg},r}})\right\}
 +\mathrm{STV}_{\mathcal{H}_U,\sigma}(P_{f_0}, P_{\widehat{f}_{\mathrm{reg},r}})\\
 & \leq 
 \frac{\|f_0-\widehat{f}_{\mathrm{reg},r}\|}{U} +\mathrm{STV}_{\mathcal{H}_U,\sigma}(P_{f_0}, P_{\widehat{f}_{\mathrm{reg},r}})\\
 &\lesssim
 \frac{r}{U}+ \varepsilon+\sqrt{\frac{d}{n}}+ \sqrt{\frac{\log(1/\delta)}{n}} +
 \frac{\|f_0\|^2}{r^2}
\end{align*}
for $U\geq 2r\geq \|f_0-\widehat{f}_{\mathrm{reg},r}\|$. 
\end{proof}

\subsection{Proof of Theorem~\ref{thm:fullreg-STV-learning}}
\begin{proof}
For $\widetilde{U}\leq{U}$, let us consider an upper bound of
$\mathrm{STV}_{\mathcal{H}_{\widetilde{U}},\sigma}(P_{n},P_{\check{f}_{\mathrm{reg},r}})$. 
In the following, let $\check{f}$ be $\check{f}_{\mathrm{reg},r}$ for the sake of simplicity. 
The inequality $\|u\|^2\leq \widetilde{U}^2$ for $u\in\mathcal{H}_{\widetilde{U}}$ leads to 
\begin{align*}
&\phantom{\leq} \sup_{u\in\mathcal{H}_{\widetilde{U}},b\in\Rbb} 
 \mathbb{E}_{P_n}[\sigma(u(X)-b)]-\mathbb{E}_{P_{\check{f}}}[\sigma(u(X)-b)] \\
&\leq 
\sup_{u\in\mathcal{H}_{\widetilde{U}},b\in\Rbb} 
 \left\{\mathbb{E}_{P_n}[\sigma(u(X)-b)]-\mathbb{E}_{P_{\check{f}}}[\sigma(u(X)-b)]
 -\frac{\|u\|^2}{U^2}+\frac{\|\check{f}\|^2}{r^2}\right\} + \frac{\widetilde{U}^2}{U^2} \\
&\leq 
\sup_{u\in\mathcal{H},b\in\Rbb}
 \left\{\mathbb{E}_{P_n}[\sigma(u(X)-b)]-\mathbb{E}_{P_{\check{f}}}[\sigma(u(X)-b)]
 -\frac{\|u\|^2}{U^2}+\frac{\|\check{f}\|^2}{r^2}\right\} + \frac{\widetilde{U}^2}{U^2}. 
\end{align*}
Moreover, we have
\begin{align}
 \label{eqn:app-u-norm-bound}
 &\phantom{\leq}
 \sup_{u\in\mathcal{H},b\in\Rbb}\mathbb{E}_{P_n}[\sigma(u(X)-b)]-\mathbb{E}_{P_{f_0}}[\sigma(u(X)-b)]-\frac{\|u\|^2}{U^2}\\
& \leq 
 \sup_{u\in\mathcal{H}_U,b\in\Rbb}\mathbb{E}_{P_n}[\sigma(u(X)-b)]-\mathbb{E}_{P_{f_0}}[\sigma(u(X)-b)]\nonumber
\end{align}
since the norm of the optimal $u\in\mathcal{H}$ in \eqref{eqn:app-u-norm-bound} is less than or equal to $U$. 
Therefore, we have
\begin{align*}
& \phantom{=}\mathrm{STV}_{\mathcal{H}_{\widetilde{U}},\sigma}(P_{n},P_{\check{f}}) \\
&=
\sup_{u\in\mathcal{H}_{\widetilde{U}},b\in\Rbb}
 \mathbb{E}_{P_n}[\sigma(u(X)-b)]-\mathbb{E}_{P_{\check{f}}}[\sigma(u(X)-b)] \\
&\leq 
\sup_{u\in\mathcal{H},b\in\Rbb}
 \left\{\mathbb{E}_{P_n}[\sigma(u(X)-b)]-\mathbb{E}_{P_{\check{f}}}[\sigma(u(X)-b)]
 -\frac{\|u\|^2}{U^2}+\frac{\|\check{f}\|^2}{r^2}\right\} + \frac{\widetilde{U}^2}{U^2} \\
&\phantom{\leq}
 -
\sup_{u\in\mathcal{H},b\in\Rbb}
 \left\{\mathbb{E}_{P_n}[\sigma(u(X)-b)]-\mathbb{E}_{P_{f_0}}[\sigma(u(X)-b)]
 -\frac{\|u\|^2}{U^2}+\frac{\|f_0\|^2}{r^2}\right\}\\
&\phantom{\leq}+\sup_{u\in\mathcal{H}_{U},b\in\Rbb}
 \left\{\mathbb{E}_{P_n}[\sigma(u(X)-b)]-\mathbb{E}_{P_{f_0}}[\sigma(u(X)-b)]\right\}
 +\frac{\|f_0\|^2}{r^2}\\
&\leq
\mathrm{STV}_{\mathcal{H}_{U},\sigma}(P_n,P_{f_0})+\frac{\widetilde{U}^2}{U^2} +\frac{\|f_0\|^2}{r^2}. 
\end{align*}
The second inequality is obtained from the optimality of $\check{f}$. 
Hence, we have 
\begin{align*}
 \mathrm{STV}_{\mathcal{H}_{\widetilde{U}},\sigma}(P_{f_0},P_{\check{f}})
& \leq 
 \mathrm{STV}_{\mathcal{H}_{\widetilde{U}},\sigma}(P_n,P_{f_0}) +\mathrm{STV}_{\mathcal{H}_{\widetilde{U}},\sigma}(P_n,P_{\check{f}}) \\
& \leq 
 2\cdot\mathrm{STV}_{\mathcal{H}_{U},\sigma}(P_n,P_{f_0}) +\frac{\widetilde{U}^2}{U^2} +\frac{\|f_0\|^2}{r^2}. 
\end{align*}
Therefore, we obtain 
\begin{align*}
 \mathrm{TV}(P_{f_0}, P_{\check{f}})
 & = 
 \left\{\mathrm{TV}(P_{f_0}, P_{\check{f}}) -\mathrm{STV}_{\mathcal{H}_{\widetilde{U}},\sigma}(P_{f_0},P_{\check{f}}) \right\}
 +\mathrm{STV}_{\mathcal{H}_{\widetilde{U}},\sigma}(P_{f_0},P_{\check{f}})\\
 & \leq
\frac{\|f_0-\check{f}\|}{\widetilde{U}} +
 2\cdot\mathrm{STV}_{\mathcal{H}_{U},\sigma}(P_n,P_{f_0}) +\frac{\widetilde{U}^2}{U^2} +\frac{\|f_0\|^2}{r^2}\\
 &\lesssim
 \frac{r}{\widetilde{U}}+\frac{\widetilde{U}^2}{U^2} +\frac{\|f_0\|^2}{r^2} 
 +\mathrm{TV}(P_{f_0},P_\varepsilon)+\mathrm{STV}_{\mathcal{H}_U,\sigma}(P_n,P_\varepsilon)\\
 &\lesssim
 \frac{r}{\widetilde{U}}+\frac{\widetilde{U}^2}{U^2} +\frac{\|f_0\|^2}{r^2} 
 +\varepsilon + \sqrt{\frac{d}{n}}+\sqrt{\frac{\log(1/\delta)}{n}}
\end{align*}
for $U\geq \widetilde{U}\geq 2r$. By setting $\widetilde{U}=(2r)^{1/3}U^{2/3}$ for $2r\leq U$, 
we obtain the inequality in Theorem~\ref{thm:fullreg-STV-learning}. 
\end{proof}

\subsection{Proof of Theorem~\ref{thm:infinite-STV-learning-bound}}
\label{app:Proof-thm:infinite-STV-learning-bound}

Using the upper bound of $\mathrm{STV}_{\widetilde{H}_U,\sigma}(P_\varepsilon, P_n)$, 
we evaluate the estimation error of the regularized STV learning with infinite-dimensional RKHSs. 
The Lipschitz constant of $\sigma$ is $1$. 
Since the Rademacher complexity of the function set $\{\sigma(u-b)\,:\,u\in\mathcal{H}_U, \abs{b}\leq U\}$ 
over $n$ samples is bounded above by $U/\sqrt{n}$, we have 
\begin{align*}
 \mathrm{STV}_{\widetilde{\mathcal{H}}_U,\sigma}(P_\varepsilon,P_n)
 &=\sup_{u\in\widetilde{\mathcal{H}}_U,\abs{b}\leq U}
 \mathbb{E}_{P_\varepsilon}[\sigma(u(X)-b)]-\mathbb{E}_{P_n}[\sigma(u(X)-b)]\\
& \lesssim \frac{U}{\sqrt{n}} + \sqrt{\frac{\log(1/\delta)}{n}}
\end{align*}
with probability greater than $1-\delta$. 
For a sufficiently large $r$ such that $f_0\in\mathcal{H}_r$, the following inequalities hold, 
\begin{align*}
 \mathrm{STV}_{\widetilde{\mathcal{H}}_U,\sigma}(P_{f_0}, P_{\widehat{f}_r})
 & \leq
 \mathrm{STV}_{\widetilde{\mathcal{H}}_U,\sigma}(P_{f_0}, P_n)
 +
 \mathrm{STV}_{\widetilde{\mathcal{H}}_U,\sigma}(P_n, P_{\widehat{f}_r})\\
& \leq 
 2\mathrm{STV}_{\widetilde{\mathcal{H}}_U,\sigma}(P_{f_0}, P_n)\\
& \leq 
 2\mathrm{STV}_{\widetilde{\mathcal{H}}_U,\sigma}(P_{f_0},P_\varepsilon) 
 + 
 2\mathrm{STV}_{\widetilde{\mathcal{H}}_U,\sigma}(P_\varepsilon,P_n) \\
& \leq 
 2\mathrm{TV}(P_{f_0},P_\varepsilon) 
 + 
 2\mathrm{STV}_{\widetilde{\mathcal{H}}_U,\sigma}(P_\varepsilon,P_n) \\
& \leq 
 2\varepsilon +  2\mathrm{STV}_{\widetilde{\mathcal{H}}_U,\sigma}(P_\varepsilon,P_n)
\lesssim
 \varepsilon +  \frac{U}{\sqrt{n}} + \sqrt{\frac{\log(1/\delta)}{n}}. 
\end{align*}
Therefore, we have 
\begin{align*}
 \mathrm{TV}(P_{f_0}, P_{\widehat{f}_r})
 & = 
 \left\{\mathrm{TV}(P_{f_0}, P_{\widehat{f}_r})
 -\mathrm{STV}_{\widetilde{\mathcal{H}}_U,\sigma}(P_{f_0}, P_{\widehat{f}_r})\right\}
 +\mathrm{STV}_{\widetilde{\mathcal{H}}_U,\sigma}(P_{f_0}, P_{\widehat{f}_r})\\ 
 & \lesssim
 \frac{\|f_0-\widehat{f}_r\|}{U} + \varepsilon +  \frac{U}{\sqrt{n}} + \sqrt{\frac{\log(1/\delta)}{n}}\\
 & \lesssim
 \frac{r}{U} + \varepsilon +  \frac{U}{\sqrt{n}} + \sqrt{\frac{\log(1/\delta)}{n}}. 
\end{align*}

For the estimator $\widehat{f}_{\mathrm{reg},r}$ given by the additive regularization, 
the same argument of the Theorem~\ref{thm:reg-STV-learning} leads to 
\begin{align*}
& \phantom{=}\mathrm{TV}(P_{f_0}, P_{\widehat{f}_{\mathrm{reg},r}})\\
 & = 
 \left\{\mathrm{TV}(P_{f_0}, P_{\widehat{f}_{\mathrm{reg},r}})
-\mathrm{STV}_{\widetilde{\mathcal{H}}_U,\sigma}(P_{f_0}, P_{\widehat{f}_{\mathrm{reg},r}})\right\}
 +\mathrm{STV}_{\widetilde{\mathcal{H}}_U,\sigma}(P_{f_0}, P_{\widehat{f}_r})\\
 & \lesssim
 \frac{\|f_0-\widehat{f}_{\mathrm{reg},r}\|}{U} 
+ \varepsilon +  \frac{U}{\sqrt{n}} + \sqrt{\frac{\log(1/\delta)}{n}} + \frac{\|f_0\|^2}{r^2}\\
 & \lesssim
 \frac{r}{U} + \varepsilon +  \frac{U}{\sqrt{n}} + \sqrt{\frac{\log(1/\delta)}{n}}+ \frac{\|f_0\|^2}{r^2}. 
\end{align*}
Likewise, the same argument of the Theorem~\ref{thm:fullreg-STV-learning}
leads to the estimation error bound of $\check{f}_{\mathrm{reg},r}$. 

\subsection{Proof of Corollary~\ref{cor:infinite-dim-RKHS-optbound}}
For the estimator $\widehat{f}_r$, 
Theorem~\ref{thm:infinite-STV-learning-bound} with $r=O(\mathrm{Poly}(\log n))$ and $U=n^{1/4}$ 
leads to the upper bound $\varepsilon+n^{-1/4}$, where the poly-log order is omitted. 
Likewise, the estimator $\widehat{f}_{\mathrm{reg},r}$ with $r=n^{1/10},\, U=n^{3/10}$ and 
the estimator $\check{f}_{\mathrm{reg},r}$ with $r=n^{1/12}, U=n^{1/3}$
yield the upper bound in the corollary.

\section{Proof in Section~\ref{sec:Accuracy_Par_est}}
\label{app:error_par_est}

\subsection{Proof of Lemma~\ref{lemma:lower-bd-TV_RKHS}}
\label{appendix:proof-lemma_TV_lowerbound}
\begin{proof}
Due to \eqref{Af_ineq} in Appendix~\ref{app:formulae_KEF}, 
we see that there exists $0\leq \beta\leq 1$ such that $A(f)-A(g)=\<f-g,\mathbb{E}_{\beta}[k(x,\cdot)]\>$, where 
$\mathbb{E}_{\beta}$ is the expectation with the probability 
$p_\beta=\exp\{\beta f + (1-\beta) g -A(\beta f + (1-\beta) g)\}\in\mathcal{P}_{\mathcal{H}}$. 
A simple calculation leads that for $r\geq\log(2)/4$ and $a\in\Rbb$ such that $\abs{a}\leq 4r$, the inequality 
\begin{align}
 \label{appendix:exp-inequ} 
\abs{e^{a}-1}\geq \frac{\abs{a}}{8r}
\end{align}
holds. 
For $f\neq g$, let us define $s=(f-g)/\|f-g\|$. For $f,g\in\mathcal{H}_r$, we have
\begin{align*}
\abs{f-g-(A(f)-A(g))}\leq 2\|f-g\|\leq 4r. 
\end{align*}
Hence, the following inequalities hold. 
\begin{align}
 2\mathrm{TV}(P_{g}, P_{f})
 &= 
 \int \abs{e^{f(x)-g(x)-(A(f)-A(g))}-1}p_{g}\dmu \nonumber\\
 &\geq 
 \frac{1}{8r}\int\abs{\<f-g,k_x-\mathbb{E}_{\beta}[k_x]\>} p_{g}\dmu \label{append:eqn:TV1}\\
 &\geq 
 \frac{\|f-g\|}{8r e^{2r}}\inf_{s\in\mathcal{H},\|s\|=1}
 \int \abs{s(x)-\mathbb{E}_{\beta}[s]}\dmu
 \label{append:eqn:TV2} \\
 &\geq
 \frac{\|f-g\|}{16r e^{2r}}\inf_{\substack{s\in\mathcal{H},\|s\|=1}}\int \abs{s(x)-\mathbb{E}_\beta[s]}^2\dmu \label{append:eqn:TV3} \\
 &\geq
 \frac{\|f-g\|}{16r e^{2r}}\inf_{\substack{s\in\mathcal{H},\|s\|=1}}
 \int \abs{s(x)-\mathbb{E}_{P_0}[s]}^2\dmu. \label{append:eqn:TV4} 
\end{align}
 \eqref{append:eqn:TV1} uses the inequality \eqref{appendix:exp-inequ}; 
 \eqref{append:eqn:TV2} is derived from the definition of $s(x)$ and the inequality $p_g\geq e^{-2r}p_0=e^{-2r}$; 
 \eqref{append:eqn:TV3} uses the inequality $\abs{x}\geq x^2/2$ for $\abs{x}\leq 2$; 
 \eqref{append:eqn:TV4} is derived from $\inf_{\alpha\in\Rbb}\mathbb{E}_{P_0}[\abs{s-\alpha}^2]=\mathbb{E}_{P_0}[\abs{s-\mathbb{E}_{P_0}[s]}^2]$. 
\end{proof}

\subsection{Proof of Corollary~\ref{thm:finite-dim-RKHS-minimax-opt}}
\label{appendix:subsec_minimax-opt-RKHS}
\begin{proof}
Since $\widehat{f}_r,f_0\in\mathcal{H}_{d,r}$, we  have $\|\widehat{f}_r-f_0\|\leq 2r$. From Lemma~\ref{lemma:lower-bd-TV_RKHS} and Theorem~\ref{thm:reg-STV-learning}, it holds that
\begin{align*}
 \|\widehat{f}_r-f_0\|\leq 
 \frac{32 r e^{2r}}{\inf_{d}\xi(\mathcal{H}_d)}
 \mathrm{TV}(P_{f_0}, P_{\widehat{f}_r})
 \lesssim 
  \frac{32 r e^{2r}}{\inf_{d}\xi(\mathcal{H}_d)} 
\bigg(\varepsilon+\sqrt{\frac{d}{n}}\bigg)
\end{align*}
with high probability. 
Since the parameter $r$ is a positive constant, the upper bound in the RKHS norm is ensured. 
\end{proof}

\subsection{Proof of Theorem~\ref{thm:minimax-opt-NormalCov}}
\label{appendix:subsec_minimax-opt-NormalCov}

In the beginning, let us introduce a general theory of the minimax optimal rate of the parameter estimation. 
Let us define $\Theta\subset\Rbb^d$ be the $d$-dimensional parameter space of 
the statistical model $\{p_{\bm\theta}(x)\,:\,{\bm\theta}\in\Theta\}$. 
For any subset $T\subset\Theta$, 
let $\mathrm{vol}(T)$ be the volume of the subset $T\subset\Rbb^d$ for the usual Lebesgue measure. 
Define the KL diameter by 
$d_{\mathrm{KL}}(T)
=\sup_{{\bm\theta},{\bm\theta}'\in{T}}\mathrm{KL}(p_{\bm\theta}^{\otimes{n}},\,p_{\bm\theta'}^{\otimes{n}})
=n\sup_{\bm{\theta},{\bm\theta}'\in{T}}\mathrm{KL}(p_{\bm\theta},\,p_{\bm\theta'})$. 
For $B_\varepsilon=\{\bm{\theta}\in\mathbb{R}^d\,:\,\|\bm{\theta}\|_2\leq \varepsilon\}$, it holds that 
\begin{align}
 \label{appendix:eqn:general-lower-bound_Fano}
 \inf_{\widehat{\bm\theta}}\sup_{\bm{\theta}\in\Theta}P_{\bm\theta}(\|\widehat{\theta}-\theta\|\geq\varepsilon/2)
 \geq 1-\inf_{T\subset\Theta}\frac{d_{\mathrm{KL}}(T)+\log
 2}{\log\frac{\mathrm{vol}(T)}{\mathrm{vol}(B_{\varepsilon})}}. 
\end{align}
Detail is shown in \cite[Remark 3]{ma13:_volum}. 
For the contaminated distribution, we have the following theorem
\begin{theorem}[\cite{chen16:_huber}]
 $L$ is a loss function defined on the parameter space~$\Theta$. Define 
\begin{align*}
 \omega(\varepsilon,\Theta)=
\sup\left\{L({\bm\theta}_1,{\bm\theta}_2)\,:\,
 \mathrm{TV}(P_{{\bm\theta}_1},P_{{\bm\theta}_2})\leq\frac{\varepsilon}{1-\varepsilon},\,{\bm\theta}_1, {\bm\theta}_2\in\Theta
\right\}. 
\end{align*}
 Suppose there is some $\mathcal{R}$ such that 
 \begin{align*}
  \inf_{\widehat{\bm{\theta}}}  \sup_{{\bm\theta}\in\Theta} 
  P_{\bm\theta}\left(L(\widehat{\bm{\theta}}, \bm{\theta})\geq \mathcal{R}\right)>0. 
 \end{align*}
 Then, 
 \begin{align*}
  \inf_{\widehat{\bm{\theta}}}
  \sup_{\bm{\theta}\in\Theta, Q}
  P_{\bm{\theta},\varepsilon,Q}\left(L(\widehat{\bm{\theta}}, \bm{\theta})\geq
  \omega(\varepsilon,\Theta)+\mathcal{R}\right)>0, 
 \end{align*}
 where the supremum is taken over the distribution 
 $P_{\bm{\theta},\varepsilon,Q}=(1-\varepsilon)P_{\bm{\theta}}+\varepsilon Q$
 for any parameter $\bm{\theta}\in\Theta$ and any distribution $Q$. 
\end{theorem}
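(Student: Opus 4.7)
My plan is to invoke the contaminated-to-clean minimax reduction stated just before Theorem~\ref{thm:minimax-opt-NormalCov} (attributed to \cite{chen16:_huber}) with $L(\Sigma_1,\Sigma_2)=\|\Sigma_1-\Sigma_2\|_{\mathrm{F}}$ and parameter space $\mathfrak{T}_R$. That reduction splits the lower bound into two additive pieces: the TV modulus of continuity $\omega(\varepsilon,\mathfrak{T}_R)$, which captures the identifiability bias introduced by contamination, and the uncontaminated minimax radius $\mathcal{R}$, which captures the statistical fluctuation. It then suffices to prove $\omega(\varepsilon,\mathfrak{T}_R)\gtrsim\varepsilon$ and $\mathcal{R}\gtrsim\sqrt{d^2/n}$; the sum gives the claimed rate, and the argument is agnostic to the constraint $R$ since both steps work near the identity.

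For the modulus term, I would pick $\Sigma_1=I_d$ and $\Sigma_2=I_d+tE$ with $E=(e_1e_2^\top+e_2e_1^\top)/\sqrt{2}$, so that $\|E\|_{\mathrm{F}}=1$ and $\|E\|_{\mathrm{op}}=1/\sqrt{2}$. The closed-form KL divergence for centered Gaussians,
\begin{equation*}
\mathrm{KL}\bigl(N_d(\0,\Sigma_2)\,\|\,N_d(\0,\Sigma_1)\bigr)=\tfrac12\sum_{i=1}^{d}\bigl[\lambda_i-1-\log\lambda_i\bigr],
\end{equation*}
where $\lambda_i$ are the eigenvalues of $\Sigma_1^{-1/2}\Sigma_2\Sigma_1^{-1/2}=I+tE$, has a Taylor expansion $\tfrac14\|tE\|_{\mathrm{F}}^2+O(t^3)=\tfrac{t^2}{4}+O(t^3)$ at $t=0$. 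Pinsker's inequality then gives $\mathrm{TV}(N_d(\0,\Sigma_1),N_d(\0,\Sigma_2))\leq C_0 t$ for an absolute $C_0$. Choosing $t=c_1\varepsilon$ with $c_1$ small enough makes $\mathrm{TV}\leq\varepsilon/(1-\varepsilon)$ (using $\varepsilon\leq 1/2$), while the constraint $\|\Sigma_2\|_{\mathrm{op}}\leq 1+t/\sqrt{2}\leq 1+R$ is satisfied for any $R>1/2$. Since $\|\Sigma_1-\Sigma_2\|_{\mathrm{F}}=t\gtrsim\varepsilon$, we conclude $\omega(\varepsilon,\mathfrak{T}_R)\gtrsim\varepsilon$.

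For the $\sqrt{d^2/n}$ piece, I would apply the volume-ratio Fano-type bound \eqref{appendix:eqn:general-lower-bound_Fano} on the space of symmetric matrices, whose ambient dimension is $D=d(d+1)/2\asymp d^2$. Center the packing at $I_d$: set $T=\{I_d+\Delta:\Delta=\Delta^\top,\,\|\Delta\|_{\mathrm{F}}\leq r\}$ with $r=c_2\sqrt{d^2/n}$ for a small absolute constant $c_2$. Because $\|\Delta\|_{\mathrm{op}}\leq\|\Delta\|_{\mathrm{F}}\leq r\leq 1/2$, all eigenvalues of $\Sigma,\Sigma'\in T$ lie in $[1/2,3/2]$, and the KL expansion of the previous paragraph yields $\sup_{\Sigma,\Sigma'\in T}\mathrm{KL}(N_d(\0,\Sigma),N_d(\0,\Sigma'))\leq C_3 r^2$, hence $d_{\mathrm{KL}}(T)\leq C_3 n r^2=C_3c_2^2 d^2$. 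Choosing the target radius $\varepsilon'=r/16$ gives $\log[\mathrm{vol}(T)/\mathrm{vol}(B_{\varepsilon'})]\asymp D\log 16\gtrsim d^2$, and for $c_2$ sufficiently small the right-hand side of \eqref{appendix:eqn:general-lower-bound_Fano} stays strictly positive. This produces $\mathcal{R}\gtrsim\sqrt{d^2/n}$ for the uncontaminated model; since $T\subset\mathfrak{T}_R$ for any $R>1/2$, the bound transfers to the constrained and the unconstrained case alike.

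The main obstacle I anticipate is controlling the cubic and higher-order terms in the KL expansion uniformly across the packing, since the packing radius $r$ need not be infinitesimal. Restricting to $r\leq 1/2$ (which is the meaningful regime, otherwise the claim is trivial) keeps $\lambda_i\in[1/2,3/2]$, so the remainder in $\lambda-1-\log\lambda=\tfrac12(\lambda-1)^2+O((\lambda-1)^3)$ is dominated by the quadratic term up to absolute constants, legitimizing the $C_3r^2$ uniform bound above. Assembling the two lower bounds through the Chen~2016 reduction theorem then yields the claimed $\varepsilon+\sqrt{d^2/n}$ lower bound.
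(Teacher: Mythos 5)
Your proposal does not prove the assigned statement. The statement is the general contamination reduction theorem of Chen, Gao and Ren: for an \emph{arbitrary} loss $L$ and parameter space $\Theta$, any clean-model minimax lower bound $\mathcal{R}$ lifts to a contaminated-model lower bound $\omega(\varepsilon,\Theta)+\mathcal{R}$. What you have written is instead a proof of the downstream application, Theorem~\ref{thm:minimax-opt-NormalCov} (the $\varepsilon+\sqrt{d^2/n}$ Frobenius-norm lower bound for covariance estimation), and in your very first step you \emph{invoke} the reduction theorem as a black box. Relative to the assigned statement this is circular: the thing you are asked to establish is the thing you assume. The two ingredients you do supply --- the modulus bound $\omega(\varepsilon,\mathfrak{T}_R)\gtrsim\varepsilon$ via Pinsker, and the Fano/volume-ratio bound $\mathcal{R}\gtrsim\sqrt{d^2/n}$ --- are essentially the ones the paper uses in Appendix~\ref{appendix:subsec_minimax-opt-NormalCov} for that corollary, but they are inputs to the reduction theorem, not a proof of it.

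A proof of the actual statement requires a different idea entirely. The core is a non-identifiability construction: given $\bm\theta_1,\bm\theta_2\in\Theta$ with $\mathrm{TV}(P_{\bm\theta_1},P_{\bm\theta_2})\leq\varepsilon/(1-\varepsilon)$, one builds contamination distributions $Q_1,Q_2$ such that $(1-\varepsilon)P_{\bm\theta_1}+\varepsilon Q_1=(1-\varepsilon)P_{\bm\theta_2}+\varepsilon Q_2$ (possible precisely because the positive part of $(1-\varepsilon)(P_{\bm\theta_1}-P_{\bm\theta_2})$ has mass at most $\varepsilon$), so the observed law carries no information about which of the two parameters generated the data; this yields the $\omega(\varepsilon,\Theta)$ term, and it must then be combined with the clean-model event of probability bounded away from zero to obtain the additive $\omega(\varepsilon,\Theta)+\mathcal{R}$ bound. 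None of this appears in your proposal. Note also that the paper itself imports this theorem from \cite{chen16:_huber} without proof, so there is no in-paper argument for you to have matched; but the proposal as written still fails to address the statement.
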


\begin{proof}
[Proof of Theorems~\ref{thm:minimax-opt-NormalCov}.]
 Note that $\|A\|_{\mathrm{op}}\leq \|A\|_{\mathrm{F}}$. 
 For $r<1/2\,(<R)$, let us define 
 $T_r:=I+B_{r}=\{I+A\,:\,\|A\|_{\mathrm{F}}\leq r\}\subset\mathfrak{T}_R=\{I+A\succ{O}\,:\,\|A\|_{\mathrm{op}}\leq R\}$. 
 For $\Sigma\in T_r$, we
 have $1+r\geq \sigma_1(\Sigma)\geq \sigma_d(\Sigma)\geq 1-r$. 
Hence, 
$\sigma_1(\Sigma_0^{-1}\Sigma_1)\leq \sigma_1(\Sigma_0^{-1})\sigma_1(\Sigma_1)\leq\frac{1+r}{1-r}$ 
and  $\sigma_d(\Sigma_0^{-1}\Sigma_1)\geq \sigma_d(\Sigma_0^{-1})\sigma_d(\Sigma_1)\geq\frac{1-r}{1+r}$
hold for $\Sigma_1, \Sigma_2\in T_r$. Therefore, we have 
\begin{align*}
 \frac{1}{3}\leq \frac{1-r}{1+r}\leq \sigma_k(\Sigma_0^{-1}\Sigma_1)\leq \frac{1+r}{1-r}\leq 3,\quad k=1,\ldots,d. 
\end{align*}
For $\Sigma_0, \Sigma_1\in T_r$, 
\begin{align*}
 \mathrm{KL}(N(\0,\Sigma_1),N(\0,\Sigma_0))
 &=\frac{1}{2}\sum_{i=1}^{d}\left(\sigma_i(\Sigma_0^{-1}\Sigma_1)-1-\log\sigma_i(\Sigma_0^{-1}\Sigma_1) \right)\\
 &\leq\frac{1}{2}\|\Sigma_0^{-1}\Sigma_1-I\|_{\mathrm{F}}^2\\
 &\leq\frac{1}{2}\|\Sigma_0^{-1}\|_{\mathrm{op}}^2\|\Sigma_1-\Sigma_0\|_{\mathrm{F}}^2\\
 &\leq \frac{1}{2(1-r)^2}\|\Sigma_1-\Sigma_0\|_{\mathrm{F}}^2. 
\end{align*}
Therefore, $d_{\mathrm{KL}}(T_r)\leq n\cdot\frac{4r^2}{2(1-r)^2}\leq 8n r^2$. 
Furthremore, $\mathrm{vol}(T_r)=C_{d}r^{d(d+1)/2}$ and $\mathrm{vol}(B_{\varepsilon})=C_{d}\varepsilon^{d(d+1)/2}$, 
where $C_d$ is a constant depending only on $d$. 
Hence, we have 
\begin{align*}
 \inf_{\widehat{\Sigma}}\sup_{\Sigma\in \mathfrak{T}_R}
 P_\Sigma(\|\widehat{\Sigma}-\Sigma\|_{\mathrm{F}}\geq\varepsilon/2)
 \geq 1-\frac{8nr^2+\log 2}{\frac{d(d+1)}{2}\log\frac{r}{\varepsilon}}
 \geq 1-\frac{16nr^2+2\log 2}{d^2\log\frac{r}{\varepsilon}}. 
\end{align*}
For $r=\sqrt{d^2/n}<1/2$ such that $n>4d^2$ and $\varepsilon=2^{-29}\sqrt{d^2/n}$, we have 
 \begin{align*}
 \inf_{\widehat{\Sigma}}\sup_{\Sigma\in \mathfrak{T}_R}
 P_\Sigma(\|\widehat{\Sigma}-\Sigma\|_{\mathrm{F}}\geq 2^{-30}\sqrt{d^2/n}) \geq 0.1. 
 \end{align*}
The modulus of continuity of the Frobenius norm is given by 
\begin{align*}
 \omega(\varepsilon,\mathfrak{T}_R)=\sup\{\|\Sigma_0-\Sigma_1\|_{\mathrm{F}}
 \,:\,\mathrm{TV}(P_{\Sigma_0}, P_{\Sigma_1})\leq \varepsilon/(1-\varepsilon),\,\Sigma_0,\Sigma_1\in\mathfrak{T}_R\}. 
\end{align*}
Pinsker's inequality yields that for $\Sigma_0, \Sigma_1\in T_{1/2}\subset\mathfrak{T}_R$, 
\begin{align*}
 \mathrm{TV}(N(\0,\Sigma_0),N(\0,\Sigma_1))\leq \sqrt{\frac{ \mathrm{KL}(N(\0,\Sigma_1),N(\0,\Sigma_0))}{2}}\leq \|\Sigma_0-\Sigma_1\|_F. 
\end{align*}
Hence, the pair of $\Sigma_0$ and $\Sigma_1$ such that 
$\|\Sigma_0-\Sigma_1\|_F=\varepsilon/(1-\varepsilon)\leq 1$ for $\varepsilon\leq 1/2$ leads to   
$\mathrm{TV}(N(\0,\Sigma_0),N(\0,\Sigma_1))\leq \varepsilon/(1-\varepsilon)$, 
meaning that $\omega(\varepsilon,\mathfrak{T}_R)\geq \varepsilon/(1-\varepsilon)\geq \varepsilon$. 
As a result, we have 
\begin{align*}
&\phantom{\geq} \inf_{\widehat{\Sigma}}
 \sup_{Q:\inf_{P\in\mathcal{N}_R}\mathrm{TV}(P,Q)<\varepsilon}
 Q\bigg(\|\widehat{\Sigma}-\Sigma\|_{\mathrm{F}} \geq \varepsilon+\sqrt{\frac{d^2}{n}}\bigg)\\
& \geq 
 \inf_{\widehat{\Sigma}}\sup_{\Sigma\in\mathfrak{T}_R, Q}
 P_{\Sigma,\varepsilon,Q}
 \bigg(\|\widehat{\Sigma}-\Sigma\|_{\mathrm{F}} \geq\varepsilon+\sqrt{\frac{d^2}{n}}\bigg)
 >0
\end{align*}
 for $\varepsilon\leq 1/2$. 
\end{proof}

\section{Proof of Theorem~\ref{theorem:error_rate-approximate-STV-learning}}
\label{app:Proof-thm:approximate-infinite-STV-learning-bound}

We prepare the following lemma. 
\begin{lemma}
 \label{thm:sampling-based_approximation}
 Suppose that $\sup_{x\in\mathcal{X}}k(x,x)\leq K^2$. 
 The Rademacher complexity of the function set 
 \begin{align*}
\{e^{f(z)}\sigma(u(z)-b)/q(z)\,:\,f\in\mathcal{H}_r, u\in\mathcal{H}_U,\, \abs{b}\leq{U}\}
 \end{align*}
is bounded above by 
\begin{align*}
 \frac{e^{Kr}}{\min_z q(z)}\left\{\mathfrak{R}_\ell(\mathcal{H}_r)+\mathfrak{R}_\ell(\mathcal{H}_U\oplus[-U,U])\right\}
 \lesssim
 \frac{e^{Kr}}{\min_z q(z)}\frac{r+U}{\sqrt{\ell}}. 
\end{align*}
 The Rademacher complexity of the function set $\{ e^{f(z)}/q(z)\,:\,f\in\mathcal{H}_r\}$
is bounded above by 
\begin{align*}
 \frac{e^{Kr}}{\min_z q(z)}\mathfrak{R}_\ell(\mathcal{H}_r)
 \lesssim
 \frac{e^{Kr}}{\min_z q(z)}\frac{r}{\sqrt{\ell}}. 
\end{align*}
\end{lemma}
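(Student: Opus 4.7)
The plan is to reduce the claim to standard Rademacher complexities of RKHS balls via three ingredients: uniform boundedness of the classes on the sample, a product inequality for Rademacher averages, and Talagrand's contraction principle. First, since $\sup_x k(x,x)\le K^2$, the reproducing property yields $|f(z)|\le\|f\|\sqrt{k(z,z)}\le Kr$ for every $f\in\mathcal{H}_r$, so the class $\mathcal{F}_1:=\{e^{f(\cdot)}/q(\cdot)\,:\,f\in\mathcal{H}_r\}$ is uniformly bounded by $B_1:=e^{Kr}/\min_z q(z)$, while $\mathcal{F}_2:=\{\sigma(u(\cdot)-b)\,:\,u\in\mathcal{H}_U,\,|b|\le U\}$ is bounded by $B_2:=1$ because $0\le\sigma\le 1$.

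Next, I would invoke the standard product inequality $\mathfrak{R}_\ell(\mathcal{F}_1\cdot\mathcal{F}_2)\lesssim B_1\,\mathfrak{R}_\ell(\mathcal{F}_2)+B_2\,\mathfrak{R}_\ell(\mathcal{F}_1)$, which follows by writing $fg-f_0g_0=(f-f_0)g+f_0(g-g_0)$ and symmetrizing each term. To bound each factor, I apply Talagrand's contraction inequality. For $\mathcal{F}_1$, the map $x\mapsto e^x/q(z_i)$ is Lipschitz with constant at most $e^{Kr}/\min_z q(z)$ on the interval $[-Kr,Kr]$ that contains every $f(z_i)$; after subtracting the constants $1/q(z_i)$ (which does not affect the Rademacher complexity since $\mathbb{E}[\sum_i\epsilon_i c_i]=0$), contraction yields $\mathfrak{R}_\ell(\mathcal{F}_1)\le (e^{Kr}/\min_z q(z))\,\mathfrak{R}_\ell(\mathcal{H}_r)$. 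For $\mathcal{F}_2$, the sigmoid is $1$-Lipschitz and $\sigma(\cdot)-1/2$ vanishes at $0$, so $\mathfrak{R}_\ell(\mathcal{F}_2)\le \mathfrak{R}_\ell(\mathcal{H}_U\oplus[-U,U])$.

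Combining, $\mathfrak{R}_\ell(\mathcal{F}_1\mathcal{F}_2)\le (e^{Kr}/\min_z q(z))\{\mathfrak{R}_\ell(\mathcal{H}_U\oplus[-U,U])+\mathfrak{R}_\ell(\mathcal{H}_r)\}$, which is the first claimed inequality; the second assertion follows directly from the contraction step applied to $\mathcal{F}_1$ alone. The concrete $O((r+U)/\sqrt{\ell})$ estimate comes from the reproducing-kernel calculation $\mathfrak{R}_\ell(\mathcal{H}_r)=\frac{r}{\ell}\mathbb{E}\|\sum_i\epsilon_i k(z_i,\cdot)\|\le r\sqrt{\mathbb{E}\sum_i k(z_i,z_i)}/\ell\le rK/\sqrt{\ell}$ together with the analogous bound $\mathfrak{R}_\ell(\mathcal{H}_U\oplus[-U,U])\lesssim U/\sqrt{\ell}$ (splitting the direct sum into an RKHS-ball term $UK/\sqrt{\ell}$ and a one-dimensional bias term $U/\sqrt{\ell}$).

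The main obstacle is essentially careful bookkeeping around Talagrand's contraction: its standard form requires $\phi(0)=0$, so one must first absorb the offsets (the $1/q(z_i)$ shift for the exponential and the $1/2$ shift for the sigmoid) without changing the Rademacher average, and one must also justify that a coordinate-varying Lipschitz map $\phi_i(x)=e^x/q(z_i)$ is admissible. Both points are handled by the standard Ledoux-Talagrand formulation, which only requires a common Lipschitz bound over the classes in question, so no genuinely new idea beyond routine concentration-theory manipulations is needed.
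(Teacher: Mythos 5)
Your proposal is correct and follows essentially the same route as the paper: the paper's proof is precisely the coordinate-wise peeling argument (via the decomposition $e^{f_1}\sigma_1-e^{f_2}\sigma_2=e^{f_1}(\sigma_1-\sigma_2)+(e^{f_1}-e^{f_2})\sigma_2$ with the signs re-absorbed into the Rademacher variables) that underlies both your product inequality and your two contraction steps. One small caution: you must use the cross-term form $B_1\mathfrak{R}_\ell(\mathcal{F}_2)+B_2\mathfrak{R}_\ell(\mathcal{F}_1)$ as you stated (proved by peeling), not the polarization-based product rule $(B_1+B_2)(\mathfrak{R}_\ell(\mathcal{F}_1)+\mathfrak{R}_\ell(\mathcal{F}_2))$, since the latter would multiply $\mathfrak{R}_\ell(\mathcal{F}_1)\le \frac{e^{Kr}}{\min_z q(z)}\mathfrak{R}_\ell(\mathcal{H}_r)$ by another factor of $e^{Kr}$ and degrade the bound to $e^{2Kr}$.
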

\begin{proof}
 We compute the Rademacher complexity $ \mathfrak{R}_{Z}(\mathcal{F})$ of the following function set, 
$\mathcal{F}:=\{e^{f(z)}\sigma(u(z)-b)/q(z)\,:\,f\in\mathcal{H}_r, u\in\mathcal{H}_U, \abs{b}\leq{U}\}$. 
The proof is given by a minor modification of the proof in~\cite{mohri18:_found_machin_learn}. 
For completeness, we show the proof below. 
In the below, $\gamma_1,\ldots,\gamma_\ell$ are i.i.d. Rademacher random variables. 
\begin{align*}
 &\phantom{=}\mathfrak{R}_{Z}(\mathcal{F})\\
& =
\frac{1}{\ell}\mathbb{E}
\bigg[ 
\mathbb{E}_{\gamma_{\ell}}\bigg[
 \sup_{f,u,b}\underbrace{\sum_{i=1}^{\ell-1}\gamma_i \frac{e^{f(z_i)}\sigma(u(z_i)-b)}{q(z_i)} }_{V_{\ell-1}(f,u,b)}
 +\gamma_\ell \frac{e^{f(z_\ell)}\sigma(u(z_\ell)-b)}{q(z_\ell)} 
 \bigg]\bigg]\\
& =
\frac{1}{\ell}\mathbb{E}
\bigg[ 
  \frac{1}{2}\bigg(V_{\ell-1}(f_{\ell,1},u_{\ell,1},b_{\ell,1}) 
 +\frac{e^{f_{\ell,1}(z_\ell)}\sigma(u_{\ell,1}(z_\ell)-b_{\ell,1})}{q(z_\ell)} \bigg)\\
& \phantom{=}\quad 
 + \frac{1}{2}\bigg(V_{\ell-1}(f_{\ell,2},u_{\ell,2},b_{\ell,2}) -\frac{e^{f_{\ell,2}(z_\ell)}\sigma(u_{\ell,2}(z_\ell)-b_{\ell,2})}{q(z_\ell)} \bigg)
\bigg]\\
& =
\frac{1}{\ell}\mathbb{E}
 \bigg[ 
 \frac{1}{2}\bigg(
 V_{\ell-1}(f_{\ell,1},u_{\ell,1},b_{\ell,1}) + V_{\ell-1}(f_{\ell,2},u_{\ell,2},b_{\ell,2})\\
&\quad 
 +
 \frac{e^{f_{\ell,1}(z_\ell)}\sigma(u_{\ell,1}(z_\ell)-b_{\ell,1})-e^{f_{\ell,1}(z_\ell)}\sigma(u_{\ell,2}(z_\ell)-b_{\ell,2})}{q(z_\ell)}\\
&\quad 
 +
 \frac{e^{f_{\ell,1}(z_\ell)}\sigma(u_{\ell,2}(z_\ell)-b_{\ell,2})-e^{f_{\ell,2}(z_\ell)}\sigma(u_{\ell,2}(z_\ell)-b_{\ell,2})}{q(z_\ell)}
 \bigg)\bigg]. 
\end{align*}
In the above, $f_{\ell,i}, u_{\ell,i}, b_{\ell,i},\, i=1,2$, are functions and bias terms that attain the supremum. 
We use the notations, $\ell_{1,\ell}:=\sign(u_{\ell,1}(z_\ell)-b_{\ell,1}-u_{\ell,2}(z_\ell)+b_{\ell,2})$ and 
$\ell_{2,\ell}:=\sign(f_{\ell,1}(z_\ell)-f_{\ell,2}(z_\ell))$. 
Then, the Lipschitz continuity ensures that the above equation is bounded above by 
\begin{align*}
&
\frac{1}{\ell}\mathbb{E}
 \bigg[ 
 \frac{1}{2}\bigg(
 V_{\ell-1}(f_{\ell,1},u_{\ell,1},b_{\ell,1}) + V_{\ell-1}(f_{\ell,2},u_{\ell,2},b_{\ell,2})\\
&\quad 
 +
 \frac{e^{K r}\ \ell_{1,\ell}(u_{\ell,1}(z_\ell)-u_{\ell,2}(z_\ell)-(b_{\ell,1}-b_{\ell,2}))}{\min_z q(z)}
 +
 \frac{e^{K r}\ \ell_{2,\ell} (f_{\ell,1}(z_\ell)-f_{\ell,2}(z_\ell))}{\min_z q(z)}
 \bigg)
\bigg]\\
& =
\frac{1}{\ell}\mathbb{E}
 \bigg[ 
 \frac{1}{2}\bigg( V_{\ell-1}(f_{\ell,1},u_{\ell,1},b_{\ell,1})  
+ \frac{e^{K r}}{\min_z q(z)}(\ell_{1,\ell} (u_{\ell,1}(z_\ell)-b_{\ell,1})+\ell_{2,\ell} f_{\ell,1}(z_\ell))\bigg)\\
&\qquad 
 + \frac{1}{2}\bigg( V_{\ell-1}(f_{\ell,2},u_{\ell,2},b_{\ell,2}) 
 - \frac{e^{K r}}{\min_z q(z)}(\ell_{1,\ell} (u_{\ell,2}(z_\ell)-b_{\ell,2})+\ell_{2,\ell} f_{\ell,2}(z_\ell))\bigg)\bigg]\\
& \leq 
\frac{1}{\ell}\mathbb{E}
 \bigg[ 
 \sup_{f,u,b}\frac{1}{2}\bigg( V_{\ell-1}(f,u,b)+\frac{e^{K r}}{\min_z q(z)}(\ell_{1,\ell} (u(z_\ell)-b)+\ell_{2,\ell} f(z_\ell))\bigg)\\
&\qquad 
 +
 \sup_{f,u,b}\frac{1}{2}\bigg( V_{\ell-1}(f,u,b)-\frac{e^{K r}}{\min_z q(z)}(\ell_{1,\ell} (u(z_\ell)-b)+\ell_{2,\ell} f(z_\ell))\bigg)\bigg]\\
& \leq 
\frac{1}{\ell}\mathbb{E}
 \bigg[ 
 \mathbb{E}_{\gamma_{\ell}} \bigg[ 
\sup_{f,u,b}\bigg( 
 V_{\ell-1}(f,u,b)+\frac{e^{K r}}{\min_z q(z)} \gamma_\ell((u(z_\ell)-b)+\ell_{1,\ell}\ell_{2,\ell}f(z_\ell))\bigg) 
 \bigg] \bigg]. 
\end{align*}
Repeating the same argument to the other terms in $V_{\ell-1}(f,u,b)$, we have
\begin{align*}
\mathfrak{R}_{Z}(\mathcal{F})
&\leq 
\frac{1}{\ell}\mathbb{E}
 \bigg[  \sup_{f,u,b}\sum_{i=1}^{\ell} \frac{e^{K r}}{\min_z q(z)} \gamma_i((u(z_i)-b)+\ell_{1,i}\ell_{2,i}f(z_i)) \bigg]\\
& \leq 
 \frac{1}{\ell}\mathbb{E}
 \bigg[  \sup_{u,b}\sum_{i=1}^{\ell} \frac{e^{K r}}{\min_z q(z)} \gamma_i(u(z_i)-b)\bigg]
+ \frac{1}{\ell}\mathbb{E}
 \bigg[  \sup_{f}\sum_{i=1}^{\ell} \frac{e^{K r}}{\min_z q(z)} \gamma_i f(z_i)\bigg] \\
&=
 \frac{e^{K r}}{\min_z q(z)}\mathfrak{R}_\ell(\mathcal{H}_U\oplus[-U,U])
+ \frac{e^{K r}}{\min_z q(z)}\mathfrak{R}_\ell(\mathcal{H}_r). 
\end{align*}
The second inequality of the above equations is obtained by the definition of the supremum. 
Note that the replacement of $\gamma_i\ell_{1,i}\ell_{2,i}$ with $\gamma_i$ does not change the expectation. 
Then, the first inequality of the lemma is obtained. 
The second inequality of the lemma is obtained from the standard Talagrand's lemma. 
\end{proof}

\begin{proof}
 [Proof of Theorem~\ref{theorem:error_rate-approximate-STV-learning}]
For the numerator and denominator of $\bar{\sigma}_\ell$ in \eqref{eqn:bar-sigma}, 
the uniform law of large numbers~\citep{mohri18:_found_machin_learn} with Lemma~\ref{thm:sampling-based_approximation} 
leads to the following uniform error bound, 
\begin{align*}
 &\sup_{\|f\|\leq{r},\|u\|\leq U,\abs{b}\leq U}\bigg\lvert\int e^{f(Z)}\sigma_{Z}\rmd\mu
 -\frac{1}{\ell}\sum_{j=1}^{\ell} \frac{e^{f(Z_j)}}{q(Z_j)}\sigma_{Z_j}\bigg\rvert  \\
 &\lesssim
 \frac{1}{\inf_z q(z)}
 \frac{2e^{Kr}(r+U)}{\sqrt{\ell}} + \frac{e^{Kr}}{\inf_z q(z)}\sqrt{\frac{\log(1/\delta)}{\ell}},\\
& \sup_{\|f\|\leq{r}}\bigg\lvert A(f)-\frac{1}{\ell}\sum_{j=1}^{\ell} \frac{e^{f(Z_j)}}{q(Z_j)}\bigg\lvert\\
 &\lesssim
 \frac{1}{\inf_z q(z)}
 \frac{2e^{Kr}r}{\sqrt{\ell}}+\frac{e^{Kr}}{\inf_z q(z)}\sqrt{\frac{\log(1/\delta)}{\ell}}, 
\end{align*}
where $\sigma_Z=\sigma(u(X)-b)$. 
By regarding $\inf_z q(z)$ as a constant, we have 
\begin{align*}
 \sup_{f\in\mathcal{H}_r,u\in\widetilde{\mathcal{H}}_U,\abs{b}\leq{U}}
 \abs{\mathbb{E}_{X\sim P_{f}}[\sigma(u(X)-b)]- \bar{\sigma}_\ell}
 \lesssim 
 \frac{e^{Kr}(r+U)}{\sqrt{\ell}}=:\xi 
\end{align*}
with high probability. Therefore, the estimation error of $\bar{\sigma}_\ell$ is 
\begin{align*}
 \sup_{f\in\mathcal{H}_r}
 \abs{\mathrm{STV}_{\widetilde{\mathcal{H}}_U,\sigma}(P_f,P_n)-\mathrm{STV}_{\widetilde{\mathcal{H}}_U,\sigma}(\widehat{P}_f,P_n)}
 & \leq
 \sup_{f\in\mathcal{H}_r}\mathrm{STV}_{\widetilde{\mathcal{H}}_U,\sigma}(P_f,\widehat{P}_f) \\
&\lesssim 
\xi. 
\end{align*}
As a result, we obtain
\begin{align*}
\mathrm{STV}_{\widetilde{\mathcal{H}}_U,\sigma}(P_{f_0}, P_{\widetilde{f}_r})
 &\leq
 \mathrm{STV}_{\widetilde{\mathcal{H}}_U,\sigma}(P_{f_0}, P_n)
 +
 \mathrm{STV}_{\widetilde{\mathcal{H}}_U,\sigma}(P_n, P_{\widetilde{f}_r})\\
& \lesssim 
 \mathrm{STV}_{\widetilde{\mathcal{H}}_U,\sigma}(P_{f_0}, P_n)
 +
 \mathrm{STV}_{\widetilde{\mathcal{H}}_U,\sigma}(P_n, \widehat{P}_{\widetilde{f}_r}) 
 + \xi\\
& \leq 
 \mathrm{STV}_{\widetilde{\mathcal{H}}_U,\sigma}(P_{f_0}, P_n)
 +
 \mathrm{STV}_{\widetilde{\mathcal{H}}_U,\sigma}(P_n, \widehat{P}_{\widehat{f}_r}) 
 + \xi\\
& \lesssim 
 \mathrm{STV}_{\widetilde{\mathcal{H}}_U,\sigma}(P_{f_0}, P_n)
 +
 \mathrm{STV}_{\widetilde{\mathcal{H}}_U,\sigma}(P_n, P_{\widehat{f}_r})
 + \xi\\
& \leq 
 2\mathrm{STV}_{\widetilde{\mathcal{H}}_U,\sigma}(P_{f_0}, P_n) + \xi\\
&
\lesssim
 \varepsilon +  \frac{C_{\mathcal{H}_U}}{\sqrt{n}} + \sqrt{\frac{\log(1/\delta)}{n}} + \xi. 
\end{align*}
Thus, we have
\begin{align*}
  \mathrm{TV}(P_{f_0}, P_{\widetilde{f}_r})
&=
 \left(\mathrm{TV}(P_{f_0}, P_{\widetilde{f}_r})
 -\mathrm{STV}_{\widetilde{\mathcal{H}}_U,\sigma}(P_{f_0}, P_{\widetilde{f}_r})
\right)
 +
 \mathrm{STV}_{\widetilde{\mathcal{H}}_U,\sigma}(P_{f_0}, P_{\widetilde{f}_r}) \\
&\lesssim
 \frac{r}{U} + \varepsilon +  \frac{C_{\mathcal{H}_U}}{\sqrt{n}} + \frac{e^{Kr}(r+U)}{\sqrt{\ell}} +
 \sqrt{\frac{\log(1/\delta)}{n}} + e^{Kr}\sqrt{\frac{\log(1/\delta)}{\ell}}
\end{align*}
with probability greater than $1-\delta$. 
\end{proof}




\end{appendices}





\end{document}